%
\documentclass[11pt,journal,onecolumn,compsoc]{IEEEtran}
%


%

%
\ifCLASSOPTIONcompsoc
  \usepackage[nocompress]{cite}
\else
\fi
%

%
\ifCLASSINFOpdf
   \usepackage[pdftex,final]{graphicx}
   \DeclareGraphicsExtensions{.pdf,.jpeg,.png}
\else
   \usepackage[dvips,final]{graphicx}
   \DeclareGraphicsExtensions{.eps}
\fi
%
%

%
\usepackage{amssymb}
\usepackage[cmex10]{amsmath}
%
\interdisplaylinepenalty=2500

%

%
\usepackage{array}

\ifCLASSOPTIONcompsoc
  \usepackage[font=normalsize,labelfont=sf,textfont=sf]{subfig}
\else
  \usepackage[font=footnotesize]{subfig}
\fi

\usepackage[utf8]{inputenc}

\usepackage{tabularx}
\usepackage{amsfonts}
\usepackage{amsthm}
\usepackage{mathtools}
\usepackage{commath}
\usepackage{enumerate}
\usepackage[usenames]{color}
\definecolor{mygreen}{rgb}{0,0.9,0.1}
\definecolor{myblack}{rgb}{0.2,0.2,0.2}
\definecolor{myred}{rgb}{0.9,0,0.1}
\definecolor{myblue}{rgb}{0,0,0.9}

\usepackage[
 hyperindex=true,%
  bookmarksopen=true,%
  bookmarksnumbered=true,%
 pagebackref=true,%
 colorlinks=true,%
 linkcolor=myred,%
 urlcolor=myred,%
 citecolor=green,%
 anchorcolor=myblue
 ]{hyperref}

\renewcommand*{\refeq}[1]{%
  \begingroup
    \hypersetup{
      linkcolor=cyan,
    }%
    \ref{#1}%
  \endgroup
}

\newtheorem{mylem}{Lemma}[section]
\newtheorem{mythm}{Theorem}[section]

\newtheorem{myprop}{Proposition}[section]

\newtheorem{mynote}{Note}[section]

\theoremstyle{definition}
\newtheorem{mydef}{Definition}[section]
\theoremstyle{definition}
\newtheorem{myexamp}{Example}[section]
\theoremstyle{definition}
\newtheorem{myalg}{Algorithm}[section]

\newcommand{\pga}{W}
\newcommand{\pgb}{b}

\newcommand{\dd}{d}
\newcommand{\vz}{z}
\newcommand{\vx}{x}
\newcommand{\vu}{\mathbf{u}}
\newcommand{\vl}{\Lambda}

\newcommand{\vq}{q}
\newcommand{\vp}{p}
\newcommand{\mK}{M}
\newcommand{\mA}{A}
\newcommand{\mP}{P}
\newcommand{\sLmat}{\mathbb{D}^{N}_{++}}
\newcommand{\sY}{\mathcal{Y}}
\newcommand{\sX}{\mathcal{X}}
\newcommand{\sL}{\mathcal{L}}

\newcommand{\sH}{\mathcal{H}}
\newcommand{\sU}{\mathcal{U}}
\newcommand{\sV}{\mathcal{V}}
\newcommand{\extR}{\overline{\mathbb{R}}}

\newcommand{\nm}{\mathbf{n}}

\newcommand*\rot[1]{\rotatebox{90}{#1}}

\newcommand{\ssB}[1]{\mathcal{B}({#1})}
\renewcommand{\vec}[1]{\mathbf{{#1}}}

\DeclareMathOperator{\proj}{P}
\DeclareMathOperator{\tr}{Tr}
\DeclareMathOperator{\tgv}{TGV}
\DeclareMathOperator{\tv}{TV}

\DeclareMathOperator{\ri}{ri}
\DeclareMathOperator{\zer}{zer}
\DeclareMathOperator{\diag}{diag}

\DeclareMathOperator{\diverg}{div}
\DeclareMathOperator{\sym}{Sym}
\DeclareMathOperator{\aff}{aff}

\DeclareMathOperator{\cl}{cl}

\newcommand{\inner}[2]{\langle {#1},{#2} \rangle}
\newcommand{\cG}[1]{\Gamma_0({#1})}

\begin{document}


\title{Confidence driven TGV fusion}

\author{Valsamis Ntouskos  \qquad
\quad Fiora Pirri \\ {\small \{ntouskos,pirri\}@diag.uniroma1.it } \\[3pt] 
{\small ALCOR Vision, Perception and Learning Robotics Lab}\\
{\small Department of Computer, Control and Management Engineering}\\
{\small University of Rome ``La Sapienza''} }

\IEEEcompsoctitleabstractindextext{%
\begin{abstract}
We introduce a novel model for spatially varying variational data fusion, driven by point-wise confidence values. The proposed model allows for the joint estimation of the data and the confidence values based on the spatial coherence of the data. We discuss the main properties of the introduced model as well as suitable algorithms for estimating the solution of the corresponding biconvex minimization problem and their convergence. 
The performance of the proposed model is evaluated considering the problem of depth image fusion by using both synthetic and real data from publicly available datasets.
\end{abstract}

\begin{keywords}
image fusion, total variation regularization, denoising
\end{keywords}}

\maketitle

\IEEEdisplaynotcompsoctitleabstractindextext

\IEEEpeerreviewmaketitle

\graphicspath{{./figures/}}
\newcommand{\gwidth}{0.48\textwidth}

\section{Introduction}
Variational methods have gained a large popularity advantage over other methods when dealing with  ill-posed problems in computational vision. The reason is that they have shown good computational properties and high flexibility in large scale regularization problems, typically those arising in computational image processing applications, as for example image denoising, inpainting and super-resolution. In these contexts  the original problem is transformed into an energy minimization problem, by introducing a suitable energy functional, which favors some desired characteristics of the optimal solution. More specifically, given a domain  $\sU$, which is a Banach space, and the extended real line $\extR\coloneqq\mathbb{R}\cup\{-\infty,+\infty\}$, the energy minimization problem is driven by an energy functional $E:\sU \rightarrow \extR$ of the following general form:
\begin{equation}\label{eq:basicmodel}
E(u) = F(\mK u) + H(u;d,\lambda).
\end{equation}
Functional $H$ enforces fidelity to the given data $d$, namely the observations. On the other hand, functional $F$ acts as a regularization on a linear transformation of $u$, specified by the linear operator $\mK$, which usually represents a differential operator. 
In case both $F$ and $H$ are convex, lower-semicontinuous functions, efficient algorithms for the minimization of $E$ have been proposed, even when $F$ and possibly also $H$ are not differentiable everywhere. First-order proximal splitting algorithms are amongst the most relevant. A well-known method belonging to this class of algorithms is the primal-dual hybrid gradient method (PDHG) \cite{Chambolle2010,Condat2013,Zhu2008}.

The parameter $\lambda$ in (\refeq{eq:basicmodel})  balances the relative importance of the two terms $F$ and $H$, and it is usually   assigned {\em a-priori} and  applied uniformly on the effective domain of $E$. We consider here  $\lambda$ as a multiplicative parameter applied to the fidelity term $H$. 

When $\lambda$ is applied as a multiplicative parameter in $H$, the effect of the parameter is to act as confidence value of the data fidelity term, which is crucial when data come in a multiplicity, and varying in space.   Actually,  a spatially varying regularization parameter $\lambda$ has been examined in the past as for example  in \cite{Strong2003} for the well-known ROF model \cite{Rudin1992}. However the idea of introducing a spatial prior on the fidelity term to asses confidence on the data accuracy is new, to our knowledge.

Given this background, the main contribution of this work is  a new model, which extends  (\refeq{eq:basicmodel})   to govern the fusion of multiple data observations, with occurring spatial overlaps.  The fusion problem amounts to integrating redundant and complementary information from several data sources, each bringing different degree of accuracy, which can highly vary especially in the case the source data are depth images. A key aspect of the proposed model is to generalize the energy minimization problem to  jointly accommodate  estimation of the data and their confidence values, in the following form:
\begin{equation}\label{eq:genmodelnew}
E(u) = F(\mK u) + H(u,\lambda;d) + G(\lambda).
\end{equation}
This model  induces spatially adaptive regularization effects,  letting the coherence of the available data guide the regularization process.
The corresponding minimization problem is no longer convex, though we show that it is biconvex if $G$ is a convex, lower-semi-continuous functional. On this basis, 
we extend biconvex optimization algorithms for dealing with non-smooth functionals and examine their convergence. 
In summary,  this work contributes to the data fusion problem with a new model which we present in its discrete version so as to focus on the algorithms and the  experiments on different datasets, showing the performance of the model. 

Furthermore we present the algorithms Alternative Convex Search (ACS) and Alternate Minimization  (AMA), adapted to our model, showing that they converge for the biconvex joint estimation problem, and settle the conditions that have to be satisfied to guarantee convergence. 

We consider also the PDHG method for our model, contributing with a convergence analysis  for the case of {\em a-priori} assigned spatially varying confidence values, and provide suitable bounds for the PDHG step parameters. Moreover, we extend the analysis of the PDHG algorithm for the joint estimation problem and discuss its convergence.

The remaining of the work is organized as follows. Section~\ref{sec:related} discusses related work, Section~\ref{sec:prelim} introduces preliminary concepts and definitions. Section~\ref{sec:model}  introduces the confidence driven data fusion  model and its properties. In Section~\ref{sec:algs} we examine the convergence of the alternate minimization (AMA) and the alternate convex search (ACS) algorithms. We also discuss convergence of the PDHG algorithm for spatially varying confidence values. In Section~\ref{sec:results} we examine numerical results and  the performance of the model with respect to state of the art methods for the problem of depth image fusion on real and synthetic data. Finally, in Section~\ref{sec:conc} we provide some conclusions and future work directions.

\section{Related Work}\label{sec:related}

The idea of spatially altering the effects of  regularization, 
to the best of our knowledge, has been first introduced by Strong and Chan  \cite{Strong2003} who provided  analytical solutions for the minimizers of specific classes of signals.  They also considered spatially varying regularization parameters, to locally control the image scale space.
Calvetti and Sommersalo \cite{Calvetti2008}  use a weighting scheme based on the statistics of the edges in natural images, proposing the gamma and the inverse gamma distributions as hyper-priors of the regularization term. Their  Bayesian regularization model  includes  the Perona-Malik \cite{Perona1990} and ROF \cite{Rudin1992} models as special cases. 

We recall that Total Variation (TV) for image denoising has been introduced by Rudin, Osher and Fatemi (ROF) in \cite{Rudin1992}.
Several generalizations of total variation regularization have been proposed   to allow for exact reconstruction of higher-order piece-wise polynomial signals, e.g. piece-wise affine or quadratic signals. Some well known such generalizations are the Infimal Convolution Total Variation (ICTV)  proposed by Chambolle \cite{Chambolle1997} and Total Generalized Variation (TGV), introduced by Bredies and colleagues \cite{Bredies2010}.  
We consider the latter, which further generalizes ICTV. See \cite{Mueller2013} for further details and comparisons between the ICTV and TGV methods.

Going back to spatially varying  regularization effects,  Newcombe and colleagues \cite{Newcombe2011b}  apply  weighting parameters in order to ensure lower regularization near image edges, so as to enforce sharp edges of the computed depth image. In a similar way, \cite{Kuschk2013} proposes anisotropic regularization by considering the Nahel-Enkelmann operator applied to the regularization term. 
In  the mentioned works spatially varying weighting is applied to the regularization term.
Under this respect, the model we propose shows some important novelties. First of all, the weighting scheme is applied to the fidelity term. This brings  a new interpretation for the data fusion problem, in which the different contributions of the data sources are gauged by a map of confidence values. More importantly, the proposed model estimates these confidence values directly from the available data, by solving a biconvex minimization problem. Additionally, the model resorts to a fidelity term based on the $L_1$ norm, which is quite  robust to outliers.

As a result, the proposed method combines the advantages of $L_1$ regularization, namely robustness against impulsive noise, and contrast invariance, which corresponds to purely geometrical effects in the scale space, with the ability to locally control the image scale space, by  varying confidence values at each image region. As will be shown in the following,  the model entails a biconvex minimization problem, which poses some challenges in finding the optimal solution, with respect to convex TGV models. 

As a matter of fact, many interesting problems in image processing can be better modeled with non-convex regularization models. Recently a number of non-convex models have been proposed in order to attack the problems of image inpainting \cite{Esser2014}, depth smoothing \cite{Ochs2013}, and TV regularization on manifolds \cite{Lellmann2013,Ntouskos-2015ICCV,Natola-2016CVPR}. Algorithms for optimizing non-convex functionals have been recently proposed focusing on distinctive properties of the terms involved, we recall here some of them.

The {\em Alternating minimization} methods  
transform a constrained optimization problem into an unconstrained optimization one, by adding a quadratic penalization on the constraint violation. Typically the weight on the penalization term increases as the iterations proceed. 
Examples for this class of algorithms can be found in \cite{Nikolova2010}, and convergence properties  are discussed in \cite{Attouch2010}.

{\em Splitting}  methods are used when the problem can be separated in a  smooth non-convex term and a possibly non-smooth part. A recently proposed forward-backward splitting method for dealing with this class of problems, called {\it iPiano}, was introduced in \cite{Ochs2014}.  

{\em Semi convex regularization} is considered 
when the nonconvex term 
can be made convex, for example by adding an additional $L_2$ norm (see Section~\ref{sec:prelim}), \cite{Artina2013} proposed a method based on the augmented Lagrangian and proved that it converges to critical points. More recently \cite{Mollenhoff2015} proposed a PDHG method for problems with a semiconvex regularization term. The authors prove convergence of the algorithm to critical points when the convexity of the fidelity term compensates the nonconvexity of the regularization term, and they show various examples where the algorithm converges, even when this assumption is violated, indicating the (possibly local) robustness of the PDHG methods applied to nonconvex problems.
Finally,  Valkonen in \cite{Valkonen2014}   provides a proof of local convergence of the PDHG method in the case of {\em non-linear regularization operators} (NL-PDHG), when the non-linear operator satisfies certain smoothness assumptions and the operator of the update steps satisfies the Aubin property \cite{Aubin2009}.

The problem we propose touches, in some sense, all the above mentioned ones. Indeed, we discuss two algorithms for solving the biconvex optimization problem which gives the optimal solution of our model. First, we consider the alternate convex search algotiyhm \cite{Gorski2007} and then we examine the application of alternate minimization methods \cite{Attouch2010}, discussing their convergence to critical points. We consider also the application of the PDHG algorithm on biconvex optimization problems and discuss its convergence. 
 
As an application domain  we consider the problem of variational fusion of depth images, which is recognized to be a crucial aspect in  many surface reconstruction approaches. Campbell {\em et al.} \cite{Campbell2008} employ a Markov Random Field to find a solution for  multiple depth hypotheses. Merrell {\em et al.} in \cite{Merrell2007}  adopted a depth image fusion scheme, based on visibility, considering appropriate confidence measures to asses the stability of each depth estimate. In \cite{Hane2012} the authors use a reduced dictionary of depth patches to regularize and fuse depth images of mostly planar structures. 

Total generalized variation models for the fusion of depth images has been introduced in \cite{Pock2011}. 
In \cite{Ferstl2013}, the authors fuse low-resolution high-fidelity depth images, from Time-of-Flight sensors, with high-resolution and low-fidelity depth images, generated from stereo matching, using a primal-dual optimization algorithm on a model based on anisotropic diffusion. As mentioned above, in \cite{Ochs2013} the authors consider non-convex regularizers and propose an iterative algorithm for the optimization of the corresponding problems, evaluating their method with a number of image processing applications, including depth image fusion.

In a different line of work, \cite{Zach2007} proposes a volumetric fusion of the depth images based on Total Variation, to regularize the resulting signed distance function (SDF). In \cite{Fuhrmann2011} the authors propose a hierarchical SDF, which allows the fusion of depth images with very different scales. \cite{Newcombe2011a} proposes a method to both estimate  the pose of the RGB-D camera and to integrate new depth images with the reconstructed 3D model. Fusion is performed by taking the weighted average of individual truncated SDFs. Recently, \cite{Ummenhofer2015} has proposed a surface reconstruction approach from depth images by globally optimizing a signed distance function, defined on an octree grid, which scales very well with the number of input data.

Finally, we mention that image fusion is also treated in other application domains, like medical \cite{James2014} and hyper-spectral imaging \cite{Lanaras2015}, which we do not treat in this work.

\section{Preliminaries}\label{sec:prelim}
We provide here some definitions which the reader might find useful. If not otherwise stated, $\sU$ denotes a Banach space and $\sH$ a Hilbert space. Additionally, 
$\mathbb{S}_{++}^{N}$ denotes the space of $N\times N$ symmetric positive definite matrices, and $\mathbb{D}_{++}^{N}$ the space of $N\times N$ diagonal positive definite matrices. Although our analysis will mainly focus on Hilbert spaces, the following definitions are provided in a more general form considering Banach spaces.

\begin{mydef}[Affine set]
We recall that a set $X\subseteq\sU$ is affine if it contains all the linear combinations of pairs of points $x,y\in M$. 
\end{mydef}

\begin{mydef}[Affine hull]
The affine hull of $X\subseteq\sU$, denoted as $\aff X$ is the intersection of all affine sets that contain $X$.
\end{mydef}

\begin{mydef}[Relative Interior]
Let $C$ be a non-empty convex set. A point $x\in C$ belongs to the relative interior of $C$, namely $x\in \ri C$, if there exists an open sphere $S$ centered at $x$, such $S\cap\aff C\subseteq C$.
\end{mydef}

\begin{mydef}[Convex set]
A set $C\subseteq \sU$ is convex, if
\begin{equation*}
\gamma u + (1 - \gamma)v \in C \quad \forall\,u,v\in C,\, \forall\gamma\in [0,1].
\end{equation*}
\end{mydef}

\begin{mydef}[Proper functional] A functional $F:\sU\mapsto\extR $ is called proper if $F(u) \neq -\infty$ for all $u \in \sU$ and there exists at least one $u \in \sU$ with $F(u) \neq +\infty$. 
\end{mydef}

\begin{mydef}[Semicontinuity]
A functional $F:\sU \mapsto \extR$ is {\em lower semicontinuous} if
\begin{equation*}
\underset{v\rightarrow u}{\lim\inf}\, F(v)\geq F(u), \quad \forall u \in \sU  .
\end{equation*}
$F$ is upper semicontinuous if $-F$ is lower semicontinuous. $F$ is continuous at $u$ if and only if it is both upper and lower semicontinuous at $u$.
\end{mydef}

\begin{mydef}[Convex functional]
Let $C \subseteq \sU$ be a convex set. Then a function
$F: C \mapsto \extR$ is convex, if for all $u,v\in C$ and for all $\gamma\in [0,1]$ it holds
\begin{IEEEeqnarray}{rCl}
F(\gamma u + (1 - \gamma)v) &\leq & \gamma F(u) + (1 - \gamma)F(v).
\end{IEEEeqnarray}
$F$ is called strictly convex if this inequality holds strictly except for $u = v$ or $\gamma \in \{0, 1\}$.
\end{mydef}

\begin{mydef}[Semiconvexity \cite{Artina2013}]
A lower semi-continuous functional $F: \sU \mapsto \extR$ is called {\em $\omega$-semiconvex} if $F+\frac{\omega}{2}\norm{\cdot}^2$ is convex.\\
\end{mydef}

\begin{mydef}[Strong convexity \cite{Artina2013}]
A lower semicontinuous functional $F: \sU \mapsto \extR$ is called {\em c-strongly convex} if for all $u_1,u_2\in \sU$, $q_1\in\partial F(u_1)$, $q_2 \in \partial F(u_2)$, the following holds
\begin{equation*}
\inner{u_1-u_2}{q_1-q_2}\geq c\norm{u_1-u_2}^2.
\end{equation*}
\end{mydef}

The following propositions are useful when operations with convex functionals are involved (the corresponding proofs are provided in \cite{Bertsekas2009}).

\begin{myprop}
Let $F: X\subseteq{\sH} \mapsto \extR$ be a proper convex functional, and an operator $A\in\ssB{X}$, with $\ssB{X}$ the space of bounded linear operators from $X$ to $X$ with domain defined on $X$. 
Then the functional $G:X\mapsto\extR$ defined as
\begin{equation}
G(x) = F(Ax),\quad \forall x\in X,
\end{equation}
is convex.
\end{myprop}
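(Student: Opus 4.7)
The plan is to verify the defining inequality of convexity for $G$ directly, relying on just two ingredients: the linearity of $A$ and the convexity of $F$. Since $X$ is a vector subspace of $\sH$ (the domain on which $A$ and $F$ are assumed to act) and $A$ maps $X$ into $X$, the expression $G(x)=F(Ax)$ is well defined as an element of $\extR$ for every $x\in X$, so there is no domain issue to worry about.

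Next, I would fix arbitrary points $u,v\in X$ and a scalar $\gamma\in[0,1]$. Applying $A$ to the convex combination and using boundedness (in particular linearity) of $A$, the image decomposes as $A(\gamma u+(1-\gamma)v)=\gamma Au+(1-\gamma)Av$. Then I would invoke the convexity of $F$ at the two points $Au, Av\in X$ to obtain
\begin{equation*}
F(\gamma Au+(1-\gamma)Av)\leq \gamma F(Au)+(1-\gamma)F(Av).
\end{equation*}
Chaining these two observations yields $G(\gamma u+(1-\gamma)v)\leq \gamma G(u)+(1-\gamma)G(v)$, which is precisely the convexity inequality for $G$ on $X$.

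Honestly, there is no real obstacle here: the argument is a one-line composition result, and the only care needed is to ensure that both sides are meaningful in $\extR$ (the inequality is trivial if either $F(Au)$ or $F(Av)$ equals $+\infty$, and the properness of $F$ prevents the problematic value $-\infty$ from appearing on the right-hand side). Thus the proof reduces to the chain of equalities and inequalities above, and no further machinery is required.
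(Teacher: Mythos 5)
Your argument is correct: the paper does not spell out a proof for this proposition at all (it defers to the reference \cite{Bertsekas2009}), and your direct verification---linearity of $A$ to split the convex combination, then convexity of $F$ at $Au,Av$, with the extended-real values handled by properness---is exactly the standard argument that the cited source uses. The only implicit assumption worth noting is that $X$ must be convex (e.g.\ a subspace, as the setting $A\in\ssB{X}$ suggests) so that $\gamma u+(1-\gamma)v$ lies in $X$, which you use tacitly; with that understood, nothing is missing.
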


The previous results leads also to the next proposition.
\begin{myprop}
Let $F_{i}:X\subseteq{\sH} \mapsto \extR$, $i=1,\ldots,m$, be  proper convex functionals on $X$, and let $\gamma_1,\ldots,\gamma_m>0$. Then the functional $G:X\mapsto\extR$ defined as
\begin{equation}
G(x) = \gamma_1 F_{1}(x)+\cdots+\gamma_m F_{m}(x),\quad \forall x\in X,
\end{equation}
is convex.
\end{myprop}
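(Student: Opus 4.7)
The plan is to verify the definition of convexity for $G$ directly, by combining the convexity inequalities of the $F_i$ one index at a time. First I would fix arbitrary $x,y\in X$ and $\gamma\in[0,1]$; since each $F_i$ is convex on $X$, the set $X$ is implicitly convex, so $\gamma x+(1-\gamma)y\in X$ and every quantity appearing below is well-defined in $\extR$.

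Next I would apply convexity of each $F_i$ to obtain
\begin{equation*}
F_i(\gamma x + (1-\gamma) y)\leq \gamma F_i(x) + (1-\gamma) F_i(y),\qquad i=1,\ldots,m.
\end{equation*}
Because $\gamma_i>0$, multiplying the $i$-th inequality by $\gamma_i$ preserves its direction, and summing the resulting $m$ inequalities yields
\begin{equation*}
G(\gamma x + (1-\gamma) y)\leq \gamma G(x) + (1-\gamma) G(y),
\end{equation*}
which is exactly the convexity condition required for $G$.

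The only subtlety---and what I would flag as the main obstacle, mild as it is---is the extended real-valued codomain. The weighted sum $\sum_i \gamma_i F_i(x)$, and in particular the addition of inequalities in $\extR$, must be unambiguous. Here properness of the $F_i$ pays off: no $F_i$ ever takes the value $-\infty$, so under the standard convention $a+(+\infty)=+\infty$ for $a\in\mathbb{R}\cup\{+\infty\}$, both the definition of $G(x)$ and the termwise sum of the convexity inequalities are well-defined, and no $(+\infty)-(+\infty)$ indeterminacy can arise. With that convention recorded, the argument is essentially three lines and does not appeal to any property beyond the definitions given in the preliminaries.
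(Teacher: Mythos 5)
Your proof is correct and follows the standard direct verification (apply convexity of each $F_i$, scale by the positive weights, and sum), which is exactly the argument the paper defers to the cited reference \cite{Bertsekas2009} rather than spelling out. Your remark that properness rules out $-\infty$ values, so the sums in $\extR$ are unambiguous, is the only subtle point and you handle it correctly.
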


\begin{myprop}\label{prop:pointinf}
Let $F_{i}:X\subseteq{\sU} \mapsto \extR$ be proper convex functionals for $i\in I \subset\mathbb{N}$. Then the functional $G:X\mapsto \extR$ defined as 
\begin{equation}
G(x)=\underset{i\in I}{\inf}\,F_{i}(x),
\end{equation}
is convex.
\end{myprop}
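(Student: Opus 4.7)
The plan is to verify convexity of $G$ directly from the definition, mirroring the proofs of the two preceding propositions. Fix arbitrary $x, y \in X$ and $\gamma \in [0,1]$; the goal is to establish
\begin{equation*}
G(\gamma x + (1-\gamma) y) \;\leq\; \gamma G(x) + (1-\gamma) G(y).
\end{equation*}
First I would appeal to the convexity of each $F_i$, which yields, for every $i \in I$,
\begin{equation*}
F_i(\gamma x + (1-\gamma) y) \;\leq\; \gamma F_i(x) + (1-\gamma) F_i(y).
\end{equation*}
Taking the infimum of the left-hand side over $i$ preserves the inequality, so $G(\gamma x + (1-\gamma) y) \leq \gamma F_i(x) + (1-\gamma) F_i(y)$ for every single $i \in I$.

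The second step is to select the indices on the right-hand side that realise $G(x)$ and $G(y)$ to within an arbitrary tolerance. Since each $F_i$ is proper, $G(x), G(y) > -\infty$ unless trivially, and for $\epsilon > 0$ I can pick $i_x, i_y \in I$ with $F_{i_x}(x) \leq G(x) + \epsilon$ and $F_{i_y}(y) \leq G(y) + \epsilon$. The plan would then be to combine these choices with the per-index estimate from the first step, let $\epsilon \to 0$, and recover the convexity inequality.

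The main obstacle is precisely this combination step: the per-index inequality requires \emph{one single} index on both of its right-hand terms, while the near-infimising indices $i_x$ and $i_y$ for $x$ and $y$ may be distinct. The naive route yields only $G(\gamma x + (1-\gamma) y) \leq \inf_{i \in I}\bigl[\gamma F_i(x) + (1-\gamma) F_i(y)\bigr]$, and the infimum of a sum dominates the sum of infima in the wrong direction for our purpose. A cleaner route is through the epigraph: I would try to show that $\mathrm{epi}(G)$ equals (up to closure) $\bigcup_{i\in I} \mathrm{epi}(F_i)$ and then invoke convexity of this set, or alternatively recast the statement as a marginalisation of a jointly convex functional $\widetilde F(x,i) := F_i(x)$ over the index variable, following the corresponding result cited in \cite{Bertsekas2009}. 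Bridging this gap between a per-index convexity estimate and a joint statement over the infimum is the technical crux, and once it is handled the remaining steps are routine passage to the limit as $\epsilon \to 0$.
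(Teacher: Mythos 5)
Your instinct about where the difficulty lies is exactly right, but neither of the escape routes you sketch can close the gap, because the statement as printed is false. Take $I=\{1,2\}$, $X=\mathbb{R}$, $F_1(x)=x$ and $F_2(x)=-x$: both are proper and convex, yet $G(x)=\inf_i F_i(x)=-|x|$ is concave and not convex. This is consistent with the obstruction you identified — the inequality $G(\gamma x+(1-\gamma)y)\leq\inf_{i}\bigl[\gamma F_i(x)+(1-\gamma)F_i(y)\bigr]$ genuinely cannot be pushed down to $\gamma G(x)+(1-\gamma)G(y)$, and no limiting argument in $\epsilon$ will repair the mismatch between the near-infimising indices $i_x$ and $i_y$. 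Your two proposed fixes fail for the same underlying reason: the epigraph of $G$ is essentially $\bigcup_{i\in I}\mathrm{epi}(F_i)$, and a union of convex sets need not be convex; and the partial-minimization result in \cite{Bertsekas2009} requires $\widetilde F(x,i)$ to be jointly convex on a product of \emph{convex} sets, which is unavailable when the index ranges over a discrete subset of $\mathbb{N}$.

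For the comparison you were asked to make: the paper supplies no proof at all — it defers all three preliminary propositions to \cite{Bertsekas2009} — so there is nothing to measure your argument against. What is true, and what the paper actually uses, is the dual statement: the pointwise \emph{supremum} of convex functionals is convex, since $\mathrm{epi}(\sup_i F_i)=\bigcap_i\mathrm{epi}(F_i)$ is an intersection of convex sets. The only place Proposition~\ref{prop:pointinf} is invoked is to justify convexity of the conjugate $F^{\ast}(p)=\sup_u\{\inner{u}{p}-F(u)\}$ as a pointwise supremum of affine functions, so the ``$\inf$'' is evidently a typo for ``$\sup$''. With that correction your first step finishes the proof on its own and the index-matching problem disappears: for every fixed $i$,
\begin{equation*}
F_i(\gamma x+(1-\gamma)y)\;\leq\;\gamma F_i(x)+(1-\gamma)F_i(y)\;\leq\;\gamma\sup_{j\in I}F_j(x)+(1-\gamma)\sup_{j\in I}F_j(y),
\end{equation*}
and taking the supremum over $i$ on the left yields convexity of $\sup_i F_i$ directly.
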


\begin{mydef}[Subdifferential]
Let $\sU$ be a Banach space, $\sU^*$ its corresponding dual space, and $F: \sU \mapsto \extR$ a proper convex functional. Then the subdifferential $\partial F(u)$ at a point $u$ is defined as
\begin{IEEEeqnarray}{rCl}
\partial F(u) &\coloneqq &\{ p\in\sU^* \mid F(v)\geq F(u) + \inner{p}{v-u}\, \forall v \in \sU \}.
\end{IEEEeqnarray}
If the set $\partial F(u)$ is not empty, $F$ is called subdifferentiable at $u$. An element $p \in \partial F(u)$ is then called a subgradient of $F$ at $u$.
\end{mydef}

We now give the definition of the convex conjugate of a functional, called also Legendre-Fenchel transform, which is typically used to obtain the primal-dual form of a convex optimization problem.
\begin{mydef}[Convex conjugate]
Let $F:\sU\mapsto \extR$ be a general extended real-valued functional (not necessarily convex). Its {\em convex conjugate} $F^{\ast}:\sU^{\ast}\mapsto\extR$ is defined as
\begin{equation}
F^{\ast}(p) = \underset{u\in\sU}{\sup}\{\inner{u}{p}-F(u)\}.
\end{equation}
The convex conjugate is always convex, as it corresponds to the point-wise supremum of a collection of affine functions (see also Proposition \ref{prop:pointinf}).
The {\em double conjugate} functional is denoted by $F^{\ast\ast}$ and it is given by
\begin{equation}
F^{\ast\ast}(u) = \underset{p\in\sU^{\ast}}{\sup}\{\inner{u}{p}-F^{\ast}(p)\}.
\end{equation}
\end{mydef}
\begin{mynote}\label{note:biconjugate}
In general $F^{\ast\ast}(u)=(\check\cl) F(u)$ holds, where $\check{\cl}\,F$ denotes the convex closure of $F$. If $F$ additionally is a proper convex function then $F^{\ast\ast}(u)=F(u)$. 
\end{mynote}

\begin{mydef}[Biconvex set \cite{Gorski2007}]\label{def:biconvset}
Let $\sU,\sV$ be Banach spaces. The set $B\subseteq \sU \times \sV$ is called biconvex on $\sU \times \sV$ or biconvex for short, if $B_u\coloneqq\{v\in\sV\mid(u,v)\in B\}$ is convex for all $u \in \sU$ and $B_v\coloneqq\{u\in\sU\mid(u,v)\in B\}$ is convex for all $v\in \sV$.
\end{mydef}

\begin{mydef}[Biconvex functional \cite{Gorski2007}]
A functional $F: B \mapsto \extR$ on a biconvex set $B \subseteq \sU \times \sV$ is  biconvex, if for every fixed $u \in \sU$
\begin{subequations}
\begin{IEEEeqnarray}{rCl}
F_u(\cdot) &\coloneqq & F(u,\cdot): B_u \mapsto \extR
\end{IEEEeqnarray}
is a convex function on $B_u$ and for every fixed $v \in \sV$
\begin{IEEEeqnarray}{rCl}
F_v(\cdot) &\coloneqq & F(\cdot,v): B_v \mapsto \extR
\end{IEEEeqnarray}
\end{subequations}
is a convex function on $B_v$.
\end{mydef}

\begin{mydef}[Partial optimum]
Let $F: B\mapsto R$ be a biconvex functional. Then $(u^{\ast}, v^{\ast})\in B$ is called a {\em partial optimum} of $F$ on $B$, if for all $ u \in B_{v^{\ast}}$
\begin{subequations}
\begin{IEEEeqnarray}{rCl}
F(u^{\ast}, v^{\ast}) &\leq & F(u, v^{\ast})
\end{IEEEeqnarray}
and for all $v \in B_{u^{\ast}}$
\begin{IEEEeqnarray}{rCl}
F(u^{\ast}, v^{\ast}) &\leq & F(u^{\ast}, v).
\end{IEEEeqnarray}
\end{subequations}
\end{mydef}

The following theorem extends Theorems 4.1 and 4.2 of \cite{Gorski2007} to the case of non-smooth functions.
\begin{mythm}\label{thm:partopt}
Let $B$ be a biconvex set and let $F:B \mapsto \mathbb{R}$ be a biconvex functional. Then a point $z\coloneqq(x,y)\in \ri(B)$ is a stationary point of $F$ if and only if it is a partial minimum.
\end{mythm}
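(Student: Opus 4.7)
The plan is to characterize stationarity at $z=(x,y)\in \ri(B)$ through one-sided directional derivatives, exploiting the fact that the partial slices $F(\cdot,y)$ on $B_{y}$ and $F(x,\cdot)$ on $B_{x}$ are real-valued convex functions. Since $z \in \ri(B)$, for every direction $(h,k)$ in the affine hull of $B$ the point $(x+th, y+tk)$ lies in $B$ for all sufficiently small $t > 0$, so all one-sided directional derivatives below are well defined and finite. In this non-smooth biconvex setting, I adopt the natural notion of stationary point: $F'(z;(h,k)) \geq 0$ for every admissible direction $(h,k)$, which coincides with $0 \in \partial F(z)$ in the Fréchet sense when $F$ is locally Lipschitz.

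The technical heart of the argument is the additive decomposition
\begin{equation*}
F'(z;(h,k)) \;=\; F'_x(x,y;h) + F'_y(x,y;k).
\end{equation*}
To derive it I would split, for $t>0$ small,
\begin{equation*}
\frac{F(x+th, y+tk) - F(x,y)}{t} = \frac{F(x+th, y+tk) - F(x, y+tk)}{t} + \frac{F(x, y+tk) - F(x,y)}{t},
\end{equation*}
and pass to the limit $t\downarrow 0$. The second summand converges by definition to $F'_y(x,y;k)$. For the first summand, convexity of $F(\cdot, y+tk)$ and local Lipschitz continuity of a real-valued biconvex function on $\ri(B)$ yield uniform control of the difference quotient in a neighborhood of $(x,y)$, so that the limit equals $F'_x(x,y;h)$.

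With the decomposition established, both implications are immediate. For the forward direction, if $(x,y)$ is a partial minimum then $x$ is a global minimum of the convex function $F(\cdot,y)$ on $B_{y}$, so $F'_x(x,y;h)\geq 0$ for every admissible $h$; symmetrically $F'_y(x,y;k)\geq 0$. Summing yields $F'(z;(h,k))\geq 0$ and $z$ is stationary. For the reverse direction, plugging $k=0$ in the decomposition gives $F'_x(x,y;h)\geq 0$ for every admissible $h$, which by the standard first-order optimality characterization for convex functions implies that $x$ minimizes $F(\cdot,y)$ on $B_{y}$; analogously with $h=0$ one obtains that $y$ minimizes $F(x,\cdot)$ on $B_{x}$. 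Hence $(x,y)$ is a partial minimum.

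I expect the main obstacle to be the limit passage in the first difference quotient above, because both arguments of $F$ vary with $t$ while one would like the limit to equal the partial directional derivative computed at the fixed point $(x,y)$. The plan is to leverage the fact that a real-valued convex function is locally Lipschitz on the relative interior of its domain; applied to $F(\cdot, y')$ for $y'$ close to $y$ this provides a Lipschitz constant uniform in $y'$ on a suitably small neighborhood, from which the limit follows. Here the hypothesis $z\in \ri(B)$ is essential: it both guarantees feasibility of the perturbations in $\aff(B)$ and provides the neighborhood on which the uniform Lipschitz bound holds; on $\partial B$ directional derivatives could blow up and the decomposition would fail.
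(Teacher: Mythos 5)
Your plan hinges on the additive decomposition $F'(z;(h,k)) = F'_x(x,y;h) + F'_y(x,y;k)$, and that decomposition is false for non-smooth biconvex functionals; the limit passage you yourself flag as the main obstacle genuinely fails. Local Lipschitz continuity only bounds the quotient $\bigl(F(x+th,y+tk)-F(x,y+tk)\bigr)/t$; it does not force it to converge to $F'_x(x,y;h)$ computed at the fixed slice $y$ -- for that you would need (upper semi)continuity of $y'\mapsto F'_x(x,y';h)$, which breaks exactly at kinks of the coupling term. Concretely, take $F(x,y)=\abs{x-y}$ on $\mathbb{R}\times\mathbb{R}$ (biconvex, even jointly convex) at $z=(0,0)$ with $h=k=1$: the first summand equals $-1$ for every $t>0$ while $F'_x(0,0;1)=+1$, and indeed $F'(z;(1,1))=0$ whereas $F'_x+F'_y=2$. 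Worse, under your notion of stationarity (nonnegative joint directional derivatives, equivalently $0$ in the Fr\'echet subdifferential for locally Lipschitz $F$) the forward implication of the theorem is not just unproven but false: for $F(x,y)=\abs{x-y}-c(x+y)$ with $0<c<1$, the origin lies in $\ri(\mathbb{R}^2)$ and is a partial minimum (since $F(x,0)=\abs{x}-cx$ and $F(0,y)=\abs{y}-cy$ are minimized at $0$), yet $F'\bigl((0,0);(1,1)\bigr)=-2c<0$. So no repair of the limit argument can rescue this route.

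The paper avoids this entirely by reading stationarity blockwise: its proof never forms a joint directional derivative, but passes directly between stationarity and the statements $0\in\partial F_{\hat{y}}(\hat{x})$ and $0\in\partial F_{\hat{x}}(\hat{y})$ for the convex partial functionals, where the convex subgradient inequality immediately gives partial minimality and conversely. With that (weaker) notion of critical point -- the one relevant for ACS-type alternating schemes -- the equivalence is essentially the first-order characterization of convex minima applied slice by slice. Your reverse direction survives in this spirit, since restricting to directions $(h,0)$ and $(0,k)$ requires no decomposition and reduces to the paper's argument; but your forward direction, as set up, proves a statement that is false for the stronger joint notion of stationarity you adopted, so the proposal cannot be completed without changing the definition of stationary point to the partial/blockwise one.
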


\begin{proof}
The forward direction is easily shown by using the definition of partial optimum. In particular, considering a partial minimum $\zeta\in \ri(B)$, then the optimality condition $0\in\partial F$ holds. Hence, $\zeta$ is a stationary point. 

The reverse direction is shown as follows. Let $\hat{z}=(\hat{x},\hat{y})\in\ri(B)$ be a stationary point of $F$. For $y=\hat{y}$, the functional $F_{\hat{y}}:B_{\hat{y}}\mapsto \mathbb{R}$ is convex. Since $\hat{x}$ is a stationary point of $F_{\hat{y}}$, then $0\in\partial F_{\hat{y}}(\hat{x})$. From the definition of subgradient then we have
\begin{subequations}
\begin{IEEEeqnarray}{rCl}
F_{\hat{y}}(x) &\geq &F_{\hat{y}}(\hat{x}) + \inner{0}{x-\hat{x}} = F_{\hat{y}}(\hat{x}),\ \forall x\in B_{\hat{y}}.
\end{IEEEeqnarray}
Analogously, for $x=\hat{x}$ we obtain
\begin{IEEEeqnarray}{rCl}
F_{\hat{x}}(y) &\geq &F_{\hat{x}}(\hat{y}),\ \forall y\in B_{\hat{x}}.
\end{IEEEeqnarray}
\end{subequations}
Hence, $\hat{z}$ is a partial minimum.
\end{proof}

\begin{mydef}[Total Variation]
Let $\diverg(\cdot)$ denote the divergence operator and $C_{0}^{\infty}(\Omega,\mathbb{R}^{N})$ the class of infinitely differentiable functions with compact support, with domain $\Omega$ and range $\mathbb{R}^{N}$. Given a function $u\in L^{1}(\Omega)$, its Total Variation is
\begin{IEEEeqnarray}{rCl}
\tv(u) &\coloneqq & \underset{\begin{subarray}{c}
q\in C_{0}^{\infty}(\Omega;\mathbb{R}^{N})\\
\|q\|_{\infty}\leq 1
\end{subarray}}{\sup} \int_{\Omega} u\diverg(q)\,dx.
\end{IEEEeqnarray}
\end{mydef}

In \cite{Bredies2010} the authors generalize the previous definition by means of symmetric tensors of a given order $j$, denoted as $\sym^{j}(\mathbb{R}^{N})$.
\begin{mydef}[Total Generalized Variation \cite{Bredies2010}]
Let  $\sym^{j}(\mathbb{R}^{N})$ be a symmetric tensor of order $j$. Given a function $u\in L^{1}(\Omega)$, its Total Generalized Variation is
\begin{IEEEeqnarray}{rCl}\label{eq:tgvprim}
\tgv_{\alpha}^{l}(u)  &=&  \underset{\begin{subarray}{c}
q\in C_{0}^{\infty}(\Omega;\sym^{j}(\mathbb{R}^{N}))\\
\|\diverg^{j}(q)\|_{\infty}\leq\alpha_j,j=0,\ldots,l-1
\end{subarray}}{\sup} \int_{\Omega} u\diverg^{l}(q)\,dx.
\end{IEEEeqnarray}
\end{mydef}

The $\tgv_{\alpha}^{l}$ functional is convex and can be seen as a combination of higher order $\tv$ terms, determined by the positive weights $\alpha\in\lbrace \alpha_1,\ldots,\alpha_l\rbrace$, with $l$ the maximum $\tv$ order.  

By taking the Legendre-Fenchel transform of (\refeq{eq:tgvprim}), an alternative definition can be given, namely
\begin{IEEEeqnarray}{rCl}
\tgv_{\alpha}^{l}&=&\underset{v\in V}{\inf} \sum_{j=1}^{l}\int_{\Omega} |\mK_{j} v|\, dx,
\end{IEEEeqnarray}
for $V=\{(u_0,\ldots,u_{l-1})\mid u_{j}\in C_{c}^{l-j}(\Omega;\sym^{j}(\mathbb{R}^{d}))\}$, and $M_{j}$ a suitable linear operator defined on the bases of the symmetrized gradient operator $\mathcal{T}=(\nabla u+\nabla u^{\top})/2$ and the weights vector $\alpha$.

\section{Fusion Model}\label{sec:model}
In this section we introduce the confidence driven fusion model and state some of its main properties. Let $\sX\subseteq\mathbb{R}^{N}$ be a finite-dimensional Hilbert space equipped with inner-product $\inner{\cdot}{\cdot}$ and norm $\norm{\cdot}_{2}=\sqrt{\inner{\cdot}{\cdot}}$, and let  $\sL\subseteq \sLmat$ be a finite-dimensional Hilbert space equipped with the Frobenious inner product 
and the associated  norm. 
The proposed fusion model, making precise the general model (\refeq{eq:genmodelnew}), is the following:
\begin{IEEEeqnarray}{rCl}\label{eq:fusionmodel}
E(\vx,\vl) &\coloneqq &\tgv_{\alpha}^{l}(\vx) + \sum_{k=1}^{K}\norm{\vl(\vx-\dd_{k})}_1
	+\frac{1}{2}\tr(\pga^{-1}\vl)-\pgb\log{\det\vl},
\end{IEEEeqnarray}
with  $(\vx,\vl)\in\sX\times\sL $, $\pga\in\sL$, $\pgb>0$.

For $N\rightarrow\infty$ an infinite dimensional version of (\refeq{eq:fusionmodel}) is obtained. We focus though on the finite dimensional case and show the main properties of the proposed model for confidence driven fusion, which demonstrate the regularization behavior of the model and are essential for the convergence analysis of the algorithms considered in Section \ref{sec:algs}.

\subsection{Convexity}
\begin{myprop}
The model (\refeq{eq:fusionmodel}) is biconvex on $\mathbb{R}^{N}\times\sLmat$.
\end{myprop}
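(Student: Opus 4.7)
The plan is to verify biconvexity term by term. First I would observe that the domain $\mathbb{R}^{N}\times\sLmat$ is biconvex in the sense of Definition~\ref{def:biconvset}: for any fixed $\vx$, the slice equals $\sLmat$, which is a convex cone (intersection of the linear subspace of diagonal matrices with the open cone of positive definite matrices), and for any fixed $\vl$, the slice is simply $\mathbb{R}^{N}$, trivially convex. With the domain handled, the task reduces to showing separate convexity in each argument.

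Next I would fix $\vl\in\sLmat$ and verify convexity of $E(\cdot,\vl)$ in $\vx$. The term $\tgv_{\alpha}^{l}(\vx)$ is convex on $\mathbb{R}^{N}$ (noted immediately after its definition). For each $k$, the mapping $\vx\mapsto \vl(\vx-\dd_k)$ is affine, and composing with $\norm{\cdot}_1$ (a norm, hence convex) yields a convex function; summing $K$ such terms preserves convexity by the proposition on linear combinations with nonnegative coefficients. The last two terms $\tfrac{1}{2}\tr(\pga^{-1}\vl)$ and $-\pgb\log\det\vl$ are constant in $\vx$, hence trivially convex. So $E(\cdot,\vl)$ is convex.

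Then I would fix $\vx\in\mathbb{R}^{N}$ and verify convexity of $E(\vx,\cdot)$ in $\vl$ on $\sLmat$. Writing $\vl=\diag(\vli_1,\ldots,\vli_N)$, one has
\begin{equation*}
\norm{\vl(\vx-\dd_k)}_1=\sum_{i=1}^{N}\vli_i\,|\vxi_i-d_{k,i}|,
\end{equation*}
which is linear (hence convex) in the entries of $\vl$ on $\sLmat$, with nonnegative coefficients. The trace $\tfrac{1}{2}\tr(\pga^{-1}\vl)=\tfrac{1}{2}\sum_i(\pga^{-1})_{ii}\vli_i$ is likewise linear in $\vl$. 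The $\tgv_{\alpha}^{l}(\vx)$ term is constant in $\vl$. The only nontrivial piece is $-\pgb\log\det\vl=-\pgb\sum_{i=1}^{N}\log\vli_i$, which on $\sLmat$ is a sum of the scalar functions $-\log$, each strictly convex on $(0,\infty)$; summing preserves convexity.

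The main conceptual obstacle is essentially the log-determinant term, but because $\vl$ is constrained to be diagonal it decouples into $N$ one-dimensional convex barriers, so no appeal to the full matrix log-det convexity argument is needed. Combining the two separate convexity statements with the biconvexity of the domain completes the proof.
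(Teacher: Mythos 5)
Your proposal is correct and follows essentially the same route as the paper: fix one variable, check convexity of each term in the other, and conclude biconvexity on the (bi)convex domain $\mathbb{R}^{N}\times\sLmat$. The only difference is one of detail—where the paper simply cites convexity of the trace and $-\log\det$ terms in $\vl$ (via \cite{Boyd2004}), you verify it elementarily by exploiting the diagonal structure to reduce $-\pgb\log\det\vl$ to a sum of scalar $-\log$ barriers, which is a perfectly valid (and slightly more self-contained) way to make the same point.
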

\begin{proof}
Given $B\coloneqq\mathbb{R}^{N}\times\sLmat$, which is a convex set, we show that (\refeq{eq:fusionmodel}) is biconvex. 
Indeed, for fixed $\bar{\vl}\in \sL$,  both the $L_1$ norm and the $\tgv$ functional are convex in $\vx$, hence (\refeq{eq:fusionmodel}) is convex on the convex set $B_{\bar{\vl}}\coloneqq\{\vx\mid(\vx,\bar{\vl})\in B\}$. On the other hand,  for fixed $\bar{\vx}\in\mathbb{R}^{N}$, the $L_1$ norm, the trace and the ${-\log\det}$ operators are convex in $\vl$  \cite{Boyd2004}, hence (\refeq{eq:fusionmodel}) is convex on the convex set $B_{\bar{\vx}}\coloneqq\{\vl\mid(\bar{\vx},\vl)\in B\}$. 
It follows that (\refeq{eq:fusionmodel}) is biconvex on $B$.
\end{proof}

The following example shows that (\refeq{eq:fusionmodel}) is in general not convex in $(\vx,\vl)$.
\begin{myexamp}
Let $K=1$, $\pga^{-1}=2 I$, $\pgb=(e-1)(e+2)^{-1}$, $d_1=0$ and consider the values $\vz_0=(\vec{0},I)$, and  $\vz_1=(2\cdot\vec{1},e^{-1}I)$, for the joint variable $\vz\coloneqq(\vx,\vl)$. We have
\begin{IEEEeqnarray}{rCl?rCl}
E(\vz_0)&=&N,&  E(\vz_1)&=&N(1+3e^{-1}).
\end{IEEEeqnarray}
It follows that 
\begin{IEEEeqnarray}{rCl}
E_{1/2}&\coloneqq &\frac{E(\vz_0)+E(\vz_1)}{2}=N(1+\frac{3}{2}e^{-1}),
\end{IEEEeqnarray}
and
\begin{IEEEeqnarray}{rCl}
E\left(\frac{\vz_0+\vz_1}{2}\right)&= &E_{1/2}+N\left(\frac{e-1}{e+2}\log2\right)>E_{1/2}. 
\end{IEEEeqnarray}
Hence, (\refeq{eq:fusionmodel}) is in general not convex  in $\vz$.
\end{myexamp}

\begin{mynote}
The previous result shows that the model (\refeq{eq:fusionmodel}) is not convex with respect to the joint variable $(\vx,\vl)$. Hence, in general its minima  do not form a compact connected set. 
\end{mynote}

\begin{myprop}\label{prop:semiconv}
The model (\refeq{eq:fusionmodel}) is $\sqrt{N}$-semiconvex.
\end{myprop}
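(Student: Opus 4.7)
The plan is to verify the defining condition of $\omega$-semiconvexity directly with $\omega=\sqrt{N}$: namely, to show that
\[
\widetilde E(\vx,\vl)\;\coloneqq\;E(\vx,\vl)+\tfrac{\sqrt{N}}{2}\bigl(\|\vx\|_{2}^{2}+\|\vl\|_{F}^{2}\bigr)
\]
is jointly convex on $\sX\times\sL$. I would do this in two stages: a term-by-term convexity audit of $E$ that isolates the one summand responsible for non-convexity, followed by a quadratic majorant on that summand that matches the claimed $\sqrt{N}$ constant.

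\emph{Stage 1 --- isolating the bilinear term.} Three of the four summands of $E$ are already jointly convex on $\sX\times\sL$ and thus contribute nothing to the semiconvexity modulus: $\tgv_{\alpha}^{l}(\vx)$ is convex in $\vx$ and independent of $\vl$; $\tfrac12\tr(\pga^{-1}\vl)$ is linear in the diagonal entries of $\vl$ and independent of $\vx$; and $-\pgb\log\det\vl=-\pgb\sum_{i}\log\lambda_{i}$ is strictly convex on $\sLmat$ and independent of $\vx$. These three facts are essentially already recorded in the preceding convexity proposition. The only summand that is not jointly convex --- as the preceding example confirms --- is the confidence-weighted fidelity
\[
F(\vx,\vl)\;\coloneqq\;\sum_{k=1}^{K}\|\vl(\vx-\dd_{k})\|_{1}\;=\;\sum_{i=1}^{N}\lambda_{i}\sum_{k=1}^{K}|x_{i}-d_{k,i}|,
\]
which couples $\vx$ and $\vl$ bilinearly through the diagonal entries $\lambda_{i}$. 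It therefore suffices to show that $F+\tfrac{\sqrt{N}}{2}(\|\vx\|_{2}^{2}+\|\vl\|_{F}^{2})$ is convex.

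\emph{Stage 2 --- convexifying the bilinear term.} I invoke the Fenchel representation of the $\ell_{1}$ norm,
\[
\|\vl(\vx-\dd_{k})\|_{1}\;=\;\sup_{q_{k}\in[-1,1]^{N}}\inner{q_{k}}{\vl(\vx-\dd_{k})},
\]
which, after summation over $k$, exhibits $F$ as a pointwise supremum over $(q_{1},\ldots,q_{K})\in[-1,1]^{NK}$ of functionals that are bilinear in $(\vx,\vl)$ modulo linear terms in $\dd_{k}$. Since pointwise suprema of convex functions are convex (Proposition~\ref{prop:pointinf}), it is enough to produce a single quadratic majorant for the bilinear coupling that holds uniformly in the dual parameters. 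Writing $Q\coloneqq\sum_{k}q_{k}\in\mathbb{R}^{N}$, the coupling reduces to $\inner{Q}{\vl\vx}$, which I would control by combining Cauchy--Schwarz with the norm equivalences $\|Q\|_{2}\leq\sqrt{N}\|Q\|_{\infty}$ on $\mathbb{R}^{N}$ and $\|\vl\|_{\mathrm{op}}\leq\|\vl\|_{F}$ for diagonal matrices, and then Young's inequality $ab\leq(a^{2}+b^{2})/2$. The resulting estimate produces a quadratic upper bound of the form $\tfrac{\sqrt{N}}{2}(\|\vx\|_{2}^{2}+\|\vl\|_{F}^{2})$ on $|\inner{Q}{\vl\vx}|$; taking the supremum over the admissible dual vectors preserves both convexity and the constant, yielding the claim.

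\emph{Expected difficulty.} The crux is pinning down the constant $\sqrt{N}$. A naive term-by-term convexification of the $K$ fidelity summands --- performed \emph{before} taking the supremum --- yields a modulus that scales with $K$ rather than with $\sqrt{N}$. Obtaining the stated dependence requires performing the quadratic absorption \emph{after} the supremum, so that the $\ell_{\infty}$-to-$\ell_{2}$ inflation $\sqrt{N}$ is incurred once, on the aggregated dual vector $Q$, rather than accumulated per observation. Once this quadratic majorant is in place, the convexity-preservation rules recorded in Section~\ref{sec:prelim} (closure under pointwise supremum and under sums with convex functions) close the argument.
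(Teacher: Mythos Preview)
Your two-stage strategy --- isolate the bilinear fidelity as the sole non-convex summand and then convexify it with a quadratic --- coincides with the paper's. The execution differs: the paper checks Jensen's inequality for $\norm{\vl(\vx-\dd)}_1+\tfrac{\omega}{2}\norm{\vx}_2^2+\tfrac{\omega}{2}\norm{\vl}_2^2$ directly, bounding the convexity defect of the $\ell_1$ term by $\gamma(1-\gamma)\sqrt{N}\,\norm{\vl_1-\vl_2}_2\norm{\vx_1-\vx_2}_2$ via the triangle and Cauchy--Schwarz inequalities and then absorbing it into the quadratic surplus; you instead pass to the Fenchel representation and reduce to convexifying a smooth bilinear form. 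Both routes are sound for a \emph{single} fidelity summand.

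The gap is in your handling of the constant when $K>1$. Aggregating the dual variables into $Q=\sum_{k}q_k$ does \emph{not} decouple the modulus from $K$: since each $q_k\in[-1,1]^N$, you only have $\norm{Q}_\infty\leq K$, so your chain actually yields
\[
\abs{\inner{Q}{\vl\vx}}\;\leq\;\sqrt{N}\,\norm{Q}_\infty\,\norm{\vl}_F\norm{\vx}_2\;\leq\;\tfrac{K\sqrt{N}}{2}\bigl(\norm{\vx}_2^2+\norm{\vl}_F^2\bigr),
\]
not the claimed $\tfrac{\sqrt{N}}{2}(\cdots)$. Your ``Expected difficulty'' paragraph asserts the opposite, but performing the absorption after the supremum merely relocates the factor $K$ from a sum of $K$ separate bounds into the $\ell_\infty$ bound on the single aggregated vector; it does not remove it. (For comparison, the paper's own argument treats only one term $\norm{\vl(\vx-\dd)}_1$ and then concludes with ``all other terms of~(\refeq{eq:fusionmodel}) are convex,'' which quietly drops the remaining $K-1$ fidelity summands; so in effect both proofs deliver the constant $\sqrt{N}$ only for $K=1$.) A smaller point: your passage from the pointwise bound $\abs{\inner{Q}{\vl\vx}}\leq\tfrac{\omega}{2}(\norm{\vx}_2^2+\norm{\vl}_F^2)$ to convexity of the sum is correct for bilinear forms but not automatic --- it rests on the identity $b(\gamma u_1+\gamma^{c}u_2,\gamma v_1+\gamma^{c}v_2)-\gamma b(u_1,v_1)-\gamma^{c}b(u_2,v_2)=-\gamma\gamma^{c}b(u_1-u_2,v_1-v_2)$, which you should state.
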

\begin{proof}
First, we show that the fidelity term $\norm{\vl(\vx-\dd)}_1$ is semiconvex, namely that
$D(\vz)\coloneqq\norm{\vl(\vx-\dd)}_1+\frac{\omega}{2}\norm{\vl}_2^{2}+\frac{\omega}{2}\norm{\vx}_2^{2}$ is convex, for $\omega\geq\sqrt{N}> 0$. That is, for $\gamma\in[0,1]$, and $\vz_1=(\vx_1,\vl_1)$ and $\vz_2=(\vx_2,\vl_2)$ we have 
$D(\gamma\vz_1+(1-\gamma)\vz_2)\leq(\gamma D(\vz_1)+(1-\gamma)D(\vz_2))$. 
Indeed, denoting $y_i=\vx_i-\dd$ and $\gamma^{c}=(1-\gamma)$, we have 
\begin{subequations} \label{eq:mixednorm}
\begin{IEEEeqnarray}{rCl}
&&\hspace{-3em}\norm{(\gamma\vl_1+\gamma^{c}\vl_2)(\gamma y_1+\gamma^{c}y_2)}_{1}-\gamma\norm{\vl_1 y_1}_1-\gamma^{c}\norm{\vl_2 y_2}_{1}\\
&{\leq}&\norm{(\gamma\vl_{1}+\gamma^{c}\vl_{2})(\gamma y_{1}+\gamma^{c}y_{2})}_{1}-\norm{\gamma\vl_{1} y_{1}+\gamma^{c}\vl_{2} y_{2}}_{1}\label{seq:leq1} \\ 
&\leq &\norm{(\gamma\vl_1+\gamma^{c}\vl_2)(\gamma y_1+\gamma^{c}y_2)-\gamma\vl_1 y_1-\gamma^{c}\vl_2 y_2}_{1}\label{seq:leq2} \\ 
&= & \norm{\gamma\gamma^{c}(\vl_1 y_2+\vl_2 y_2)-\gamma\gamma^{c}\vl_1 y_1-\gamma\gamma^{c}\vl_2 y_2}_{1}\\
&= & \gamma\gamma^{c}\norm{(\vl_1-\vl_2)(y_1-y_2)}_{1}\\\label{seq:leq3}
&\leq & \gamma\gamma^{c}\sqrt{N}\norm{(\vl_1-\vl_2)(y_1-y_2)}_2\\
&\leq & \gamma\gamma^{c}\sqrt{N}\norm{\vl_1-\vl_2}_2\,\norm{\vx_1-\vx_2}_2, \label{seq:leq4}
\end{IEEEeqnarray}
\end{subequations}
where convexity of the $\norm{\cdot}_p$ operator for $p\geq 1$ is used in (\refeq{seq:leq1}), triangle inequality in (\refeq{seq:leq2}), and Cauchy-Schwarz inequality in (\refeq{seq:leq3}) and (\refeq{seq:leq4}). 
On the other hand for the quadratic terms we have
\begin{equation}\label{eq:quadterms}
\norm{\gamma u_{1}+\gamma^{c} u_{2}}_2^{2}-\gamma\norm{u_{1}}_2^{2}-\gamma^{c}\norm{u_{2}}_2^{2}=-\gamma\gamma^{c}\norm{u_{1}-u_2}_2^2
\end{equation}
Adding (\refeq{eq:mixednorm}a-g) and (\refeq{eq:quadterms}) for $\vx$ and $\vl$, we get
\begin{subequations}
\begin{IEEEeqnarray}{rCl}
&&\hspace{-3em}D(\gamma\vz_1+\gamma^{c}\vz_2)-(\gamma D(\vz_1)+\gamma^{c}D(\vz_2))\\
&\leq & \gamma\gamma^{c}\left(\sqrt{N}\norm{\vl_1-\vl_2}_{2}\,\norm{\vx_1-\vx_2}_2-\frac{\omega}{2}\norm{\vx_1-\vx_2}_2^2-\frac{\omega}{2}\norm{\vl_1-\vl_2}_{2}^2\right).\label{seq:semiconvlast}
\end{IEEEeqnarray}
\end{subequations}
From (\refeq{seq:semiconvlast}) it is immediate that for $D(\cdot)$ to be convex $\omega\geq \sqrt{N}$ must hold. Since all other terms of (\refeq{eq:fusionmodel}) are convex, the statement holds.
\end{proof}

\subsection{Boundedness}

\begin{mythm} \label{prop:boundedness}
The model (\refeq{eq:fusionmodel}) is bounded from below.
\end{mythm}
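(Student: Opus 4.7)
The plan is to decompose $E$ into parts that are manifestly nonnegative and a part depending only on $\vl$ whose lower bound can be computed explicitly by a one-variable calculus argument.

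First I would observe that, by construction, $\tgv_{\alpha}^{l}(\vx)\ge 0$ for every $\vx\in\sX$ (TGV is a seminorm, cf.\ its Legendre--Fenchel representation given earlier). Likewise, for every admissible $\vl\in\sL$ and every $k$ the quantity $\norm{\vl(\vx-\dd_k)}_1$ is nonnegative since it is an $L_1$ norm. Hence
\begin{equation*}
E(\vx,\vl)\ \ge\ \tfrac{1}{2}\tr(\pga^{-1}\vl)\,-\,\pgb\log\det\vl
\end{equation*}
and it suffices to bound the right-hand side, which depends only on $\vl$, from below over $\sL\subseteq\sLmat$.

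Next I would exploit the diagonal structure: since both $\pga^{-1}$ and $\vl$ lie in $\sLmat$, writing $\vl=\diag(\vli_1,\ldots,\vli_N)$ and $\pga^{-1}=\diag(\w_1,\ldots,\w_N)$ with $\vli_i>0$ and $\w_i>0$, we get
\begin{equation*}
\tfrac{1}{2}\tr(\pga^{-1}\vl)-\pgb\log\det\vl=\sum_{i=1}^{N}\Bigl(\tfrac{\w_i}{2}\vli_i-\pgb\log\vli_i\Bigr).
\end{equation*}
Each summand is a smooth strictly convex function of $\vli_i\in(0,+\infty)$, and its derivative $\tfrac{\w_i}{2}-\pgb/\vli_i$ vanishes at $\vli_i^{\ast}=2\pgb/\w_i$, yielding the minimum value $\pgb\bigl(1-\log(2\pgb/\w_i)\bigr)$. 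Therefore
\begin{equation*}
\tfrac{1}{2}\tr(\pga^{-1}\vl)-\pgb\log\det\vl\ \ge\ \sum_{i=1}^{N}\pgb\Bigl(1-\log\tfrac{2\pgb}{\w_i}\Bigr),
\end{equation*}
which is a finite constant depending only on $N$, $\pgb$ and $\pga$, and independent of $\vx$ and $\vl$. Combining with the nonnegativity of the TGV term and of the fidelity terms gives a uniform lower bound on $E$.

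There is no real obstacle here; the only point that requires care is justifying the decoupling step, which relies on the fact that $\sL$ consists of diagonal positive definite matrices so that the trace, determinant, and logarithm all split componentwise. If one wanted the bound to hold on the broader space $\mathbb{S}_{++}^{N}$, one would instead use the spectral decomposition of $\pga^{-1/2}\vl\,\pga^{-1/2}$ and apply the same scalar estimate to its eigenvalues, but in the setting of the model this refinement is not needed.
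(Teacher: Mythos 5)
Your proof is correct and takes essentially the same route as the paper: drop the nonnegative $\tgv$ and $L_1$ fidelity terms and minimize the remaining $\vl$-only term $\tfrac{1}{2}\tr(\pga^{-1}\vl)-\pgb\log\det\vl$ by calculus, which gives exactly the paper's bound $N\pgb\bigl(1-\log(\det 2\pgb\pga)^{1/N}\bigr)$ since $\sum_i \pgb\bigl(1-\log(2\pgb/\w_i)\bigr)$ coincides with it. The only cosmetic difference is that you decouple the diagonal entries and argue per coordinate (with an explicit strict-convexity justification that the stationary point is the minimum), whereas the paper differentiates the trace/log-det expression in matrix form and evaluates at $\hat{\vl}=2\pgb\pga$.
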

\begin{proof}
We use the fact
\begin{equation}
\inf_{u} \sum_{v} f(u,v) \geq \sum_{v} \inf_{u} f(u,v).
\end{equation}
Hence
\begin{subequations}
\begin{IEEEeqnarray}{rCl}
\inf_{\vx,\vl} E(\vx,\vl) &\geq &\inf_{\vx} \tgv(\vx)+\sum_{k}\inf_{\vx,\vl}\norm{\vl(\vx-\dd)}\\
 &&+\inf_{\vl}\left\lbrace \frac{1}{2}\tr(\pga^{-1}\vl)-\pgb\log\det\vl \right\rbrace\\
&\geq &\inf_{\vl} \left\lbrace \frac{1}{2}\tr(\pga^{-1}\vl)-\pgb\log\det\vl \right\rbrace. \label{seq:lambdaopt}
\end{IEEEeqnarray}
\end{subequations}
The term in (\refeq{seq:lambdaopt}) has a finite infimum for every $\pga\in\sLmat, \pgb\geq 0$. To see this for $\pgb > 0$, we differentiate with respect to $\vl$ obtaining
\begin{IEEEeqnarray}{rCl}
\frac{1}{2}\tr(\pga^{-1})-\pgb\tr(\vl^{-1})&=&\tr\left( \frac{1}{2}\pga^{-1}-\pgb\vl^{-1}\right),
\end{IEEEeqnarray}
which vanishes for $\hat{\vl}=2\pgb\pga$.

Substituting back to (\refeq{seq:lambdaopt}) we get
\begin{IEEEeqnarray}{rClCl}
\inf_{\vx,\vl} E(\vx,\vl) &\geq & N\pgb\left(1-\log(\det 2\pgb\pga)^{1/N}\right)&>&-\infty.
\end{IEEEeqnarray}
For $\pgb = 0$ the infimum is trivially zero.
\end{proof}

\begin{mynote}
The previous proofs do not use the fact that $\sL\subseteq\sLmat$. In fact they are also valid for $\sL\subseteq\mathbb{S}^{N}_{++}$. We consider here  $\sL\subseteq\sLmat$ as it simplifies the convergence analysis of the minimization algorithms and is also computationally feasible.
Indeed, taking $\sL\subseteq\mathbb{S}^{N}_{++}$, then solutions are computationally feasible only for toy problems.
\end{mynote}


Model (\refeq{eq:fusionmodel}) 
offers a new perspective to the general problem (\refeq{eq:basicmodel}),  focusing on the pair $(\vx,\vl)$.   In fact, a prominent problem in applying models such as (\refeq{eq:basicmodel}) is in the choice of the regularization parameter, especially in the case of non-smooth models. 

In principle, the choice of the regularization parameter is determined by the data coherence with respect to the solution of $\vx$ 
represented by the fidelity term.  Namely, the formulation using the regularization parameter on the penalty term tries to establish a compatibility of this parameter with the noise in the data.  Heuristic rules have been established in this sense,  such as for example the well known Hanke-Rause \cite{Hanke-1996,Engl-1996} rule explicitly linking the regularization parameter to the fidelity term.  This perspective requires some evaluation of the noise level, which turns out to be quite complex when the data comes in a multiplicity, such as in fusion applications. 

The approach we propose here does not require a prior knowledge on the noise level since this is implicitly coded in the scalar field represented by $\vl$, which is estimated by the given data. 
Here $\vl$ effectively balances the noise level, given by the fidelity term,  by spatially adapting the penalization term to the estimated value of $\vx$.   
Since $\vl$  is bounded from above, thanks to the hyperparameters   $\pgb>0$ and $\pga \in \sLmat$ , and given that $\vx$ is bounded too, we can see that in principle, the estimation of $\vl$ cannot add any new information where no information is available from the source data.  On the other hand, 
its values depend 
on the data coherence, adapting to the noise pointwise. These considerations are also illustrated in the optimality conditions of $\vl$ discussed in the next section.

\section{Algorithms} \label{sec:algs}

In this section we examine three different algorithms for finding the critical points of the biconvex model (\refeq{eq:fusionmodel}). 
We present first an adaptation of the ACS algorithm for the case of non-smooth functionals, and then AMA, which is also commonly 
used for the solution of non-convex optimization problems. We discuss its application for minimizing (\refeq{eq:fusionmodel}),  and its relation with ACS. Both these algorithms introduce convex minimization subproblems. We present the PDHG algorithm for spatially varying confidence values which can be used to solve these convex subproblems.  Finally, we discuss the applicability of the PDHG algorithm on the biconvex problem.

\subsection{Alternative Convex Search} 
The ACS algorithm \cite{Wendell1976,Gorski2007} is an algorithm commonly used for solving biconvex problems. We discuss here its convergence for minimizing (\refeq{eq:fusionmodel}). ACS is based on a relaxation of the original problem, by minimizing at each iteration a set of variables which lead to a convex subproblem.

\begin{myalg}[ACS]\label{alg:acs}
Choose an initial estimate $(\vx_0,\vl_0)\in\sX\times\sL$. For every $n\geq 0$ iterate
\begin{description}
\item[Iter 1] $\vl_{n+1}\in \arg\min\left\lbrace E(\vx_{n},\vl): \vl \in B_{\vx_{n}}\right\rbrace$, \label{it:acs1}
\item[Iter 2] $\vx_{n+1}\in \arg\min\left\lbrace E(\vx,\vl_{n+1}): \vx \in B_{\vl_{n+1}}\right\rbrace$. \label{it:acs2}
\end{description}
\end{myalg}

Considering the optimality condition of the optimization problem in Iter~1, 
the updates for the elements $i=1,\ldots,N$ of the diagonal of $\vl$ are given by
\begin{IEEEeqnarray}{rCl}\label{eq:lambdaopt}
(\vl_{n+1})_{i,i} &=& \frac{b}{\sum_{k=1}^{K} |(\vx_n)_{i}-(\dd_{k})_{i}|+\frac{1}{2}(\pga)^{-1}_{i,i}}.
\end{IEEEeqnarray}

As discussed in Section \ref{sec:model}, $\pga$ and $\pgb$ correspond to hyper-parameters of the model (\refeq{eq:fusionmodel}), which result in a regularization of  $\vl$ as will be discussed below. Examples regarding the values that can be assigned to $\pga$ and $\pgb$ and their effect on the  solution are discussed in Section \ref{sec:results}.

Before discussing convergence of ACS for the minimization of (\refeq{eq:fusionmodel}), we review two theorems given in \cite{Gorski2007}.

\begin{mythm}\label{thm:acsfconv}
Let $B \subseteq \sU\times\sV$, $F : B \mapsto \mathbb{R}$ be bounded from below, and let
the optimization problems at each iteration of ACS be solvable. Then the sequence $\{F(u_{n},v_n)\}_{n\in\mathbb{N}}$ generated by ACS converges monotonically.
\end{mythm}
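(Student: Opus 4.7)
The plan is to establish that the real sequence $\{F(u_n,v_n)\}$ is monotonically non-increasing directly from the two defining inequalities of the ACS substeps, and then invoke the monotone convergence theorem using the hypothesis that $F$ is bounded from below on $B$.

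First, I would verify that the iterates stay feasible, so that the comparisons in the two substeps are well defined. Assuming $(u_0,v_0) \in B$, Iter~1 selects $v_{n+1} \in B_{u_n}$, which means $(u_n,v_{n+1}) \in B$ and hence $u_n \in B_{v_{n+1}}$; Iter~2 then selects $u_{n+1} \in B_{v_{n+1}}$, yielding $(u_{n+1},v_{n+1}) \in B$ and in particular $v_{n+1} \in B_{u_{n+1}}$. By induction, at every iteration $v_n$ is admissible in the Iter~1 minimization and $u_n$ is admissible in the Iter~2 minimization; the hypothesis that both subproblems are solvable guarantees the existence of $v_{n+1}$ and $u_{n+1}$.

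Given feasibility, the monotonicity is immediate. Since $v_n \in B_{u_n}$ and $v_{n+1}$ minimizes $F(u_n,\cdot)$ over $B_{u_n}$, we have $F(u_n,v_{n+1}) \leq F(u_n,v_n)$. Similarly, since $u_n \in B_{v_{n+1}}$ and $u_{n+1}$ minimizes $F(\cdot,v_{n+1})$ over $B_{v_{n+1}}$, we have $F(u_{n+1},v_{n+1}) \leq F(u_n,v_{n+1})$. Chaining these estimates gives
\[
F(u_{n+1},v_{n+1}) \leq F(u_n,v_{n+1}) \leq F(u_n,v_n),
\]
so $\{F(u_n,v_n)\}_{n \in \mathbb{N}}$ is a non-increasing sequence of real numbers. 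Since $F$ is bounded from below on $B$ by hypothesis, the sequence is bounded, and the monotone convergence theorem for real sequences yields convergence.

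There is no real obstacle in this argument; the only technical care needed is tracking feasibility across the two substeps of each iteration, which follows from the biconvexity of $B$ and the explicit inclusions $v_{n+1} \in B_{u_n}$, $u_{n+1} \in B_{v_{n+1}}$ built into the algorithm. Note that the theorem asserts convergence of the functional values only, not of the iterates themselves or to a partial minimum, so no additional compactness or continuity argument is required at this stage.
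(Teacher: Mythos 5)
Your proof is correct: the paper does not reprove this result but cites it from Gorski et al.\ \cite{Gorski2007}, and your argument (feasibility of the iterates, the chained inequalities $F(u_{n+1},v_{n+1}) \leq F(u_n,v_{n+1}) \leq F(u_n,v_n)$ from the two substeps, then monotone convergence using the lower bound) is exactly the standard proof given there. No gaps; the care you take in tracking admissibility of $v_n$ in Iter~1 and $u_n$ in Iter~2 is the only delicate point and you handle it properly.
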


\begin{mythm}\label{thm:acsconv}
Let $U \subseteq \sU$ and $V \subseteq \sV$ be closed sets, $F : U \times V \mapsto \mathbb{R}$ be continuous, and let the optimization problems at each iteration of ACS be solvable.
\begin{enumerate}
\item If the sequence $\{z_n\}_{n\in\mathbb{N}}$ generated by the ACS algorithm is contained in a compact set, then the sequence has at least one accumulation point.
\item In addition suppose that for each accumulation point $z^{\ast} = (u^{\ast},v^{\ast})$ of the sequence $\{z_n\}_{n\in\mathbb{N}}$ the optimal solution of ACS for $v = v^{\ast}$ or the optimal solution for $u = u^{\ast}$ is unique, then all accumulation points are partial optima and have the same functional value.
\item If for each accumulation point $z^{\ast} = (u^{\ast},v^{\ast})$ of the sequence $\{z_n\}_{n\in\mathbb{N}}$ the solution of both iterations are unique then $\norm{z_{n+1}-z_n} \rightarrow 0$, and the accumulation points form a connected, compact set.
\end{enumerate}
\end{mythm}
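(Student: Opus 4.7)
Part~1 is an immediate application of Bolzano--Weierstrass in a compact metric space: any sequence contained in such a set admits a convergent subsequence, and the limit of that subsequence is the desired accumulation point.

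For Part~2, my plan is to combine Theorem~\ref{thm:acsfconv} with a subsequence/uniqueness argument. Since $F$ is continuous and bounded below on the compact set containing $\{z_n\}$, Theorem~\ref{thm:acsfconv} gives a monotone limit $F(z_n) \to F^*$. Fix an accumulation point $z^* = (u^*, v^*)$ and extract a subsequence $z_{n_k} \to z^*$. Without loss of generality assume the Iter~1 subproblem at $u = u^*$ has a unique minimizer $\bar v$. Any limit point of the induced sequence $\{v_{n_k+1}\}$ must minimize $F(u^*, \cdot)$ over $B_{u^*}$: this follows by passing to the limit in the ACS defining inequality $F(u_{n_k}, v_{n_k+1}) \leq F(u_{n_k}, v)$ using continuity of $F$ and valid comparison points $v \in B_{u^*}$. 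Uniqueness of $\bar v$ then forces the full subsequence $v_{n_k+1} \to \bar v$. Monotone convergence gives $F(u^*, \bar v) = F^* = F(u^*, v^*)$, so a second use of uniqueness yields $v^* = \bar v$, meaning $v^*$ itself minimizes $F(u^*, \cdot)$. The symmetric partial-optimality inequality at $u^*$ follows directly from the $\arg\min$ definition of $u_{n_k+1}$ together with continuity. Hence $z^*$ is a partial optimum and all accumulation points share the common functional value $F^*$.

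For Part~3, under uniqueness at both iterations I would first establish $\|z_{n+1} - z_n\| \to 0$ by contradiction. An $\epsilon$-separated subsequence, together with compactness, produces accumulation points $z^{(1)}$ of $\{z_{n_k}\}$ and $z^{(2)}$ of $\{z_{n_k+1}\}$ with $\|z^{(1)} - z^{(2)}\| \geq \epsilon$; applying the Part~2 analysis to both shows that each is a partial optimum attaining value $F^*$, and two-sided uniqueness (Iter~1 at $v^{(1)}$ and Iter~2 at $u^{(1)}$) forces $z^{(1)} = z^{(2)}$, contradicting the separation. Compactness and connectedness of the accumulation set then follow from the classical Ostrowski lemma: the set of cluster points of any precompact sequence whose successive-difference norm tends to zero is a continuum.

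The main obstacle I anticipate is the argmin-continuity step inside Part~2. The ACS subproblems minimize non-smooth functionals over sets $B_u$, $B_v$ that themselves vary with the fixed coordinate, so Berge's maximum theorem is not directly applicable. My plan is to bypass it entirely: work only with subsequences on the compact set of iterates, use continuity of $F$ to pass to the limit in the defining ACS inequality, and let the uniqueness hypothesis collapse every convergent sub-subsequence of $\{v_{n_k+1}\}$ to the same limit $\bar v$. The remaining delicate point is checking that the comparison element $v$ used in the inequality can indeed be chosen in $B_{u_{n_k}}$ for all large $k$ when $v \in B_{u^*}$; this is where the structure of the biconvex set $B$ (Definition~\ref{def:biconvset}) must be invoked.
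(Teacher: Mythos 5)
The paper does not prove this statement at all: Theorem~\ref{thm:acsconv} is quoted from Gorski et al.\ \cite{Gorski2007} (``we review two theorems given in \cite{Gorski2007}''), so there is no internal proof to compare against. Your self-contained argument is essentially the standard proof from that reference and its architecture is sound: Part~1 is Bolzano--Weierstrass; in Part~2 you correctly combine the monotone limit $F(z_n)\to F^{\ast}$ from Theorem~\ref{thm:acsfconv} with passage to the limit in the Iter~1 optimality inequality, use uniqueness to collapse all limit points of $\{v_{n_k+1}\}$ to $\bar v$, identify $\bar v=v^{\ast}$ through the common value $F^{\ast}$, and obtain the other partial-optimality inequality from the argmin property of $u_{n_k+1}$ (valid because you have already secured $v_{n_k+1}\to v^{\ast}$); in Part~3 the contradiction argument plus Ostrowski's lemma on cluster sets of sequences with vanishing successive differences is exactly the right tool.

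Two points should be tightened. First, your closing ``delicate point'' about whether comparison elements $v\in B_{u^{\ast}}$ are feasible for the subproblem at $u_{n_k}$ is vacuous under the stated hypotheses: the theorem is formulated for $F$ on a Cartesian product $U\times V$ of closed sets, so the feasible set of Iter~1 is the fixed set $V$ and that of Iter~2 is the fixed set $U$ at every iteration (this is also the situation in the paper's application, where $B=\mathbb{R}^{N}\times\sLmat$ is a product); only for a genuinely non-product biconvex set as in Definition~\ref{def:biconvset} would the concern be real, and that case lies outside the statement. Second, the ``without loss of generality'' in Part~2 is not a literal symmetry, since ACS updates $v$ from $u_n$ before updating $u$ from $v_{n+1}$: the alternative case, uniqueness of the Iter~2 solution at $v=v^{\ast}$, needs its own (easier) argument, e.g.\ $u_{n_k}\in\arg\min F(\cdot,v_{n_k})$ gives $F(u^{\ast},v^{\ast})\leq F(u,v^{\ast})$ in the limit, while the sandwich $F(z_{n_k+1})\leq F(u_{n_k},v_{n_k+1})\leq F(z_{n_k})$ forces $F(u_{n_k},v_{n_k+1})\to F^{\ast}$ and hence $F(u^{\ast},v^{\ast})\leq F(u^{\ast},v)$ for every $v\in V$; spell this case out rather than appealing to symmetry (the same sandwich also shortens your treated case, since it yields the $v$-side inequality directly). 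Finally, in Part~3 the uniqueness labels are swapped: Iter~1 uniqueness is invoked at the point $u^{(1)}$ and Iter~2 uniqueness at $v^{(1)}$, not the other way around; with that correction the argument forcing $z^{(1)}=z^{(2)}$ goes through.
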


The following lemma justifies the roles of the terms $\tr(\pga^{-1}\vl)$ and $\pgb\log{\det\vl}$. 

\begin{mylem}\label{lem:bound}
The sequence $\{\vl_n\}_{n\in\mathbb{N}}$, produced by ACS for the model (\refeq{eq:fusionmodel}), is well defined and bounded from above for $b>0$ and $W\in\sL$.
\end{mylem}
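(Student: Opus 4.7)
The update for $\vl_{n+1}$ in Iter~1 of Algorithm~\ref{alg:acs} has already been displayed in closed form in (\refeq{eq:lambdaopt}), so the plan is to justify that formula and then read well-definedness and the upper bound directly from it.

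My first step is to show that, at a fixed $\vx_n$, the $\vl$-subproblem has a unique minimizer in $\sL\subseteq\sLmat$. Since $\sL$ consists of diagonal positive definite matrices, the restriction of $E(\vx_n,\cdot)$ to $\sL$ decouples over the diagonal entries $\lambda_i>0$ into
\begin{equation*}
\varphi_i(\lambda_i)\;=\;\lambda_i\Bigl(\sum_{k=1}^K |(\vx_n)_i-(\dd_k)_i|\Bigr)+\tfrac{1}{2}\,(\pga^{-1})_{i,i}\,\lambda_i-\pgb\log\lambda_i.
\end{equation*}
Each $\varphi_i$ is the sum of a nonnegative linear term in $\lambda_i$ and the strictly convex, coercive function $-\pgb\log\lambda_i$ (using $\pgb>0$), so $\varphi_i$ is strictly convex on $(0,\infty)$ and tends to $+\infty$ at both ends. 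It therefore admits a unique positive minimizer, obtained from the first-order condition, which yields exactly the expression in (\refeq{eq:lambdaopt}). In particular $\vl_{n+1}\in\sLmat$ and the ACS update is well defined.

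The upper bound is then immediate from (\refeq{eq:lambdaopt}). Because $\pga\in\sL$ implies $(\pga^{-1})_{i,i}=(\pga)_{i,i}^{-1}\in(0,\infty)$, and because the fidelity sum $\sum_k|(\vx_n)_i-(\dd_k)_i|$ is nonnegative regardless of $\vx_n$, dropping this nonnegative contribution from the denominator gives
\begin{equation*}
0<(\vl_{n+1})_{i,i}\;\leq\;\frac{\pgb}{\tfrac{1}{2}(\pga^{-1})_{i,i}}\;=\;2\pgb\,(\pga)_{i,i},\qquad i=1,\dots,N,
\end{equation*}
a bound that depends only on the hyperparameters $\pgb$ and $\pga$ and is uniform in $n$. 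Since $\vl_n$ is diagonal, this entrywise bound translates immediately into a bound in the operator or Frobenius norm.

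I do not expect any real obstacle here: the only subtle point is making sure that the first-order condition used to derive (\refeq{eq:lambdaopt}) genuinely gives a global minimizer in $\sL$ rather than only a critical point, which is why I spend the first step establishing strict convexity and coercivity of each $\varphi_i$ on $(0,\infty)$. Once that is in place, well-definedness and the bound $2\pgb\,\pga$ follow by direct inspection of the closed-form update, and no use of $\vx_n$ beyond nonnegativity of absolute values is required, which is crucial since the sequence $\{\vx_n\}$ has not yet been shown to be bounded at this stage.
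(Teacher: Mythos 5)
Your proposal is correct and follows essentially the same route as the paper's proof: the paper likewise reads well-definedness from the unique closed-form optimum (\refeq{eq:lambdaopt}) and obtains the bound $2\pgb\max_i\{(\pga)_{i,i}\}$ from positivity of the denominator, while you simply make explicit the decoupling into scalar problems and their strict convexity and coercivity. No gaps; your entrywise bound $2\pgb(\pga)_{i,i}$ is just the refined form of the paper's uniform bound.
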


\begin{proof}
We note first that for $b>0$, (\refeq{eq:fusionmodel}) has a unique attainable optimum with respect to $\vl\in\sL$ for every $\vx_n$, given by (\refeq{eq:lambdaopt}). Additionally, the denominator of (\refeq{eq:lambdaopt}) is always greater than zero for $\pga\in\sL$, thus the sequence $\{\vl_n\}_{n\in\mathbb{N}}$ is bounded from above by $2b\max_i\{(W)_{i,i}\}$. 
\end{proof}

In the following, we use Theorems \ref{thm:acsfconv} and \ref{thm:acsconv}, and Lemma \ref{lem:bound} to prove weak convergence of the ACS algorithm to the critical points of (\refeq{eq:fusionmodel}).

\begin{myprop}
The sequence $\{(\vx_n,\vl_n)\}_{n\in\mathbb{N}}$ obtained by applying Algorithm \ref{alg:acs} for minimizing (\refeq{eq:fusionmodel}), converges weakly across subsequences to critical points of (\refeq{eq:fusionmodel}).
\end{myprop}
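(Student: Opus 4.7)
The plan is to combine the boundedness result of Theorem \ref{prop:boundedness} with Theorems \ref{thm:acsfconv} and \ref{thm:acsconv}, invoking Lemma \ref{lem:bound} for the compactness argument, and then close the chain by appealing to Theorem \ref{thm:partopt} to identify partial optima with critical points. First, I would verify that both ACS subproblems are solvable: the $\vl$-subproblem has the unique closed-form minimiser (\refeq{eq:lambdaopt}), while the $\vx$-subproblem is a proper, convex, lower-semicontinuous $\tgv$--weighted-$L_1$ minimisation whose existence of minimisers follows by the direct method, using lower-semicontinuity and the coercivity of the weighted fidelity term for positive definite $\vl_{n+1}$. Since $E$ is bounded from below by Theorem \ref{prop:boundedness}, Theorem \ref{thm:acsfconv} then applies and the sequence $\{E(\vx_n,\vl_n)\}_{n\in\mathbb{N}}$ decreases monotonically and converges.

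Next, I would show that the iterates $\{(\vx_n,\vl_n)\}$ lie in a compact subset of $\sX\times\sL$, which is the hypothesis needed in Theorem \ref{thm:acsconv}. Lemma \ref{lem:bound} provides the pointwise upper bound $(\vl_n)_{i,i}\leq 2\pgb\,(\pga)_{i,i}$. A strictly positive lower bound on the diagonal entries of $\vl_n$ is forced by the monotone decay of $E$: if any $(\vl_n)_{i,i}$ approached zero, the term $-\pgb\log\det\vl_n$ would diverge to $+\infty$, contradicting $E(\vx_n,\vl_n)\leq E(\vx_0,\vl_0)$. With $\vl_n$ confined to a compact subset of $\sLmat$, the weighted fidelity term $\sum_k \|\vl_n(\vx-\dd_k)\|_1$ becomes coercive in $\vx$ uniformly in $n$, which combined again with the uniform upper bound on $E(\vx_n,\vl_n)$ yields boundedness of $\{\vx_n\}$. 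This places the joint iterates in a compact set of $\sX\times\sL$.

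Now I would invoke Theorem \ref{thm:acsconv}, which guarantees the existence of accumulation points. For the uniqueness hypothesis at any accumulation point $(\vx^{\ast},\vl^{\ast})$, the $\vl$-subproblem with $\vx=\vx^{\ast}$ fixed is strictly convex on $\sLmat$ (by strict convexity of $-\log\det$) and hence has the unique minimiser (\refeq{eq:lambdaopt}) evaluated at $\vx^{\ast}$. This verifies condition (2) of Theorem \ref{thm:acsconv}, so every accumulation point is a partial optimum of $E$. Since $B=\mathbb{R}^N\times\sLmat$ is open, $\ri(B)=B$, and Theorem \ref{thm:partopt} identifies such partial optima with stationary, i.e.\ critical, points of $E$. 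Combined with the compactness of the iterates, this delivers weak subsequential convergence to critical points.

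The main obstacle I expect is the compactness step: while the pointwise upper bound on $\vl_n$ is immediate from Lemma \ref{lem:bound}, the strictly positive lower bound on $\vl_n$ and the a~priori bound on $\vx_n$ both have to be teased out of the monotone decay of $E$ together with the $-\pgb\log\det$ barrier and the weighted fidelity term. Care is also needed in the $\vx$-subproblem, because $\tgv$ alone is not coercive; coercivity of the $\vx$-update is recovered only through the uniform positive definiteness of $\vl_{n+1}$, which is precisely what the hyperparameters $\pga$ and $\pgb$ enforce through the estimated $\vl$.
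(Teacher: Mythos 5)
Your proposal follows essentially the same route as the paper's proof: boundedness of $\{\vl_n\}$ from Lemma \ref{lem:bound}, monotone convergence of $\{E(\vx_n,\vl_n)\}$ from Theorem \ref{prop:boundedness} and Theorem \ref{thm:acsfconv}, accumulation points identified as partial optima via Theorem \ref{thm:acsconv}, and finally critical points via Theorem \ref{thm:partopt}. You in fact supply correct details the paper leaves implicit---the strictly positive lower bound on $\vl_n$ from the $-\pgb\log\det$ barrier and the uniform energy bound, the resulting coercivity argument giving boundedness of $\{\vx_n\}$ (which the paper merely asserts), and the verification of the uniqueness hypothesis in part~(2) of Theorem \ref{thm:acsconv} through strict convexity of the $\vl$-subproblem.
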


\begin{proof}
The sequence $\{\vl_n\}_{n\in\mathbb{N}}$  is bounded by Lemma \ref{lem:bound}. 
Hence, the sequence $\{(\vx_n,\vl_n)\}_{n\in\mathbb{N}}$ is bounded due to the boundedness of $\{\vx_n\}_{n\in\mathbb{N}}$ and $\{\vl_n\}_{n\in\mathbb{N}}$. By Bolzano-Weirstrass theorem $\{(\vx_n,\vl_n)\}_{n\in\mathbb{N}}$ has at least one accumulation point. 

By Theorem \ref{prop:boundedness} and Theorem \ref{thm:acsfconv}, the sequence $\{E(\vx_{n},\vl_n)\}_{n\in\mathbb{N}}$, generated by Algorithm \ref{alg:acs}, converges monotonically. Then, by Theorem \ref{thm:acsconv} all accumulation points have the same functional value and hence correspond to partial optima of (\refeq{eq:fusionmodel}). Finally, by Theorem \ref{thm:partopt} all partial optima correspond to critical points of (\refeq{eq:fusionmodel}), which proves the statement.
\end{proof}

We note that the optimal solution of $\vl$ at each iteration depends on the current value of $\vx_n$ and more specifically, on the coherence of $\vx_n$ with the data. Following the proof above, the same holds for the optimal solution $(\hat{\vx},\hat{\vl})$.

The solution of Iter~2 can be estimated using the PDHG algorithm, 
as discussed in Section \ref{ssec:pdhg}.

\subsection{Alternate minimization method}
AMA is another algorithm which can be used to solve biconvex problems (see \cite{Attouch2010}). Here we briefly review AMA and discuss its convergence for finding the stationary points of (\refeq{eq:fusionmodel}). 

\begin{myalg}[AMA] \label{alg:ama}
Choose initial estimate $(\vx_0,\vl_0)\in\sX\times\sL$. For every $n\geq 0$ iterate
\begin{description}
\item[Iter 1] $\vl_{n+1}\in \underset{\vl \in B_{\vx_{n}}}{\arg\min}\left\lbrace E(\vx_{n},\vl) + \frac{1}{2\nu_{n}}\norm{\vl-\vl_{n}}^2\right\rbrace$,
\item[Iter 2] $\vx_{n+1}\in \underset{\vx \in B_{\vl_{n+1}}}{\arg\min}\left\lbrace E(\vx,\vl_{n+1}) + \frac{1}{2\mu_{n}}\norm{\vx-\vx_{n}}^2\right\rbrace$,
\end{description}
with $\mu_n,\nu_n>0$ for all $n$. We observe that Algorithms \ref{alg:ama} and \ref{alg:acs} become equivalent for $\mu_{n},\nu_{n}\rightarrow\infty.$ 
\end{myalg}

Regarding the convergence of AMA for minimizing model (\refeq{eq:fusionmodel}), we appeal to the convergence analysis presented in \cite{Attouch2010}. In \cite{Attouch2010} 
Lipschitz continuity of the gradient of $H$ is required with respect to one of the variables. This is satisfied by (\refeq{eq:fusionmodel}) for the variable $\vl$. Hence, the AMA algorithm converges for minimizing model (\refeq{eq:fusionmodel}) \cite[Theorem 3.3]{Attouch2010}, given that the model satisfyies the Kurdyka-\L{}ojasiewicz inequality at the optimal point $(\hat{\vx},\hat{\vl})$.

\begin{mynote}
Model (\refeq{eq:fusionmodel}) is $\sqrt{N}$-semiconvex (see Proposition \ref{prop:semiconv}) hence choosing $\mu_n,\nu_n\leq\sqrt{N}$ makes the optimization problem convex. This fact can be used for selecting initial values $\mu_n,\nu_n$ which make the problem convex at the beginning and progressively increase in order to better approximate the original biconvex optimization problem.
\end{mynote}

Regarding the update of variable $\vl$, each element of its diagonal leads to the following quadratic problem 
\begin{equation}
(\vl)_{i,i}^{2}-{a_n}(\vl)_{i,i}-\pgb\nu_n=0,
\end{equation}
with
\begin{equation}
{a_n}=(\vl_n)_{i,i}  - \nu_{n}\left(\sum_{k=1}^{K}|(\vx_n)_i-(\dd_{k})_i|+\frac{1}{2}(\pga_{i,i})^{-1}\right),
\end{equation}
which has the following closed form solution
\begin{align}
(\vl_{n+1})_{i,i} &= \frac{1}{2}\left(a_n+\sqrt{a_n^2+4\pgb\nu_n} \right).
\end{align}

The updates of the variable $\vx$ can be estimated using the PDHG algorithm. 

\subsection{PDHG for spatially varying confidence values}\label{ssec:pdhg}
In this section we examine the application of the PDHG algorithm for minimizing problems with spatially varying fidelity weights and the conditions under which the series $\{\vx_n\}_{n\in\mathbb{N}}$ converges. Our analysis is based on monotone operator theory. We refer the reader to \cite{Bauschke2011,Condat2013} and the references therein for further details.

For the convenience of the reader we consider here the general formulation (\refeq{eq:genmodelnew}) which is typically used for the PDHG algorithm. Let us consider a Hilbert space $\sH$ and denote $\cG{\sH}$ the set of proper, lower semicontinuous, convex functions from $\sH$ to $\extR$. 
Additionally, let
\begin{equation}\label{eq:modelgeneral}
E(\vx,\vl) = F(\mK\vx)+\sum_{k}H(\vl(S\vx-\dd_k))+G(\vl),
\end{equation}
with:
\begin{itemize}
\item $S$ a selection operator which depends on the order of the $\tgv$ operator $\mK$. E.g. for TV regularization $S=Id$ and $\mK=\nabla$;
\item  $ F\in\cG{\sY}$ and $G\in\cG{\sL}$;
\item $H:\sX\times\sL \mapsto \mathbb{R}$ is proper, lower semicontinuous and biconvex in $(\vx,\vl)$; 
\item  $\mK:\sX\mapsto\sY$ a bounded linear operator with induced norm $
\norm{\mK} =\lbrace \norm{\mK\vx}\mid \vx\in\sX \mbox{ with } \norm{\vx}\leq1 \rbrace < \infty$.
\item All functions have closed-form resolvent operators or they can be solved efficiently with high precision.
\end{itemize}

The model (\refeq{eq:fusionmodel}) is of the general form (\refeq{eq:modelgeneral}), with 
\begin{subequations}
\begin{IEEEeqnarray}{rCl}
F(\mK\vx)&\coloneqq &\tgv^{l}_{\alpha}(\vx),\\
H(\vl(S\vx-\dd_k))&\coloneqq &\norm{\vl(\vx-\dd_{k})}_1,\\
 G(\vl)&\coloneqq &\frac{1}{2}\tr(\pga^{-1}\vl)-\pgb\log{\det\vl}.
\end{IEEEeqnarray}
\end{subequations}

We consider here that $\vl$ is fixed to the value $\bar{\vl}$ throughout the minimization. Applying the Legendre-Fenchel transformation to the functionals $F$ and $H$ and substituting them in (\refeq{eq:fusionmodel}) we obtain the following equivalent formulation 
 \begin{IEEEeqnarray}{rCl}\label{eq:primaldual}
E^{\star}(\vx,\vq,\{\vp_k\})&=
&\inner{\mK\vx}{\vq}+\sum_{k=1}^{K}\inner{\bar{\vl}(S\vx-\dd_k)}{\vp_k}-F^{\ast}(\vq)-\sum_{k=1}^{K} H^{\ast}(\vp_k),
\end{IEEEeqnarray}
with $q$ the dual variable corresponding to $F$ and $p_k$ the dual variables corresponding to $H$.

According to the Karush-Kuhn-Tucker conditions, the saddle points $\hat{\zeta}=(\hat{\vx},\hat{\vq},\hat{\vp}_{1},\ldots,\hat{\vp}_{K})$ of (\refeq{eq:primaldual}) satisfy the following monotone variational inclusion
\begin{equation}\label{eq:varincl}
\begin{pmatrix}
0\\0\\0\\[-3pt]\vdots\\0
\end{pmatrix}
\in
\begin{pmatrix}
\mK^{\top}\hat{\vq}+\sum_{k=1}^{K}S^{\top}\bar{\vl}\hat{\vp}_{k}\\
\partial F^{\ast}(\hat{\vq})-\mK\hat{\vx}\\
\partial H^{\ast}(\hat{\vp}_1)-\bar{\vl}(S\hat{\vx}-\dd_1)\\[-3pt]
\vdots\\
\partial H^{\ast}(\hat{\vp}_K)-\bar{\vl}(S\hat{\vx}-\dd_K)
\end{pmatrix}
,
\end{equation}
where each row corresponds to the optimality condition of each variable involved in the optimization. We assume that the saddle points of (\refeq{eq:primaldual}) form a non empty set. This assumption makes the previous condition also sufficient, hence every point satisfying (\refeq{eq:varincl}) is a saddle point of (\refeq{eq:primaldual}).

\begin{myalg}[PDHG for spatially varying fidelity weights]\label{alg:pdhg}
Choose an initial estimate $\vx_0\in\sX$. For every $n\geq 0$ iterate
\begin{IEEEeqnarray}{rCl}
\vx_{n+1} &=& \textstyle{\vx_{n}-\tau\left(\mK^{\top}\vq_{n}+\sum_{k=1}^{K} S^{\top}\bar{\vl}{\vp_{k}}_{n}\right)},\\
\tilde{\vx}_{n+1} &=& 2\vx_{n+1}-\vx_{n},\\
\vq_{n+1} &\in& (Id+\sigma_{\vq}\partial F^{\ast})^{-1}(q_{n}+\sigma_{\vq}\mK\tilde{\vx}_{n+1}),\\
{\vp_{k}}_{n+1} &\in& (Id+\sigma_{\vp}\partial H^{\ast})^{-1}({\vp_{k}}_{n}+\sigma_{\vp}\bar{\vl}(S\tilde{\vx}_{n+1}-\dd_k)),\ \mbox{for}\ k=\{1,\ldots,K\}.
\end{IEEEeqnarray}
\end{myalg}

The iterations of Algorithm \ref{alg:pdhg} can be rewritten as 
\begin{subequations}\label{eq:pdhhgoper}
\begin{equation}
-\begin{pmatrix}
0\\0\\\bar{\vl}\dd_{1}\\[-3pt] \vdots \\ \bar{\vl}\dd_{K}
\end{pmatrix}
\in
\begin{pmatrix}
\mK^{\!\top}\vq_{n+1}+\sum_{k=1}^{K}S^{\!\top}\bar{\vl}{\vp_{k}}_{n+1}\\[3pt]
-\mK\vx_{n+1}+\partial F^{\ast}(\vq_{n+1})\\
-\bar{\vl}S\vx_{n+1}+\partial H^{\ast}({\vp_{1}}_{n+1})\\[-3pt]
\vdots\\
-\bar{\vl}S\vx_{n+1}+\partial H^{\ast}({\vp_{K}}_{n+1})\\
\end{pmatrix}
+
P(\zeta_{n+1}-\zeta_n)
,
\end{equation}
with $\zeta\coloneqq(\vx,\vq,\vp_{1},\ldots,\vp_{K})$ and
\begin{equation}
P=
\begin{pmatrix}
\frac{1}{\tau}Id&-\mK^{\!\top}&-S^{\!\top}\bar{\vl}&\cdots&-S^{\!\top}\bar{\vl}\\
-\mK&\frac{1}{\sigma_{\vq}}Id&0&\cdots&0\\
-\bar{\vl}S&0&\frac{1}{\sigma_{\vp}}Id&\cdots&0\\[-3pt]
\vdots&\vdots&\vdots&\ddots&0\\
-\bar{\vl}S&0&0&\cdots&\frac{1}{\sigma_{\vp}}Id\\
\end{pmatrix},
\end{equation}
\end{subequations}
which can be represented in the following form
\begin{equation}\label{eq:varinc}
-B(\zeta_n)\in A(\zeta_{n+1})+P(\zeta_{n+1}-\zeta_{n}).
\end{equation}

Let $O_1\circ O_2$ denote the composition of operators $O_1$ and $O_2$. Solving with respect to $\zeta_{n+1}$, we obtain
\begin{equation}\label{eq:pdhgppa}
\zeta_{n+1} = (Id+P^{-1}\circ A)^{-1}\circ(Id-P^{-1}\circ B)(\zeta_{n}).
\end{equation}

\begin{mylem}\label{lem:steps}
Matrix $P$ is be bounded, self-adjoint, and strictly positive, namely $\inner{\zeta}{P\zeta}>0$, for every $\zeta\neq 0$ for
\begin{subequations}\label{eq:sqstep}
\begin{equation}
\sigma_{\vq}\tau \norm{\mK}^2 \leq \frac{1}{K+1},
\end{equation}
and
\begin{equation}
\sigma_{\vp}\tau \norm{\bar{\vl}}^2 \leq \frac{1}{K+1}.
\end{equation}
\end{subequations}
\end{mylem}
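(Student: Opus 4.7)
The plan is to verify three properties in turn: boundedness, self-adjointness, and strict positivity. The first two reduce to bookkeeping; all the content sits in the third.

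Boundedness is immediate because $P$ is assembled from scaled identities together with the bounded linear operators $\mK$, $S$, $\bar{\vl}$ and their adjoints; each summand has finite operator norm, hence so does $P$. For self-adjointness, I would check the blocks pairwise: the diagonal blocks are scaled identities; the $(1,2)$ block $-\mK^{\top}$ is the adjoint of the $(2,1)$ block $-\mK$; and, using that $\bar{\vl}\in\sLmat$ is diagonal and hence self-adjoint, the adjoint of $-S^{\top}\bar{\vl}$ is $-\bar{\vl}S$, matching the symmetric entries in the first column. The remaining off-diagonal blocks are zero.

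For positivity, the core calculation is to expand, for $\zeta=(\vx,\vq,\vp_{1},\ldots,\vp_{K})$,
\begin{IEEEeqnarray*}{rCl}
\inner{\zeta}{P\zeta} & = & \tfrac{1}{\tau}\norm{\vx}^{2}+\tfrac{1}{\sigma_{\vq}}\norm{\vq}^{2}+\tfrac{1}{\sigma_{\vp}}\sum_{k=1}^{K}\norm{\vp_{k}}^{2} \\
 & & -\,2\inner{\mK\vx}{\vq}-2\sum_{k=1}^{K}\inner{\bar{\vl}S\vx}{\vp_{k}},
\end{IEEEeqnarray*}
and then split $\tfrac{1}{\tau}\norm{\vx}^{2}=\sum_{j=0}^{K}\tfrac{1}{(K+1)\tau}\norm{\vx}^{2}$, pairing each of the $K+1$ pieces with exactly one cross term. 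The $\vq$-pair becomes
\[
\tfrac{1}{(K+1)\tau}\norm{\vx}^{2}-2\inner{\mK\vx}{\vq}+\tfrac{1}{\sigma_{\vq}}\norm{\vq}^{2},
\]
which, after bounding $|\inner{\mK\vx}{\vq}|\le\norm{\mK}\norm{\vx}\norm{\vq}$ by Cauchy--Schwarz, reduces to a scalar quadratic in $(\norm{\vx},\norm{\vq})$ whose non-negativity is equivalent to the discriminant inequality $\norm{\mK}^{2}\le\tfrac{1}{(K+1)\tau\sigma_{\vq}}$, that is, to \refeq{eq:sqstep}a. Running the same argument on each $\vp_{k}$-pair, with $\norm{\bar{\vl}S}\le\norm{\bar{\vl}}$, produces the companion bound \refeq{eq:sqstep}b. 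Summing the $K+1$ non-negative pieces yields $\inner{\zeta}{P\zeta}\ge 0$.

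The main obstacle I expect is promoting this non-negativity to the strict inequality claimed in the lemma. Under the non-strict bounds \refeq{eq:sqstep}, equality in every Cauchy--Schwarz step simultaneously would force the alignments $\vq=\sigma_{\vq}\mK\vx$ and $\vp_{k}=\sigma_{\vp}\bar{\vl}S\vx$ with $\vx$ being norm-attaining for both $\mK$ and $\bar{\vl}S$; substituting back shows the total can vanish only when $\vx=0$, which then forces $\zeta=0$. Alternatively, tightening \refeq{eq:sqstep} to a strict inequality yields, for free, a uniform margin of order $\bigl(1-(K+1)\sigma_{\vq}\tau\norm{\mK}^{2}\bigr)\norm{\zeta}^{2}$. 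The remaining verification is routine.
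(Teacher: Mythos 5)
Your argument is correct and follows essentially the same route as the paper's proof: expand $\inner{\zeta}{P\zeta}$, split $\tfrac{1}{\tau}\norm{\vx}^{2}$ into $K+1$ equal pieces paired with the cross terms, bound each cross term by Cauchy--Schwarz, and obtain (\refeq{eq:sqstep}) from the resulting scalar quadratics via Young's inequality (your discriminant condition). Your closing remark on strictness is in fact more careful than the paper, which simply asserts positivity of each grouped term; note only that when both bounds in (\refeq{eq:sqstep}) hold with exact equality and a single $\vx\neq 0$ is norm-attaining and aligned for both $\mK$ and $\bar{\vl}S$, the form is merely positive semi-definite, a degenerate caveat that applies to the lemma as stated and is removed by taking the inequalities strict, as you suggest.
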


\begin{proof}
$P$ is bounded, and self-adjoint by definition. Considering $\inner{\zeta}{P\zeta}$ we have
\begin{subequations}
\begin{IEEEeqnarray}{rCl}
\inner{\zeta}{P\zeta}&=& \frac{\norm{\vx}^{2}}{(K+1)\tau}-2\inner{\mK\vx}{\vq}+\frac{\norm{\vq}^{2}}{\sigma_q}\notag\\
&&+\sum_{k=1}^{K}\left(\frac{\norm{\vp_k}^{2}}{\sigma_p}-2\inner{\vl\vx}{\vp_{k}}+\frac{\norm{\vx}^{2}}{(K+1)\tau}\right)\\
&{\geq}&\frac{\norm{\vx}^{2}}{(K+1)\tau}-2\norm{\mK}\norm{\vx}\norm{\vq}
+\frac{\norm{\vq}^{2}}{\sigma_q}\notag\\
&&+\sum_{k=1}^{K}\left(\frac{\norm{\vp_k}^{2}}{\sigma_p}-2\norm{\vl}\norm{\vx}\norm{\vp_{k}}+\frac{\norm{\vx}^{2}}{(K+1)\tau}\right).\label{eq:stepslast}
\end{IEEEeqnarray}
\end{subequations}
For $\inner{\zeta}{P\zeta}$ to be positive, we require that all the terms in parentheses in (\refeq{eq:stepslast}) are positive. Using Young's inequality we recover (\refeq{eq:sqstep}).
\end{proof}

\begin{myprop}
Let $A$ and $B$ be the operators  defined in (\refeq{eq:pdhhgoper}). If (\refeq{eq:sqstep}) is satisfied, Algorithm \ref{alg:pdhg} converges to the zeros of the $A+B$ operator, namely $\zer(A+B)$.
\end{myprop}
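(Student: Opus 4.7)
The strategy is to recognize the iteration (\refeq{eq:pdhgppa}) as a preconditioned proximal point step and, after passing to the inner product induced by $P$, to apply the classical convergence theory of the proximal point algorithm for maximal monotone operators. The first observation is that $B$, as it appears in (\refeq{eq:pdhhgoper}), is the constant map $\zeta\mapsto(0,0,\bar{\vl}\dd_{1},\ldots,\bar{\vl}\dd_{K})^{\!\top}$; consequently $B(\zeta_{n+1})=B(\zeta_n)=:b$, and (\refeq{eq:pdhgppa}) can be equivalently rewritten as the implicit inclusion
\begin{equation*}
-b \in A(\zeta_{n+1})+P(\zeta_{n+1}-\zeta_n),
\end{equation*}
which is precisely the proximal point iteration for $A+B$, preconditioned by $P$, with unit step size.

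The second step is to verify that $A$ is maximal monotone on the product Hilbert space $\sX\times\sY\times\sL^{K}$. The operator decomposes as $A=L+M$, with $L$ the bounded skew-adjoint linear operator coupling $\vx$ to $(\vq,\vp_1,\ldots,\vp_K)$ through $\mK$ and $\bar{\vl}S$, and $M=\diag(0,\partial F^{\ast},\partial H^{\ast},\ldots,\partial H^{\ast})$. Since $F\in\cG{\sY}$ and, for fixed $\bar{\vl}$, $H(\bar{\vl}\,\cdot\,)$ is proper, lower semicontinuous and convex by the assumptions on (\refeq{eq:modelgeneral}), their Legendre--Fenchel conjugates are in $\cG{\cdot}$ and hence $\partial F^{\ast}$ and $\partial H^{\ast}$ are maximal monotone by Rockafellar's theorem; their direct sum $M$ is maximal monotone. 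The skew-adjoint $L$ satisfies $\inner{\zeta}{L\zeta}=0$ and is single-valued and continuous on the whole space, so by the classical sum rule for maximal monotone operators (see \cite{Bauschke2011}) $A=L+M$ is maximal monotone. Adding the constant $B$ preserves monotonicity and range, and the standing assumption that the set of saddle points of (\refeq{eq:primaldual}) is non-empty translates, via (\refeq{eq:varincl}), to $\zer(A+B)\neq\emptyset$.

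The third step exploits Lemma \ref{lem:steps}: under the step-size conditions (\refeq{eq:sqstep}), $P$ is bounded, self-adjoint and strictly positive, so $\inner{\cdot}{\cdot}_{P}\coloneqq\inner{\cdot}{P\cdot}$ is an inner product whose induced norm is equivalent to the original one. A direct check shows that for any monotone operator $T$, the preconditioned operator $P^{-1}T$ is maximal monotone with respect to $\inner{\cdot}{\cdot}_{P}$, since
\begin{equation*}
\inner{\zeta-\eta}{P^{-1}u-P^{-1}v}_{P}=\inner{\zeta-\eta}{u-v}\geq 0,
\end{equation*}
and maximality is preserved under positive self-adjoint bijective preconditioning. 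Applied to $T=A+B$, the iteration written above becomes the standard proximal point algorithm for $P^{-1}(A+B)$ in the $P$-metric with unit step. The classical convergence theorem for the proximal point algorithm (e.g.\ \cite{Bauschke2011}, Theorem~23.41) then yields weak convergence of $\{\zeta_n\}_{n\in\mathbb{N}}$ to some $\hat{\zeta}\in\zer(P^{-1}(A+B))=\zer(A+B)$, which is strong convergence given the finite-dimensional setting of the model.

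The main obstacle I anticipate is not any of the individual monotone-operator facts, which are standard, but rather the bookkeeping that certifies the equivalence between the explicit four-line update of Algorithm \ref{alg:pdhg} (comprising the primal step, the extrapolation $\tilde{\vx}_{n+1}=2\vx_{n+1}-\vx_n$, and the $K+1$ resolvent steps in the dual variables) and the single implicit inclusion displayed above. Tracking how the extrapolation exactly cancels the cross terms produced by $P$ so that the combined update coincides with the $P$-preconditioned resolvent of $A$, and verifying that nothing about the argument requires $B$ to be Lipschitz or cocoercive (the constancy of $B$ allowing the forward step to collapse into a mere translation), are the delicate points; once this identification is made precise, the convergence conclusion is immediate.
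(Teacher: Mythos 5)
Your proposal is correct and follows essentially the same route as the paper: you interpret the update (\refeq{eq:pdhgppa}) as a $P$-preconditioned proximal point iteration, invoke Lemma \ref{lem:steps} for the boundedness, self-adjointness and strict positivity of $P$, and establish maximal monotonicity of $A$ by splitting it into the subdifferential part $\diag(0,\partial F^{\ast},\partial H^{\ast},\ldots,\partial H^{\ast})$ and the bounded skew linear coupling operator, then applying the standard sum theorem --- exactly the argument the paper makes via \cite{Condat2013} and \cite{Bauschke2011}. Your additional remarks (that $B$ is constant so the forward step is a translation, the passage to the $P$-inner product, and nonemptiness of $\zer(A+B)$ from the saddle-point assumption) only make explicit steps the paper delegates to its citations.
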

\begin{proof}
Equation (\refeq{eq:pdhgppa}) is an instance of the proximal point algorithm as described in \cite{Condat2013}. Hence, Algorithm \ref{alg:pdhg} converges to $\zer(A+B)$ operators if $A$ is maximally monotone, and matrix $P$ is bounded, self-adjoint, and strictly positive. 
The latter follows from Lemma \ref{lem:steps}. 

To show that $A$ is maximally monotone we follow \cite{Condat2013}. The operator $\zeta\mapsto \emptyset \times \partial F^{\ast}(x) \times \partial H^{\ast}(\vp_1)\times\cdots \times \partial H^{\ast}(\vp_K)$ is maximally monotone by Theorem 20.40, Corollary 16.24, Propositions 20.22 and 20.23 of \cite{Bauschke2011}. Moreover, the skew operator
\begin{equation}
\zeta \mapsto (M^{\!\top}\vq+\sum_{k=1}^{K}S^{\!\top}\bar{\vl}\vp_{k},-M\vx,-\bar{\vl}S\vx,\ldots,-\bar{\vl}S\vx),
\end{equation} 
is maximally monotone by \cite[Example 20.30]{Bauschke2011} and has full domain. Hence, by \cite[Corollary 24.4(i)]{Bauschke2011} $A$ is maximally monotone.
\end{proof}

\subsection{PDHG for biconvex problems}
We consider now the biconvex problem of minimizing (\refeq{eq:fusionmodel}) with respect to the joint variable $(\vx,\vl)$, using an extension of the PDHG algorithm for biconvex problems.

\begin{myalg}[PDHG for biconvex problems]\label{alg:pdhgbiconv}
Choose an initial estimate $(\vx_0,\vl_0)\in\sX\times\sL$. For every $n\geq 0$ iterate
\begin{equation}
\begin{array}{lcl}
\vl_{n+1} &\in& (Id+\tau_{\vl}\partial G)^{-1}\left(\vl_{n}-\tau_{\vl}\sum_{k=1}^{K}\diag((S\vx_{n}-\dd_k){\vp_{k}^{\top}}_n) \right),\\
\vx_{n+1} &=& \vx_{n}-\tau_{\vx}\left(\mK^{\top}\vq_{n}+\sum_{k=1}^{K}S^{\top}\vl_{n+1}{\vp_{k}}_{n}\right),\\
\bar{\vx}_{n+1} &=& 2\vx_{n+1}-\vx_{n},\\
\vq_{n+1} &\in& (Id+\sigma_{\vq}\partial F^{\ast})^{-1}(q_{n}+\sigma_{\vq}\mK\bar{\vx}_{n+1}),\\
{\vp_{k}}_{n+1} &\in& (Id+\sigma_{\vp}\partial H^{\ast})^{-1}({\vp_{k}}_{n}+\sigma_{\vp}\bar{\vl}(S\tilde{\vx}_{n+1}-\dd_k)),\,k=\{1,\ldots,K\}.
\end{array}
\end{equation}
\end{myalg}

The iterations of Algorithm \ref{alg:pdhgbiconv} can be written in the form of (\refeq{eq:varinc}) as before. 

There are two important differences in this case with respect to Algorithm \ref{alg:pdhg}. The first, is that the matrix $P$ is changing at each iteration. It is still possible to guarantee that $P_{n+1}$ is strictly positive at every iteration by considering step sizes that vary in each iteration according to (\refeq{eq:sqstep}). 
The second, and more important difference, is that in this case the operator $A$ is not monotone, and as a result the analysis based on proximal point methods cannot be directly applied to prove that the algorithm converges. We note though that if we find experimentally that the sequences  $\{\vl_{n}\}_{n\in\mathbb{N}}$, $\{\vx_{n}\}_{n\in\mathbb{N}}$, $\{\vq_{n}\}_{n\in\mathbb{N}}$ and $\{{\vp_{k}}_{n}\}_{n\in\mathbb{N}}$ remain bounded and additionally $\norm{\vl_{n+1}-\vl_{n}}\rightarrow 0$, $\norm{\vx_{n+1}-\vx_{n}}\rightarrow 0$, $\norm{\vq_{n+1}-\vq_{n}}\rightarrow 0$ and $\norm{{\vp_{k}}_{n+1}-{\vp_{k}}_{n}}\rightarrow 0$, then the algorithm converges to critical points (see \cite{Mollenhoff2015,Esser2014}).

\section{Results} \label{sec:results}

In this section we present numerical results, demonstrating the performance of the proposed confidence driven TGV regularization model. 
We consider depth image fusion as an application domain for evaluating the confidence driven fusion process. 

First, we demonstrate numerically relevant properties of the proposed model using synthetic data, highlighting the well-foundedness of the point-wise confidence operator. 
Then, we thoroughly evaluate the fusion performance of our model using synthetic datasets comprising  several 3D models of objects and urban landscapes. 
Finally, we evaluate our model on real data using a publicly available dataset. 
The datasets used for the evaluation of the proposed model are provided at \url{www.diag.uniroma1.it/~alcor/site/index.php/software.html}. An implementation of the proposed model for the fusion of depth images in MATLAB and CUDA is available at \url{www.github.com/alcor-vision/confidence-fusion}.

\subsection{Numerical results} \label{ssec:numres}
We illustrate here the main properties of (\refeq{eq:fusionmodel}), via  numerical results. We considering the effects on a single depth image. In the case of uniform confidence values $\vl=cI$ with $c>0$, the model reduces to the $\tgv^{l}$-~L1 model. This model, for $l=0$, has been thoroughly examined in the literature (see \cite{Strong2003,Nikolova2004,Chan2005,Chambolle2010}). Here we are particularly interested in the relation of the confidence values with the scale of the imaged objects.

This relation has been examined in \cite{Strong2003} for the original ROF model \cite{Rudin1992}, and in \cite{Chan2005} for the $\tv$-L1 model. Indeed, Chan and Esedoglu in \cite{Chan2005} argue that the regularization of an image using the $\tv$-L1 model leads small scale objects to suddenly disappear in relation to the value of $c$. In particular, structures are affected independently of their contrast values, as opposed to the original ROF model where they start to lose contrast as  $c$ becomes smaller than a critical value. This observation justifies the use of the L1 fidelity term in (\refeq{eq:fusionmodel}) for the case of depth images, as changes of contrast correspond to distortions of the actual depth values.

\begin{figure}[th!]%
\centering
\subfloat[{ Original}]
{\includegraphics[width=0.17\textwidth]{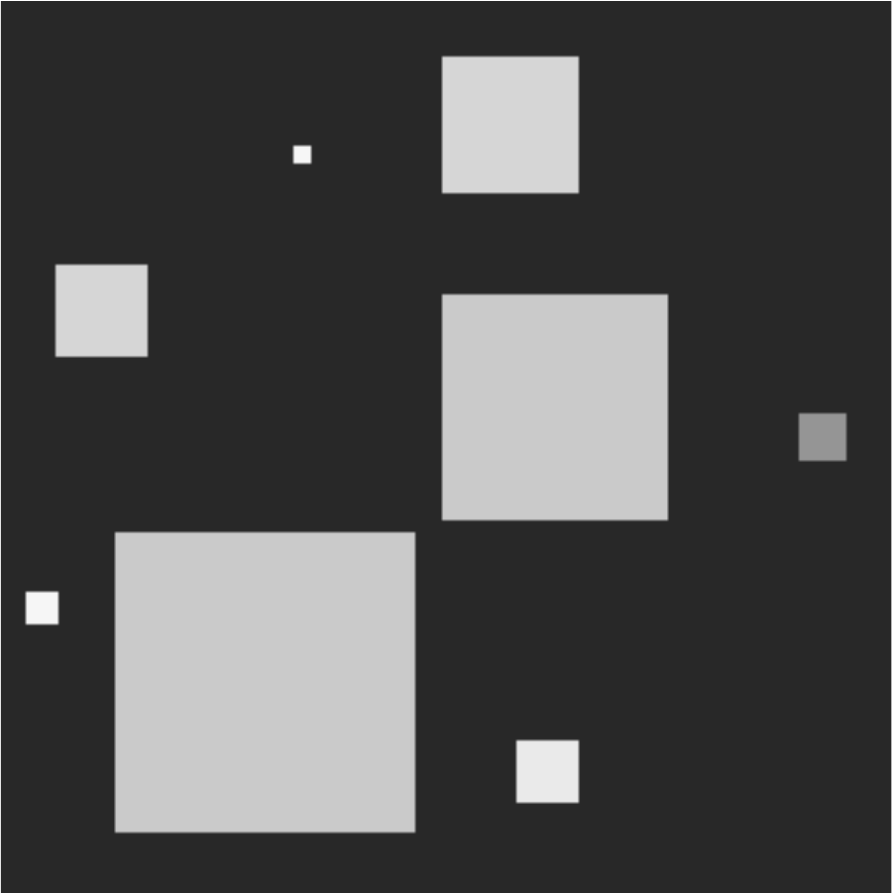}}\hfil
\subfloat[{ $c=1.0$}]
{\includegraphics[width=0.17\textwidth]{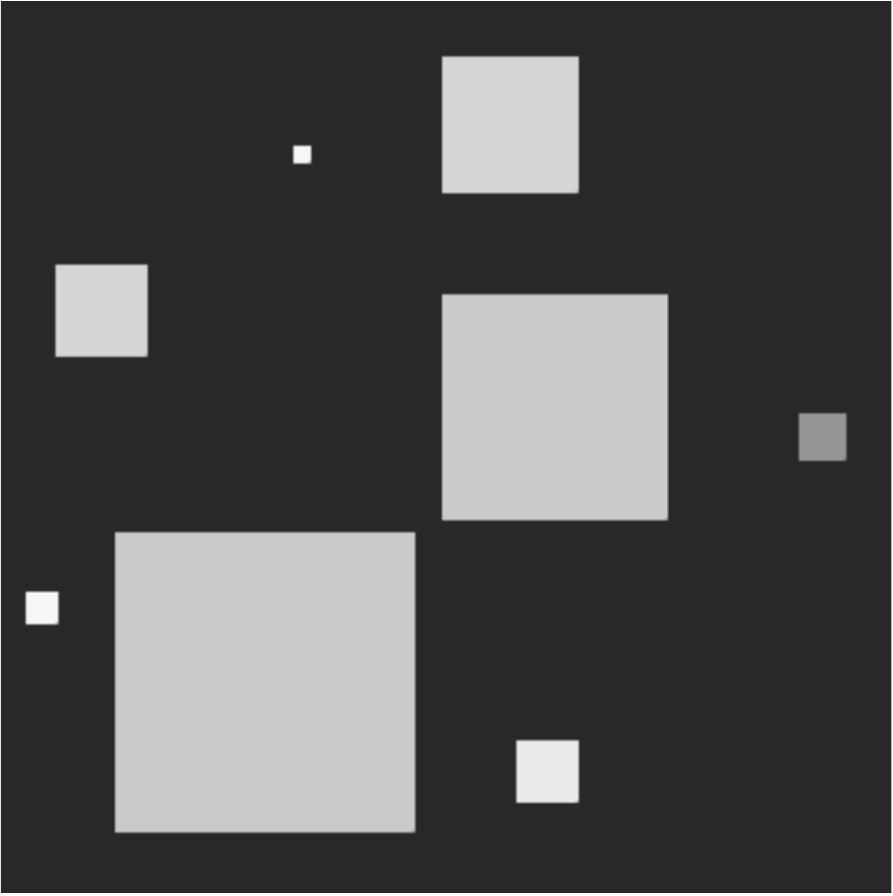}}\hfil
\subfloat[{ $c=0.6$}]
{\includegraphics[width=0.17\textwidth]{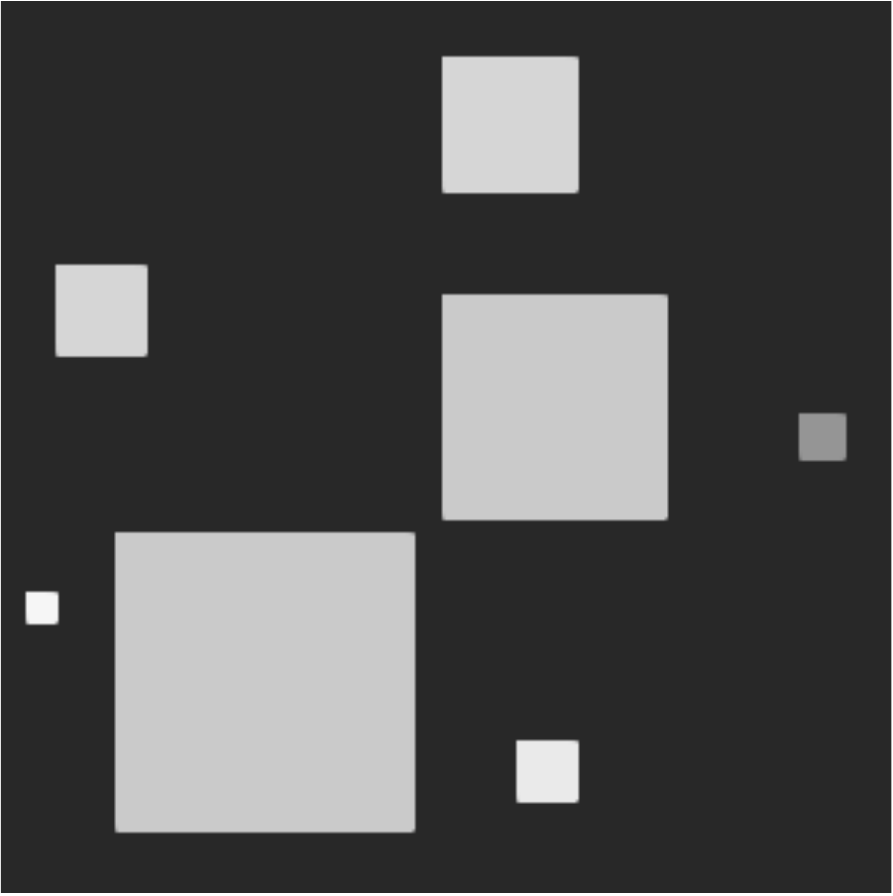}}\hfil
\subfloat[{ $c=0.3$}]
{\includegraphics[width=0.17\textwidth]{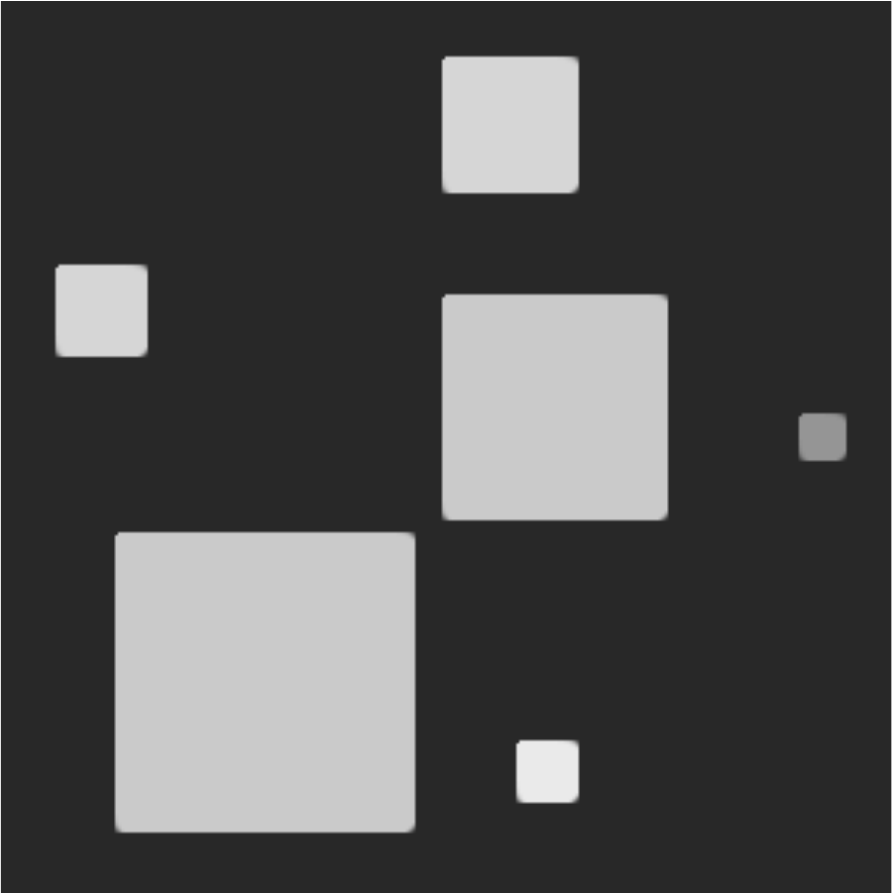}}\hfil
\subfloat[{ $c=0.2$}]
{\includegraphics[width=0.17\textwidth]{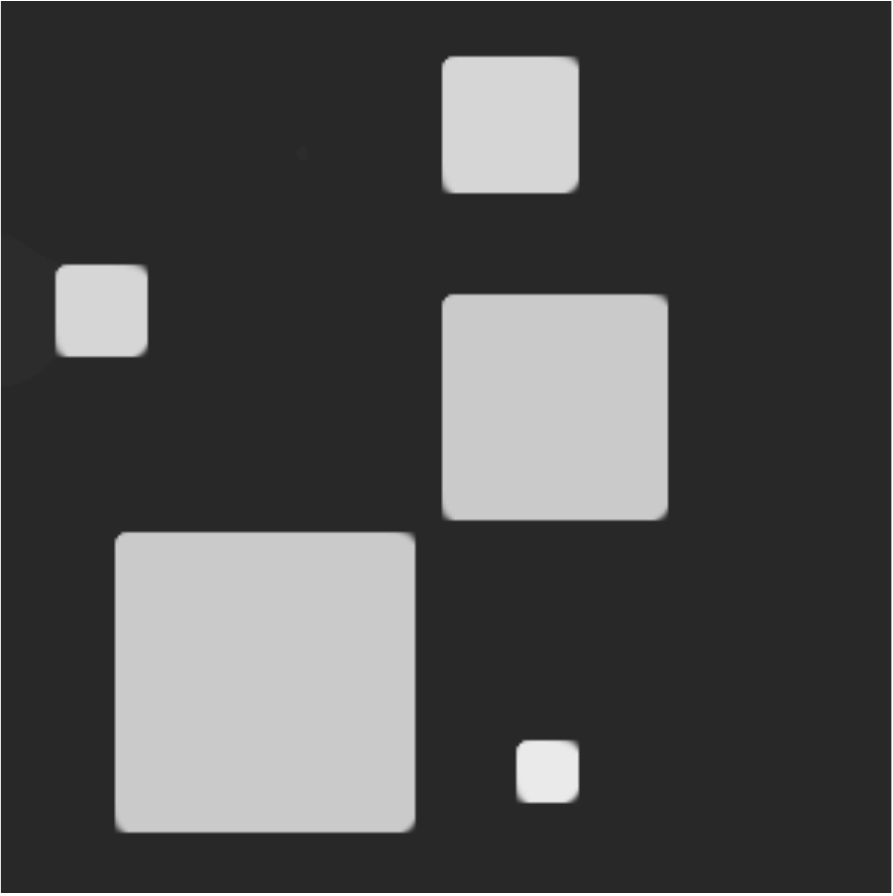}}\\
\subfloat[{ $c_{\Omega/B_1}{=}0.05$}]
{\includegraphics[width=0.25\textwidth]{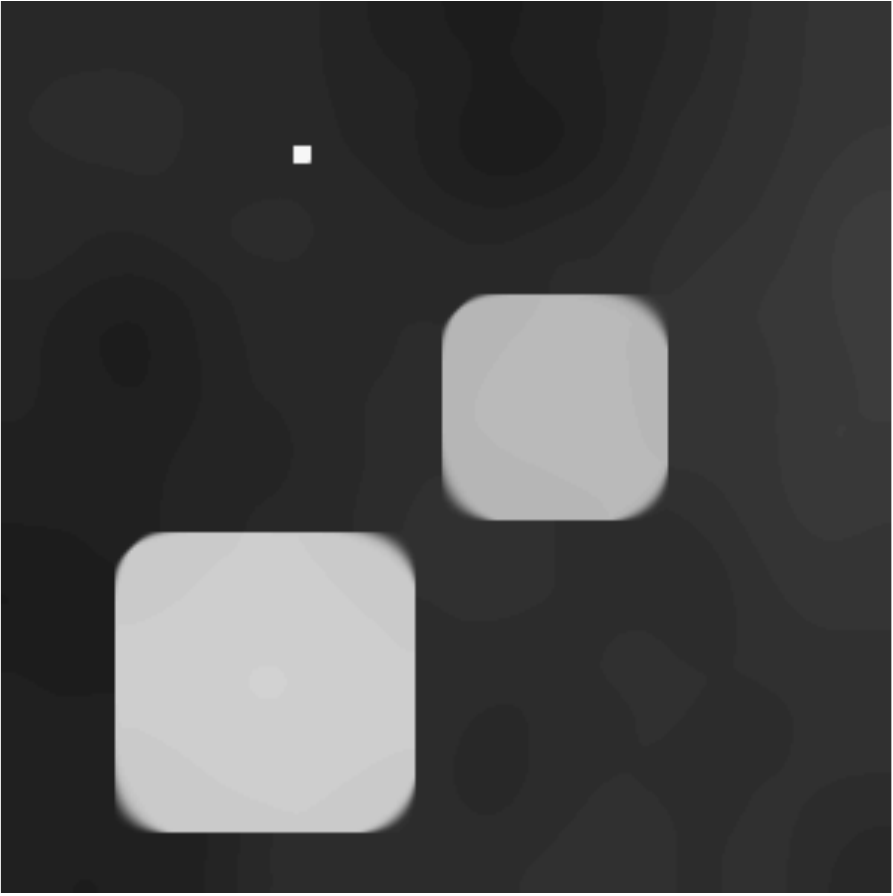}}\hfil
\subfloat[{ $c_{B_3}{=}0.2$}]
{\includegraphics[width=0.25\textwidth]{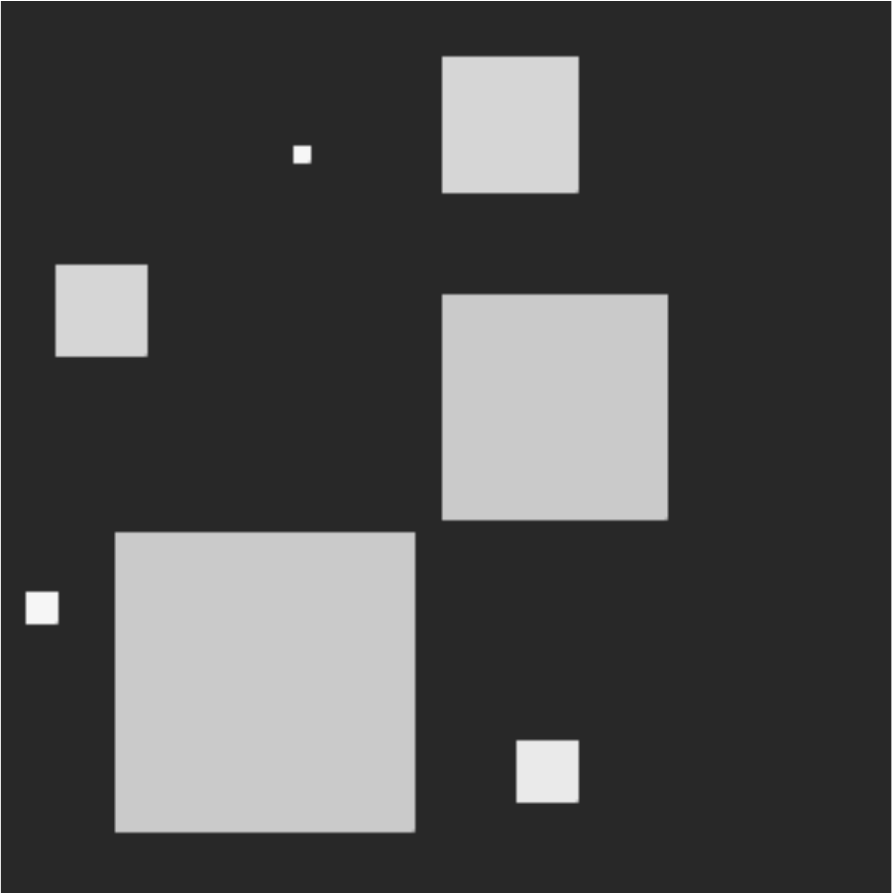}}\\
\caption[Adaptive regularization effects]{Scale space for uniform confidence. (b-e) structures suddenly disappear for different critical values of {$c$} based on their scale; (f) regularization with {$c=1.0$} for the region corresponding to the smallest box {$B_1$} (top-left) and {$c=0.05$} everywhere else; (g) regularization with {$c=0.2$} for the region corresponding to the third smallest box {$B_3$} (middle-right) and {$c=1.0$} everywhere else.} \label{fig:reg_uniform}
\end{figure}

\begin{figure}[th!]%
\centering
\subfloat[{ Original}]
{\includegraphics[width=0.3\textwidth]{orig-crop}}\hfil
\subfloat[{ Noisy Image}]
{\includegraphics[width=0.3\textwidth]{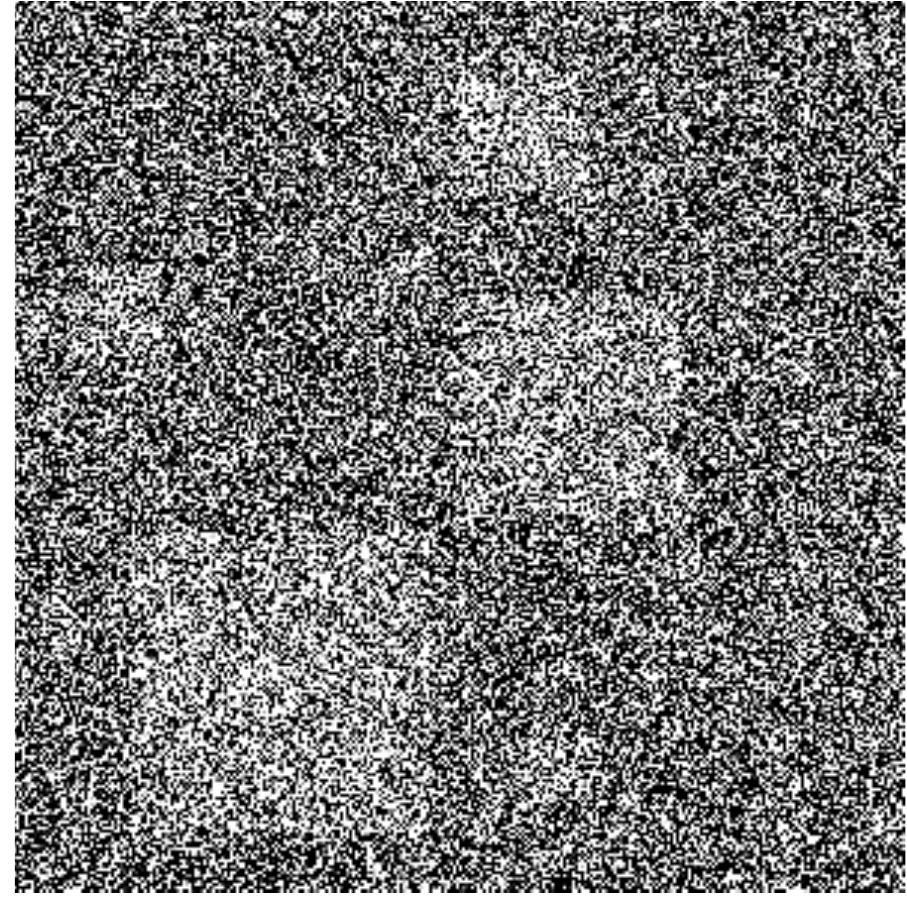}}\\
\subfloat[{ $c=1.0$}]
{\includegraphics[width=0.22\textwidth]{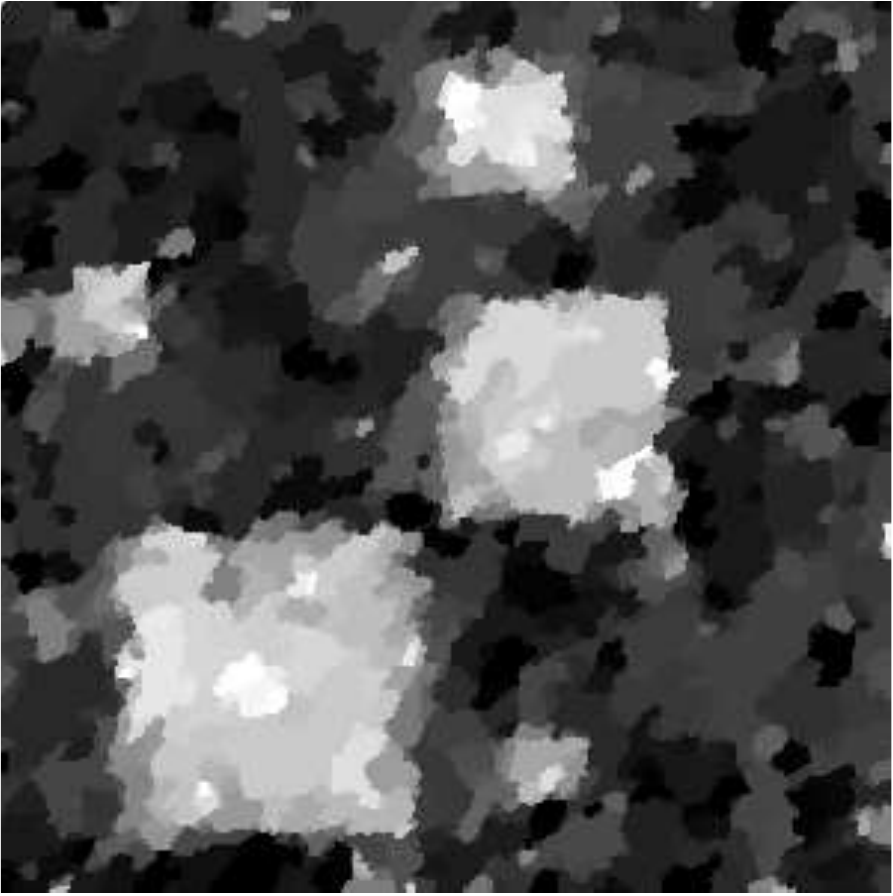}}\hfil
\subfloat[{ $c=0.7$}]
{\includegraphics[width=0.22\textwidth]{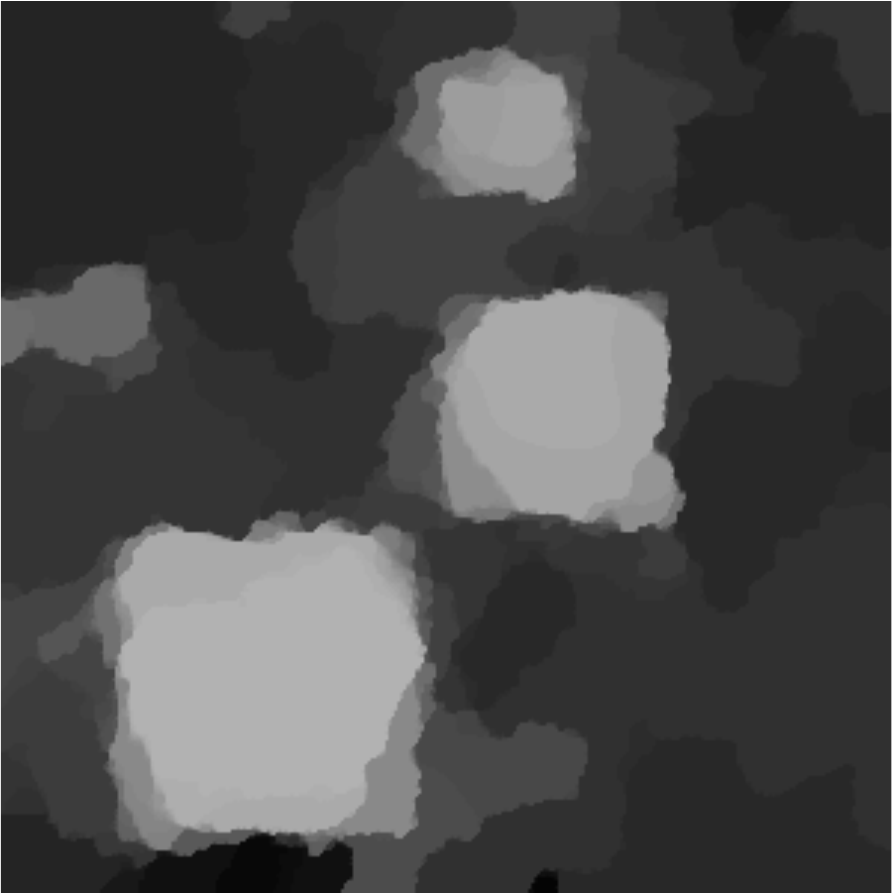}}\hfil
\subfloat[{ $c=0.5$}]
{\includegraphics[width=0.22\textwidth]{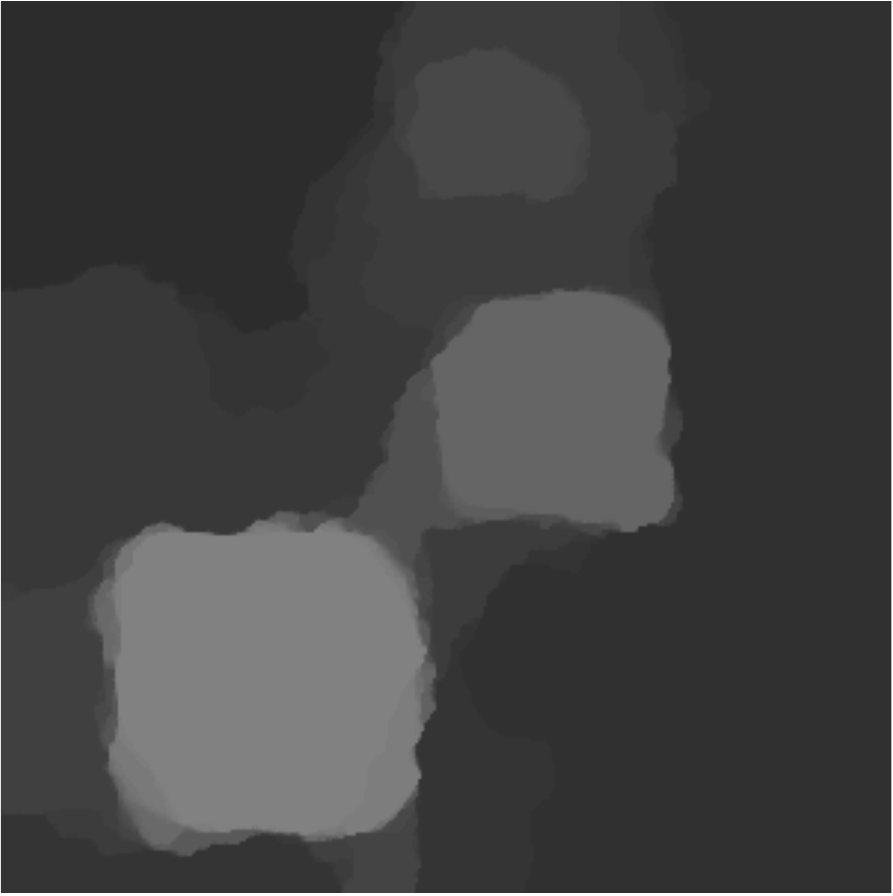}}\\
\subfloat[{ ${\pgb\pga=I}$}]
{\includegraphics[width=0.22\textwidth]{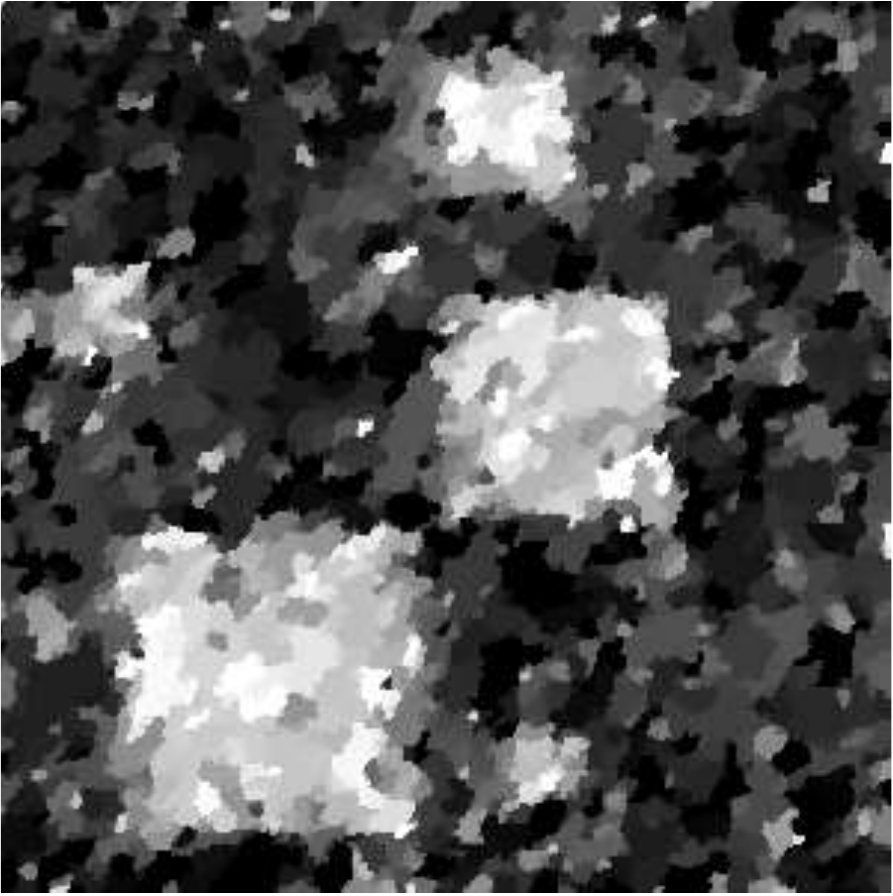}}\hfil
\subfloat[{ ${\pgb\pga=0.7\,I}$}]
{\includegraphics[width=0.22\textwidth]{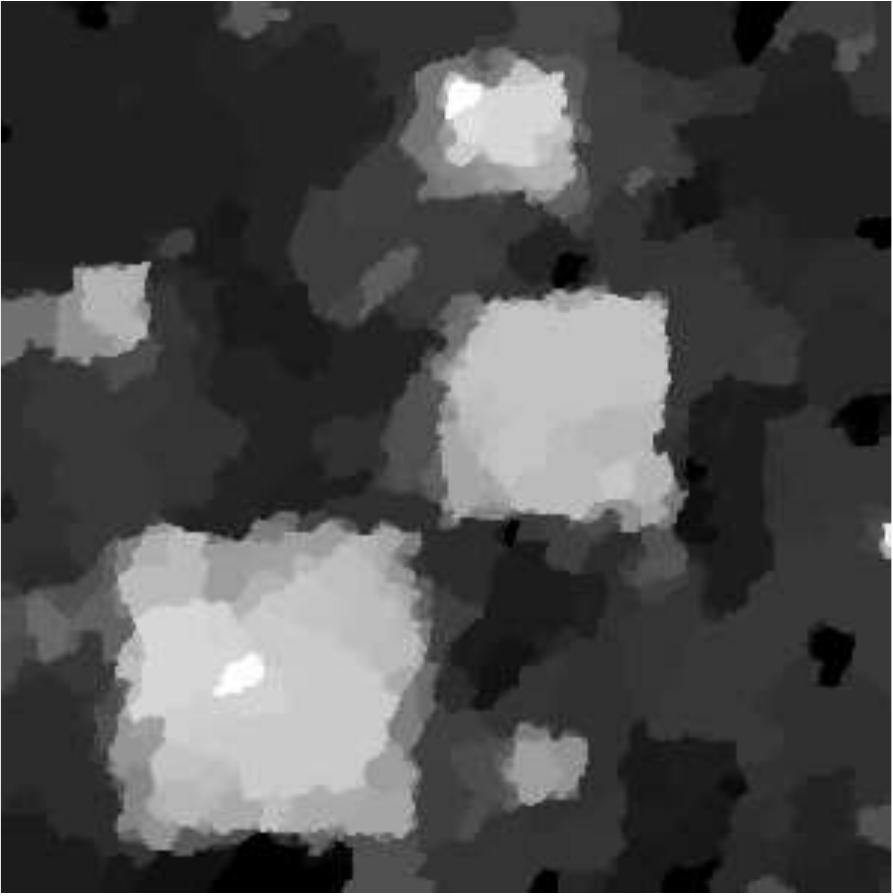}}\hfil
\subfloat[{ ${\pgb\pga=0.5\,I}$}]
{\includegraphics[width=0.22\textwidth]{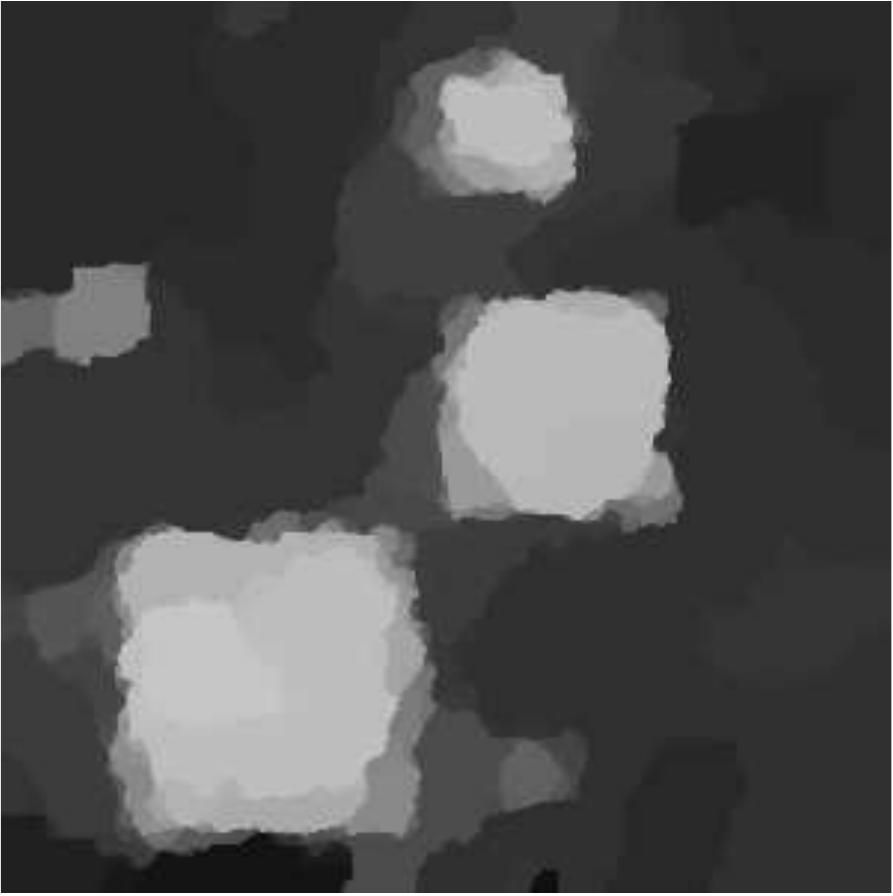}}\\
\caption[Adaptive fusion results for images degraded by point-wise Laplace noise]{Fusion results for images degraded by point-wise Laplace noise; (c-e) for uniform confidence values; (f-h) for estimated confidence values with different hyper-parameters.} \label{fig:reg_noisy}
\end{figure}

The results in Figure~\ref{fig:reg_uniform} show how  confidence values can affect  imaged objects, according to their scale. Let us name $B_1$ the smallest box  on the top-left and $B_2$ the third smallest box on the right. 
One can notice in panels (b)-(e) of Figure~\ref{fig:reg_uniform} that areas suddenly disappear as the uniform confidence value decreases, based on their size and regardless of their actual depth values. Notice in particular that both $B_1$ and $B_2$ disappear for decreasing values of $c$. 
The results in panels (f)-(g) of Figure~\ref{fig:reg_uniform} show the effects of the spatially adaptive regularization. 
Using spatially varying confidence values, the regularization is locally adapted resulting in smaller scale structures with high confidence values to survive excessive regularization, 
and, conversely, large scale structures with low confidence to disappear even when moderate regularization is applied. 
The results in Figure~\ref{fig:reg_noisy} show the difference between uniform and spatially adaptive confidence for depth fusion in the presence of Laplace noise.

The same considerations hold for higher order $\tgv$ regularization, with the only difference that signals of higher order piecewise smoothness (e.g. affine, quadratic etc.) are exactly modeled in this case. This alleviates the well known `stair-casing' effects of $\tv$ regularization. 

Summarizing, we see that the proposed model is very effective and versatile for the fusion of depth maps. In fact, it allows for a point-wise median-like estimation of the depth, while at the same time it ensures adaptive regularization according to confidence values which depend on the data. 

\subsection{Depth Image Fusion}
We consider $K$ cameras. 
Let $R_k$ be the orientation and $\mathbf{t}_k$ the position of the $k$-th camera with respect to a global reference frame, with $k\in\{1,\ldots,K\}$.
Then, each camera pose is represented by the homogeneous transformation
\begin{equation}
T_k=\begin{pmatrix}{R}_k&\mathbf{t}_k\\ 
			0&1
		\end{pmatrix}\in SE(3),\,k=\{1,\ldots, K\}.
\end{equation}

We consider that the scene is projected to the image plane according to the pinhole camera model. Thus,  a camera matrix defined as $\mP_{k} = \mA\mP T_{k}$, with $\mA$ the camera calibration matrix, and $\mP=[I_{3\times 3},0]$ the standard projection matrix,  corresponds to each camera pose. 

Let $\{\mP_{k}\}_{k=1}^{K}$ be a set of camera matrices and $(u,v)^{\top}=\vu\in\Omega\subseteq\mathbb{R}^{2}$ the spatial variable in the image domain. 
We denote the corresponding depth images as $(\dd_{1}, \ldots, \dd_{K})$, with $\dd_k : \Omega \mapsto (0,+\infty)$. 

Considering a reference camera $\mP_{r}$ 
we denote $\{\dd_{k}^{r}\}_{k=1}^{K}$ 
the depth images 
reprojected to the camera $\mP_{r}$. 
The reprojection process from camera $P_k$, $k=1,\ldots,K$ to the reference camera $P_r$ is defined as follows. Note first that back-projecting the depth map we obtain a 2.5D surface. This surface can be subsequently projected in the reference view, while the pointwise distance of the reference camera from the back-projected surface forms the reprojected depth map. Let $ R_{k}^{r} $ and $\mathbf{t}_{k}^{r}$ denote the relative rotation and translation of the frame $k$ to the frame $r$ respectively. 
Each point of the surface, expressed in the frame of the reference view, is given by the linear mapping:
\begin{equation}
\widetilde{X}(\vu) = \dd_{k}(\vu) R_k^{r} \mA^{-1} \begin{bmatrix}\vu\\1\end{bmatrix} + \mathbf{t}_k^{r}.
\end{equation}
These points are imaged in position $\vu' = \mA \widetilde{X}(\vu)$ on the image plane of the reference camera. 
Let $S_{\vu'}=\{ \vu\mid\mA \widetilde{X}(\vu)=\vu' \} $ and $e_3=(0,0,1)^{\top}$. The reprojected depth map is given by $\dd^{r}_k(\vu') = \underset{\vu \in S_{\vu'}}{\min}e_3^{\top}\widetilde{X}(\vu)$.

As a result, at each position of the reference depth image we have up to $K$ depth observations. 
The fusion process combines these depth observations, taking into account corresponding confidence values, in order to produce a more accurate estimation of the real depth values.

\subsubsection{Heuristic Confidence Estimation} \label{ssec:heur}
We discuss here possible confidence measures for the case of depth image fusion. These heuristic confidence values can be used both as baseline methods for comparison with our complete model, as well as to compute the hyper-parameters  $\pga$ and $\pgb$ of our model. The heuristic confidence measures discussed here are based on the structure and the appearance of the scene. 

First, we consider the geometry of the scene. The depth confidence at an image position $\mathbf{u}$ depends on the angle between the viewing ray, given by $\mathbf{r}(\mathbf{u})=\mA^{-1}\mathbf{u}/\|\mA^{-1}\mathbf{u}\|$, and the normal of the surface back-projected at $\mathbf{u}$. 

Letting $ \nm:\Omega \mapsto S^{2}$ the normal map corresponding to depth image $\dd$, with $S^{2}$ the unit sphere embedded in $\mathbb{R}^{3}$, the confidence values are given by
\begin{equation}
	(\vl)_{\vu,\vu} =  \nm(\vu)\cdot\mathbf{r}(\vu)
\end{equation}

Denoting $\proj_{S^2}(\cdot)$  the projection operator on the unit sphere $S^2$ and $\mathrm{D}_{u}^{+},\mathrm{D}_{v}^{+}$ the forward differences with respect to directions $u$ and $v$, the normal map can be estimated as:
\begin{equation}
	\nm(u,v) = \proj_{S^2} \left(\mathrm{D}_{u}^{+}\left(\dd\right)\times \mathrm{D}_{v}^{+}\left(\dd\right)\right).
\end{equation}

The second heuristic confidence is based on the appearance of the scene and it is based on the observation that image edges often correspond to occlusions and thus depth discontinuities. This suggests a weighting scheme which gives higher confidence on the regions around the edges in order to maintain clean edges.

For simplicity we consider here a linear weighting based on the gradient of the intensity image $I$, namely 
\begin{equation}\label{eq:edgesconf}
	\vl = \alpha \|G_{\sigma} \ast \nabla I\|^{\beta},
\end{equation}
with $\alpha,\beta$ parameters suitably shaping the confidence values, $G_{\sigma}$ a Gaussian filter with standard deviation $\sigma$ and $\ast$ the convolution operator. The Gaussian filter is useful to control the width of the affected region around the image edges. 
A similar  weighting scheme has been proposed in \cite{Newcombe2011b}, though the weights were applied, via an exponential function, to the regularization term rather than to the fidelity term. Another related weighting measure based on the Nahel-Enkelmann operator, also applied on the regularization term, was proposed in \cite{Kuschk2013}, which also uses the images of the scene. 

We note that the geometric confidence 
tends to assign low confidence values to regions which are orthogonal to the view direction, which often correspond to regions near the edges. The two approaches can be combined to estimate confidence values with desired properties.

\subsubsection{Synthetic dataset}
We performed an extensive evaluation of the proposed model for the fusion of depth images using synthetic data. We have considered two different classes of 3D models: 1) ordinary small to medium scale objects and 2) models of urban landscapes and buildings. For the objects dataset, we considered the models Bunny, Dragon, Happy Buddha, and Armadillo  from the Stanford 3D scanning repository \cite{Turk94,Curless96,Krishnamurthy96} and the objects Chef, Chicken, Parasaurolophus and T-rex from \cite{Mian2006}. The urban landscapes dataset contains four models taken from the Sketch-up 3D warehouse. 

The two datasets have different characteristics. More specifically, the small and medium scale objects are made by higher-order polynomial terms due to the varying curvature of their surface, while the resulting depth images contain only a small amount of sharp discontinuities. On the other hand, urban landscapes are typically described by lower-order polynomial terms while the resulting depth images contain a significant amount of sharp discontinuities. The motion of the camera also differs (orbiting vs pure translation motion respectively), which affects the occluded regions of the depth images.

\paragraph*{{\bf Objects}}
We compute depth images corresponding to each of the objects by considering a virtual camera with parameters $(f,c_u,c_v)=(576,320,240)\,[px]$ that orbits around the object at a distance of $ 3\,[m.u.]$ (model units). Depth images are generated every $2\pi/72$ rads. The depth images are generated using \cite{Guney2015}. Knowing the exact parameters of the camera the reprojection process produces depth images with correct point-wise correspondences of the depth values, resulting to a fusion problem with perfect data association. 

We consider two sets of metrics, the first based on the depth image and the other on the corresponding disparity image. For the disparity image we use the average error in all the valid pixels (avg-all), and the percentage of pixels with disparity error greater than $n$ (out-$n$) \cite{Geiger2012}. For the depth image evaluation we use the standard root mean square error (RMSE), the mean absolute error (ZMAE), and the mean angular error of the norms (NMAE) \cite{Barron2015}. The average values reported for the synthetic datasets are geometric average values. 
For the objects dataset the disparity image is generated by considering a virtual baseline with length equal to half the distance between successive views ($3\sin\frac{\pi}{72}$).

First, we explore the relation of the fused depth image accuracy with the type and the strength of noise for different versions and ablations of our model considering the minimization algorithms discussed in Section~\ref{sec:algs}. Naturally, noise is added to the original depth images before the reprojection process. The first two columns of Figure~\ref{fig:objgraphs} show the results for Laplace and normally distributed noise with $b,\sigma\in{[0,1]}$ respectively, using $11$ successive depth images and Table~\ref{tab:resobj} shows the error values, for the case of Laplace noise with $b=0.6\,[m.u.]$ (model units). The abbreviations of the different versions of the proposed method are described in Table~\ref{tab:names}.

\begin{figure*}[!t]%
\centering
\includegraphics[width=0.2\textwidth]{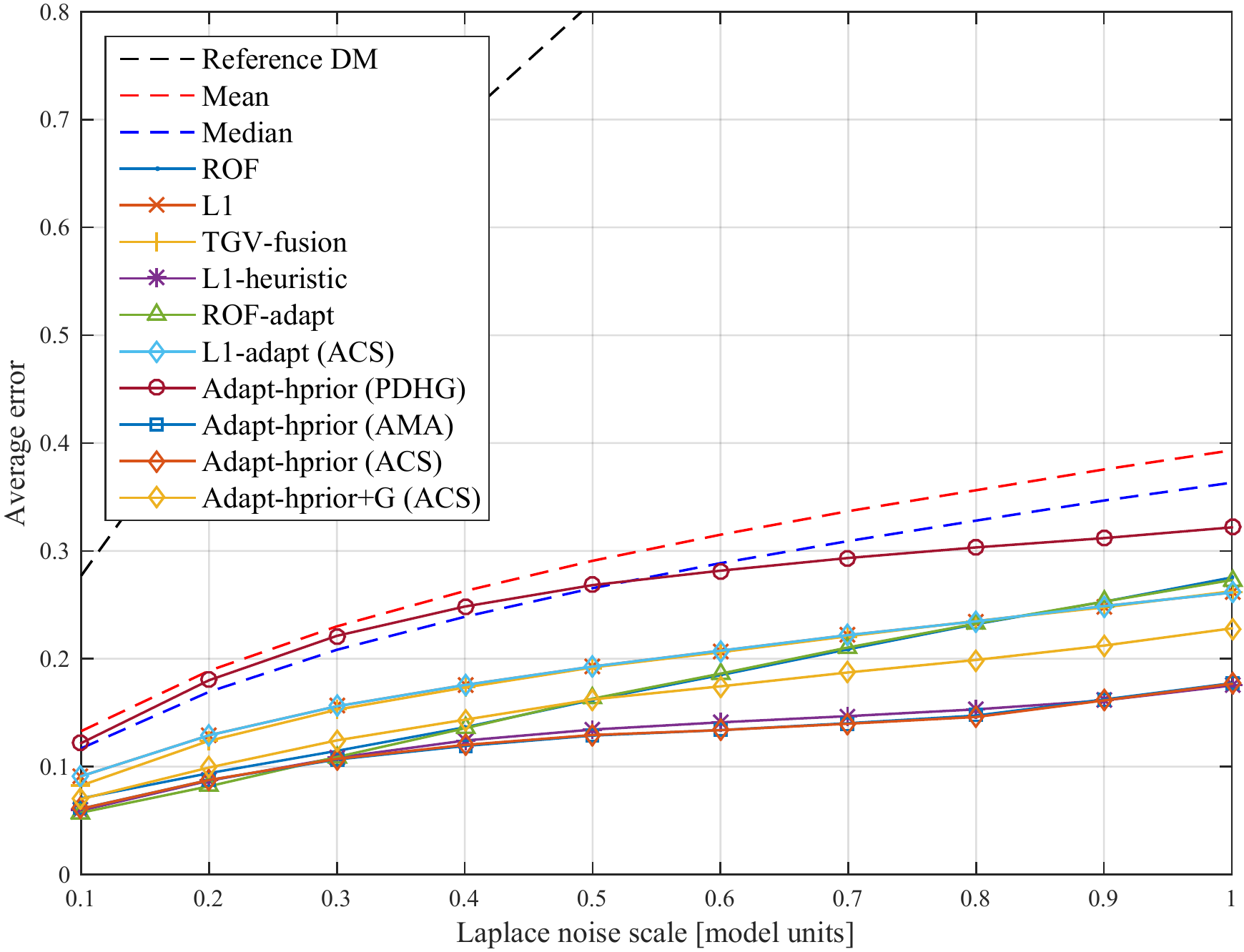}%
\hfil
\includegraphics[width=0.2\textwidth]{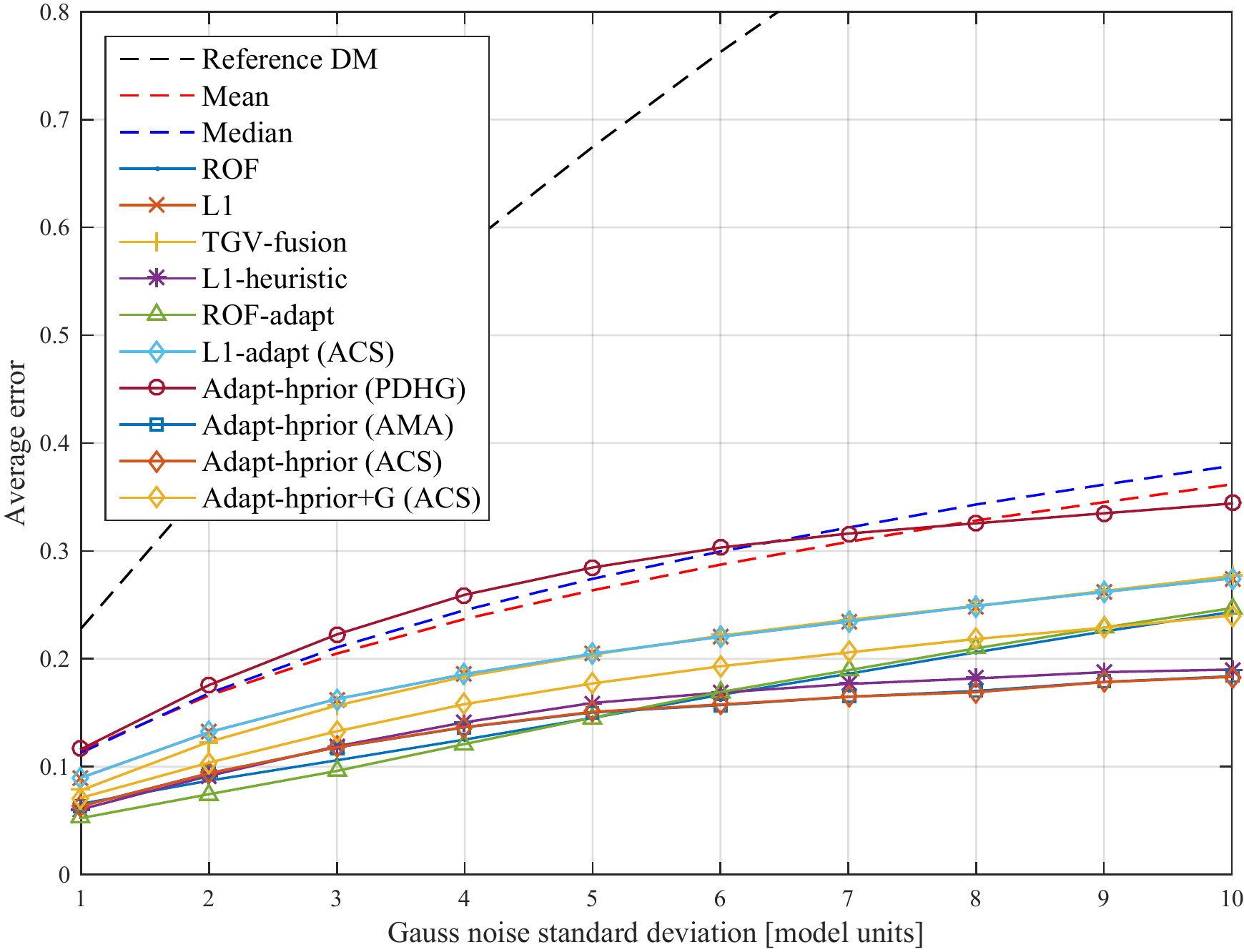}%
\hfil
\includegraphics[width=0.2\textwidth]{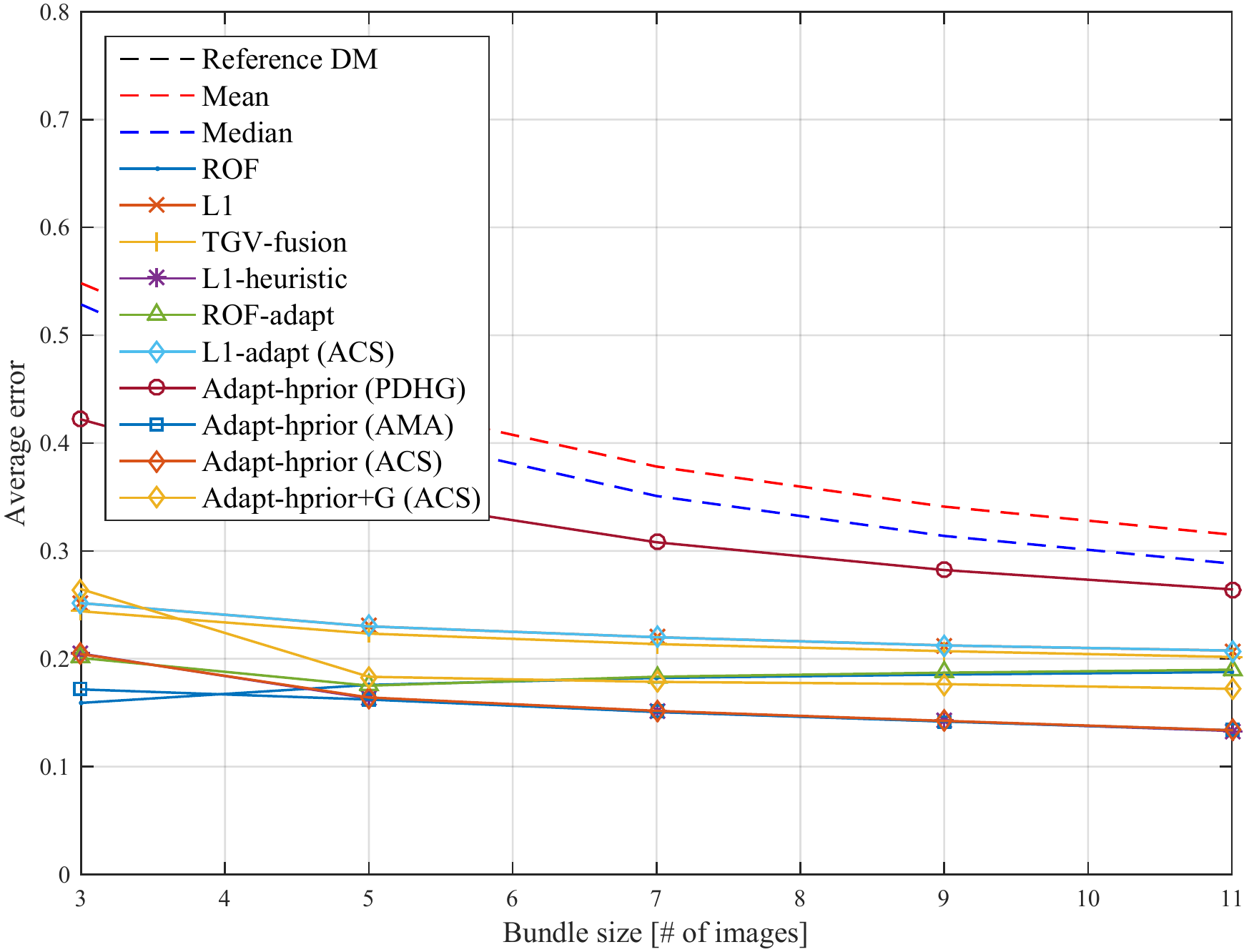}%
\hfil
\includegraphics[width=0.2\textwidth]{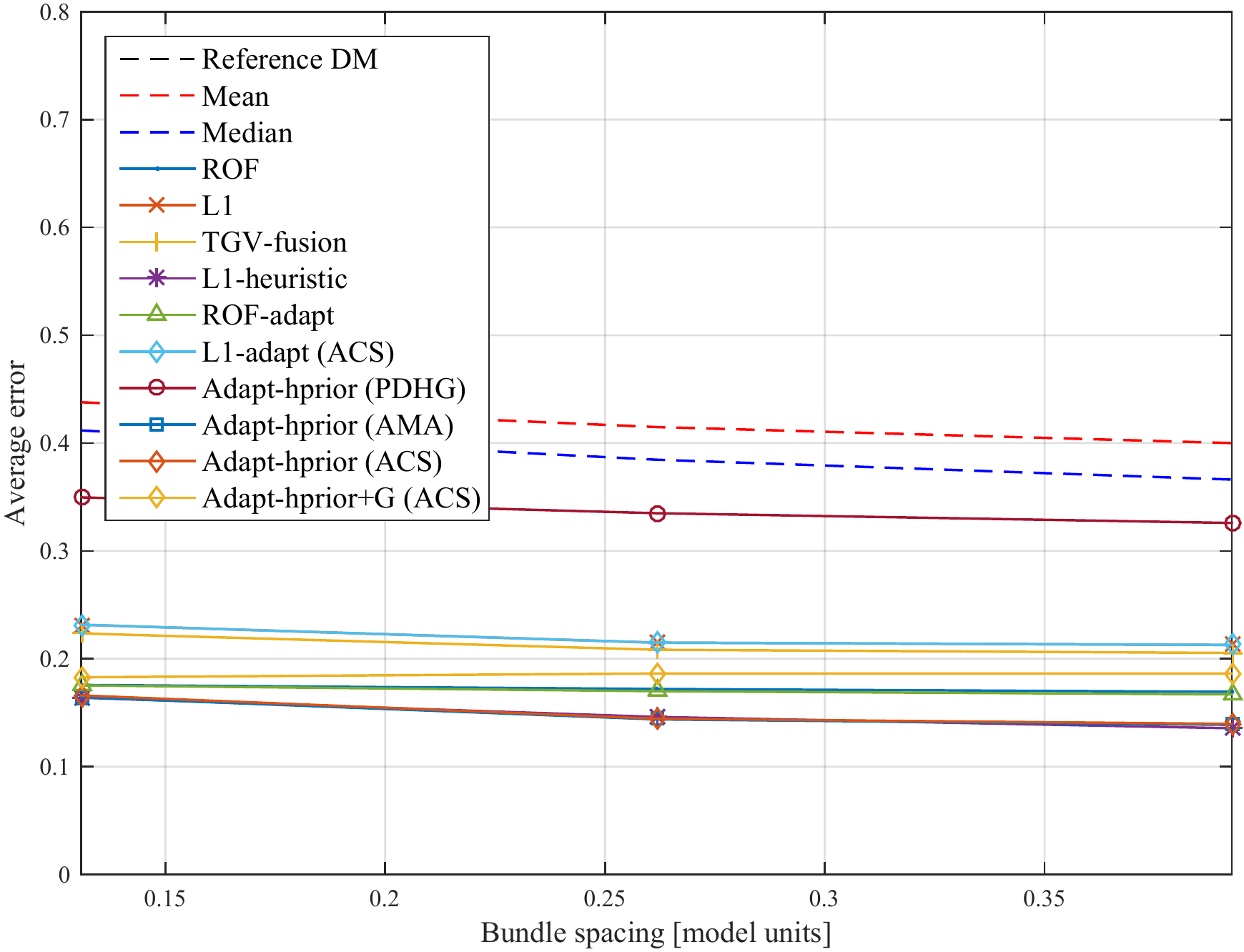}\\%
\includegraphics[width=0.2\textwidth]{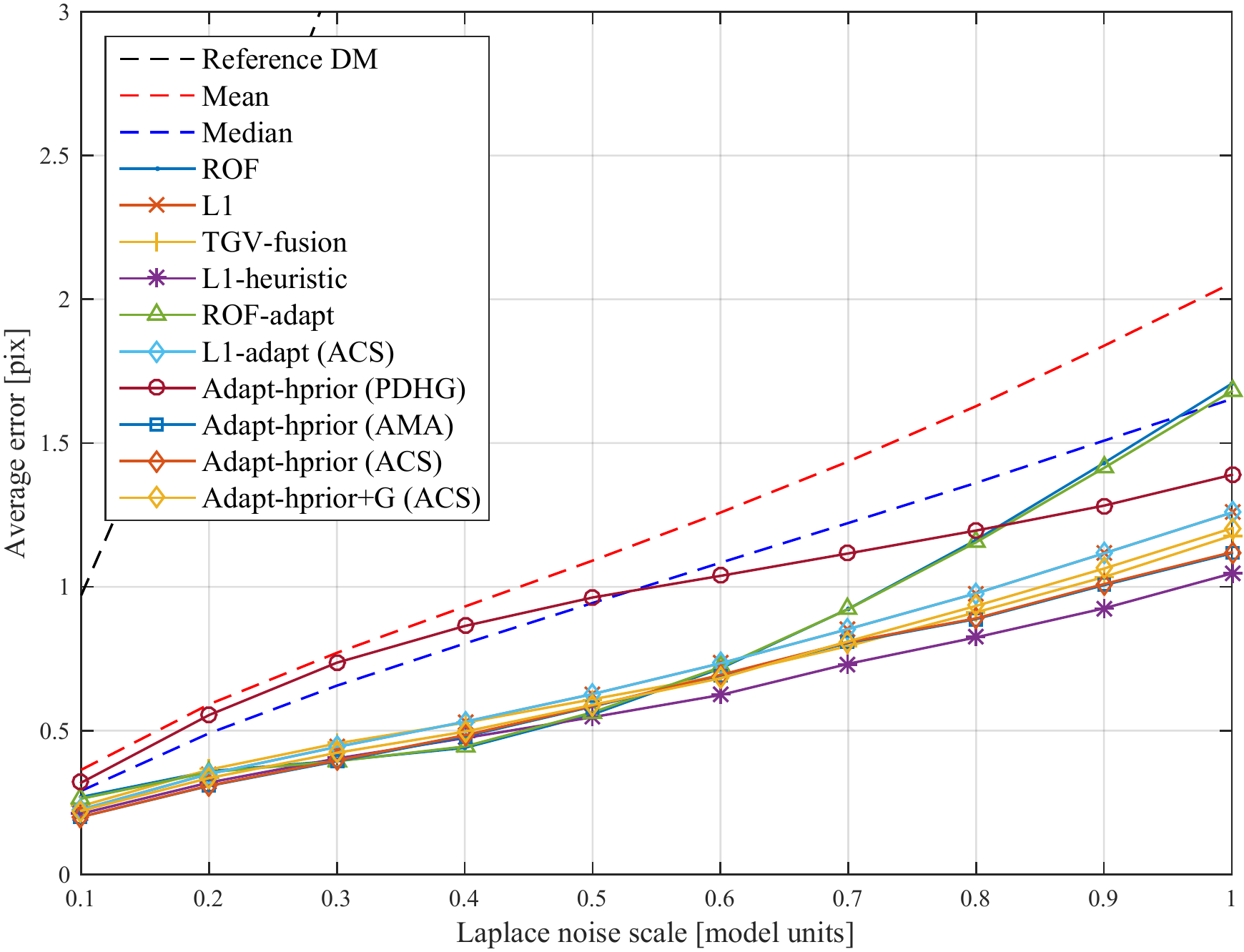}%
\hfil
\includegraphics[width=0.2\textwidth]{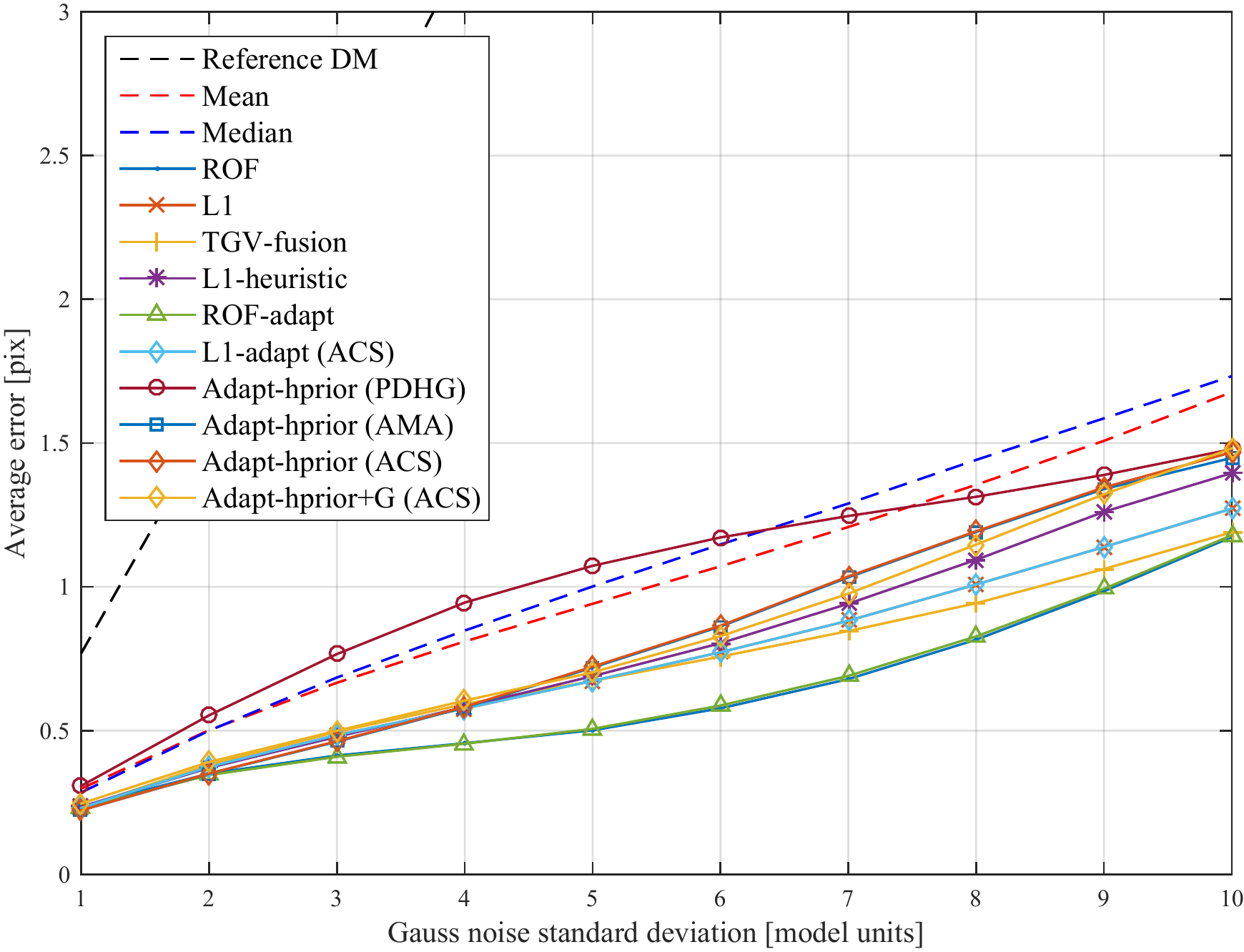}%
\hfil
\includegraphics[width=0.2\textwidth]{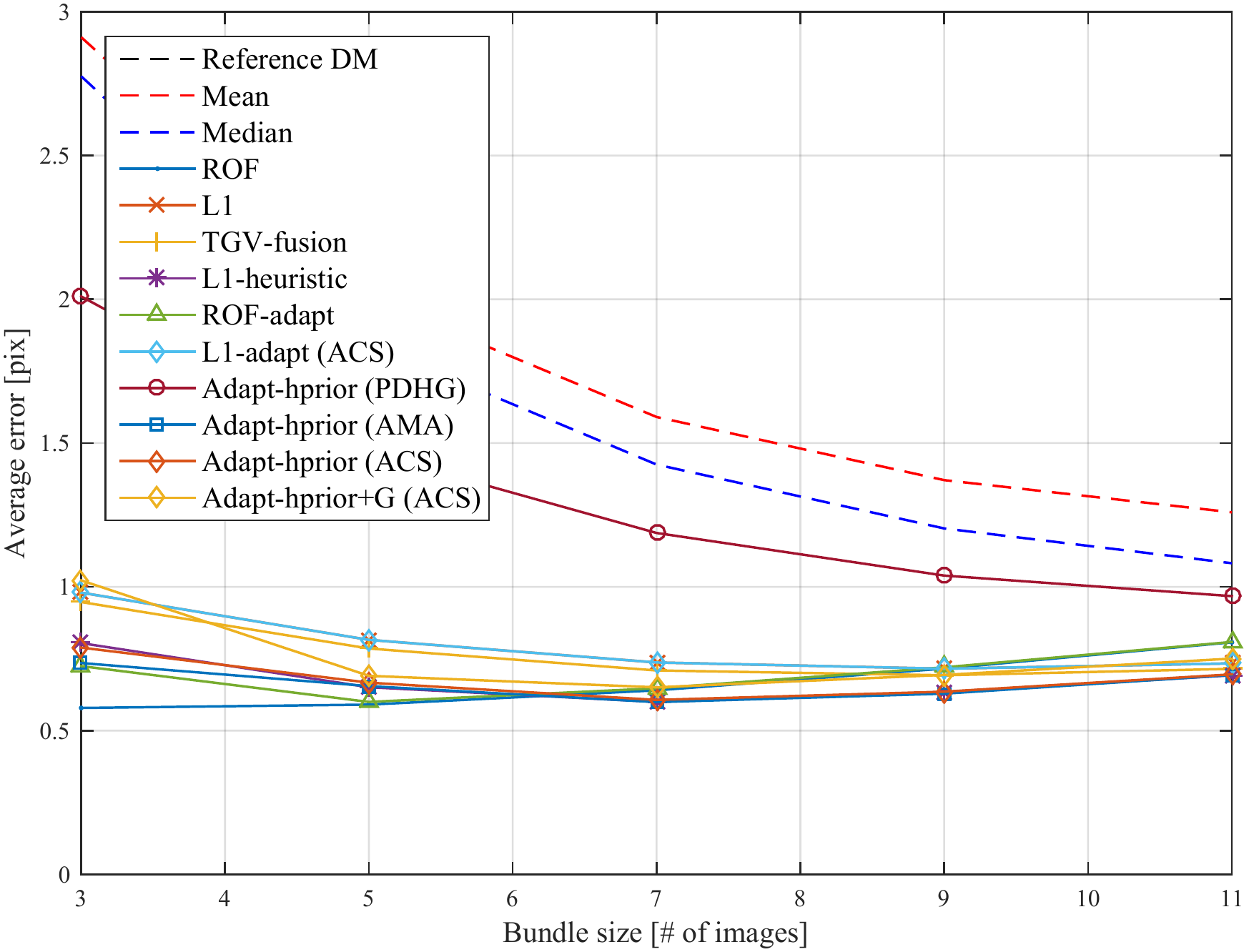}%
\hfil
\includegraphics[width=0.2\textwidth]{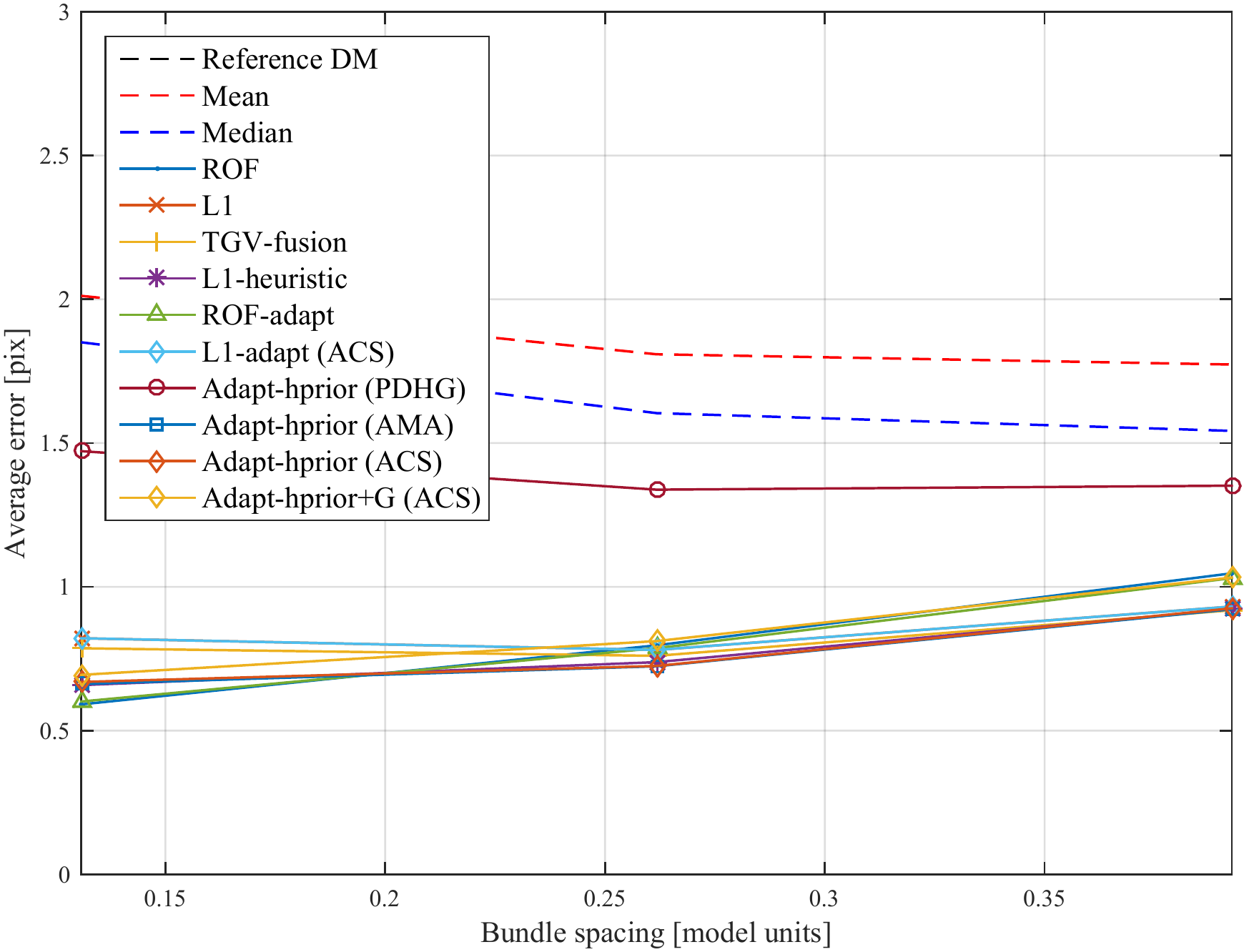}%
\caption{Average depth (top) and disparity (bottom) error in relation to (from left to right): a) Laplace noise scale; b) standard deviation of the Gauss noise; c) number of fused depth images; d) distance between the fused depth images. Laplace noise scale $b=0.6\,[m.u.]$.} \label{fig:objgraphs}
\end{figure*}

\begin{table}[th!]                                                                         
\centering        
\caption{Method names}                                                                  
\label{tab:names}                                                                             
\begin{tabular}{|l|l|}
\hline
L1-heuristic & confidence based on the scene geometry\\
ROF-adapt & $L_2$ fidelity with adaptive confidence values \\
L1-adapt & $L_1$ fidelity with adaptive confidence values \\
Adapt-hprior & L1-adapt with scene geometry based prior \\ 
Adapt-hprior+G & as Adapt-hprior plus appearance prior \\
\hline
\end{tabular}
\end{table}

\begin{table}[th!]                                                                         
\centering               
\caption{Results for the objects dataset for different versions and ablations of the proposed model for Laplace noise of scale $b=0.6\,[m.u.]$, and $11$ fused depth images. (Best values in bold)
}                                                                  
\label{tab:resobj}                                                                          
\begin{tabular}{c|cccc|cc}                                                        
& RMSE & ZMAE & NMAE & Z-avg  & out-3 [\%]& D-avg [px]\\ 
\hline                                                                                    
Reference DM  & 0.8152 & 0.5861 & 1.5464 & 0.9040 & 58.5764 & 7.1851 \\         
Mean  & 0.1631 & 0.1265 & 1.5156 & 0.3150 & 6.2219 & 1.2583 \\                 
Median & 0.1449 & 0.1100 & 1.5081 & 0.2886 & 3.9030 & 1.0839 \\               
\hline                                                                                   
ROF & 0.0872 & 0.0632 & 1.1461 & 0.1848 & 0.4173 & 0.7162 \\                  
L1  & 0.0943 & 0.0689 & 1.3754 & 0.2075 & 0.6818 & 0.7340 \\                   
TGV-fusion \cite{Pock2011}  & 0.0943 & 0.0685 & 1.3531 & 0.2061 & 0.7312 & 0.6912 \\           
L1-heuristic & 0.0831 & 0.0572 & 0.5893 & 0.1410 & 0.2029 & \textbf{0.6238} \\         
\hline                                                                                   
ROF-adapt& 0.0883 & 0.0643 & 1.1357 & 0.1862 & 0.4236 & 0.7204 \\            
L1-adapt (PDHG) & 0.1272 & 0.0970 & 1.4892 & 0.2639 & 2.1541 & 0.9467 \\      
L1-adapt (AMA) & 0.0943 & 0.0688 & 1.3756 & 0.2075 & 0.6807 & 0.7339 \\       
L1-adapt (ACS) & 0.0943 & 0.0689 & 1.3754 & 0.2075 & 0.6818 & 0.7340 \\       
Adapt-hprior (PDHG) & 0.1392 & 0.1068 & 1.5026 & 0.2817 & 3.1713 & 1.0383 \\  
Adapt-hprior (AMA) & \textbf{0.0776} & \textbf{0.0537} & 0.5754 & \textbf{0.1338} & 0.1153 & 0.6914 \\   
Adapt-hprior (ACS) & 0.0778 & 0.0539 & \textbf{0.5719} & \textbf{0.1338} & \textbf{0} & 0.6938 \\   
Adapt-hprior+G (PDHG) & 0.1626 & 0.1235 & 1.5137 & 0.3121 & 5.8530 & 1.1945 \\
Adapt-hprior+G (AMA) & 0.1205 & 0.0629 & 0.6713 & 0.1720 & 1.9409 & 0.6675 \\  
Adapt-hprior+G (ACS) & 0.1243 & 0.0645 & 0.6612 & 0.1744 & 2.2225 & 0.6821 \\  
\end{tabular}                                                       
\end{table}

We observe that in the case of joint depth and confidence (biconvex) estimation problem, ACS and AMA algorithms give equivalent results in practice, with ACS marginally better in average. For this dataset, PDHG algorithm gives results with errors close to the median and average baselines, as it does not converge numerically. This is possibly caused by the high signal to noise ratio (SNR) in the images of this dataset.
We also observe that the {\em L1-heuristic} version of the model provides better results with respect to the  {\em L1-adapt} version, and almost as good as the other two adaptive versions. This is indicative of the scene geometry confidence values effectiveness. 

Additionally, the  {\em Adapt-hprior} version  performs better than the extended  {\em Adapth-hprior+G} version. The reason for this is that lower regularization is applied near the image edges, hence noise is not suppressed in these areas. 
Finally, we see that all the adaptive versions with heuristic priors, as well as the {\em L1-heuristic} version perform better than the TGV-Fusion method \cite{Pock2011}, while {\em L1-adapt} gives similar results.

A visual comparison of the results is presented in Figures~\ref{fig:visresu1} and \ref{fig:visresu2}. For this example it is evident that only the  {\em L1-heurisitic} and  {\em Adapt-hprior} give results which are smooth on one hand and close to the ground truth on the other.  {\em Adapt-hprior} actually is more faithful in terms of shape as the numerical results suggest.  In all the other cases residual high frequency noise can be observed on the surface. This is mainly due to the very low SNR of the original depth images. The  {\em L1-adapt} and  {\em TGV-fusion} methods still are able to capture the shape of the surface, however the reconstructed surface is not smooth.

We examine also the relation of accuracy of the fused depth image with the bundle size and the spacing between the original depth images. The results are presented in the last two columns of Figure~\ref{fig:objgraphs}. 
In general one would expect that more data layers would produce more accurate results. This is confirmed up to a certain point for the disparity error, while for larger bundles the errors increase. This is attributed to the increase of errors in occluded regions resulting by the reprojection of distant depth images. This also evident in the disparity error results in the last column of Figure~\ref{fig:objgraphs}. 
Hence, more depth images are useful for decreasing the error as long as they are close to the reference view point, while more distant images tend to introduce errors as scenes are not consistent any more in the occluded regions. 
Average depth error slightly improves in both these cases instead. A closer examination reveals that the actual depth error increases, while the normal estimation error decreases and this positively affects the average. The decrease in normal errors is reasonable since the scene is captured from a wider view-point range hence their estimation is more robust. These observations can be used to determine the best size of the bundle based on the camera motion, however we will not treat this problem here as it is outside the scope of this work.

\paragraph*{{\bf Urban Landscapes}}
We performed the same set of experiments for the urban landscapes dataset. The intrinsic parameters of the virtual camera are the same, however the camera here follows a purely translational path, facing always the scene from above. The distance of the camera from the zero level of the scene is taken equal to $300\,[m.u.]$, and depth images are generated every $4\,[m.u.]$ forming a bundle of $11$ images. 

The first two columns of Figure~\ref{fig:urbgraphs} show the effect of Laplace and normally distributed noise on the final results, 
while Table~\ref{tab:resurb} shows the actual error values for the case of Laplace noise with $b=6\,[m.u.]$.

\begin{figure*}[!t]%
\centering
\includegraphics[width=0.20\textwidth]{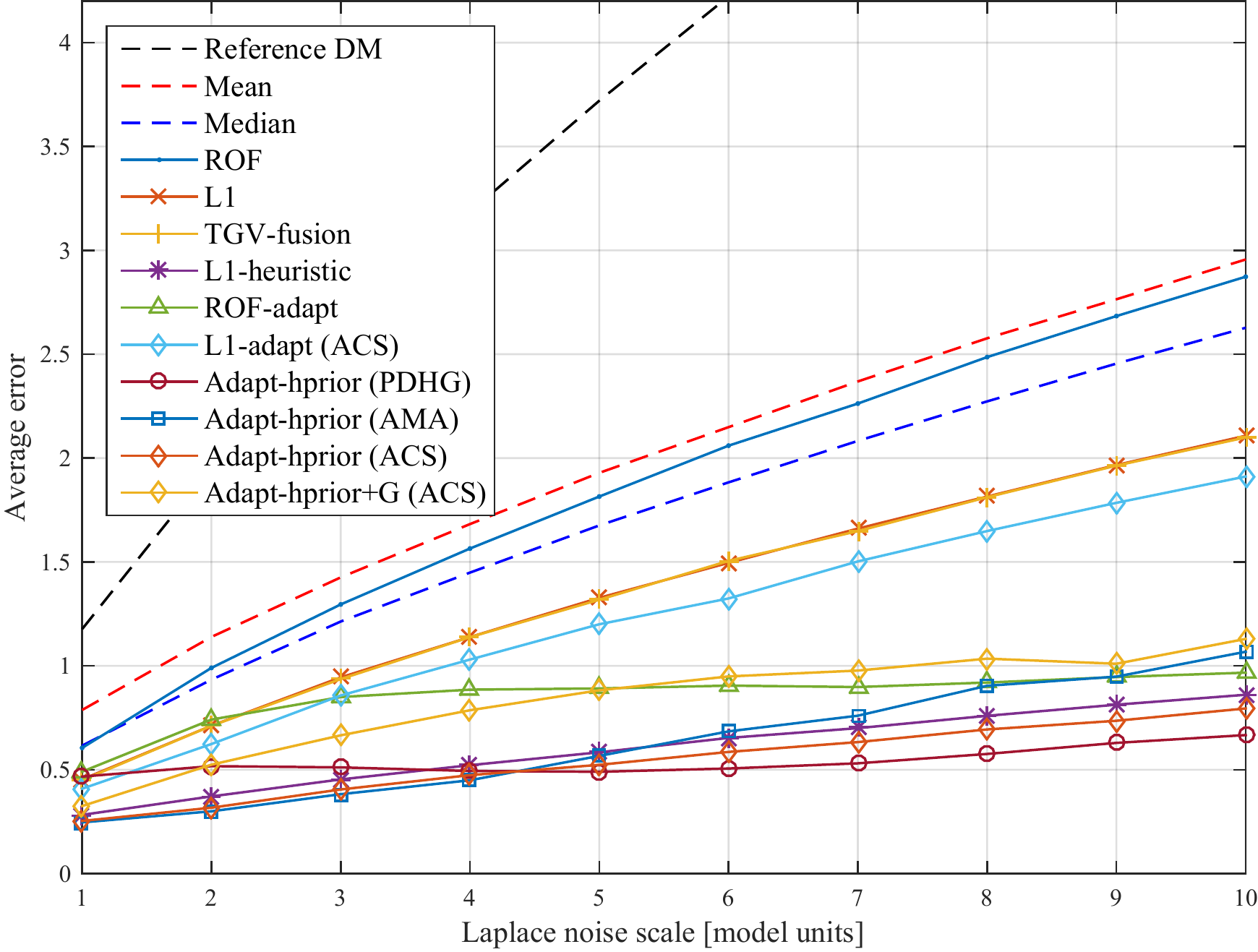}%
\hfil
\includegraphics[width=0.2\textwidth]{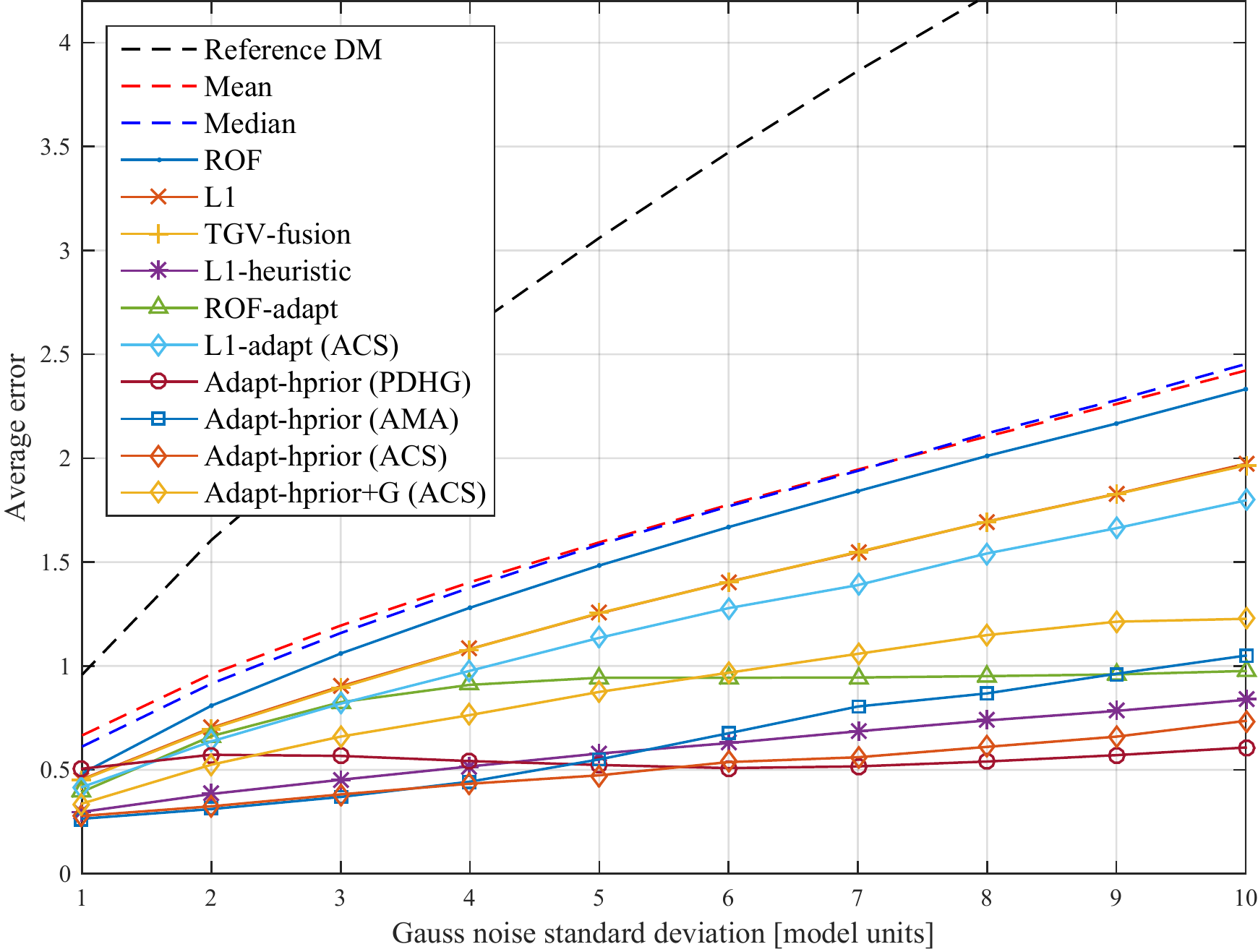}%
\hfil
\includegraphics[width=0.2\textwidth]{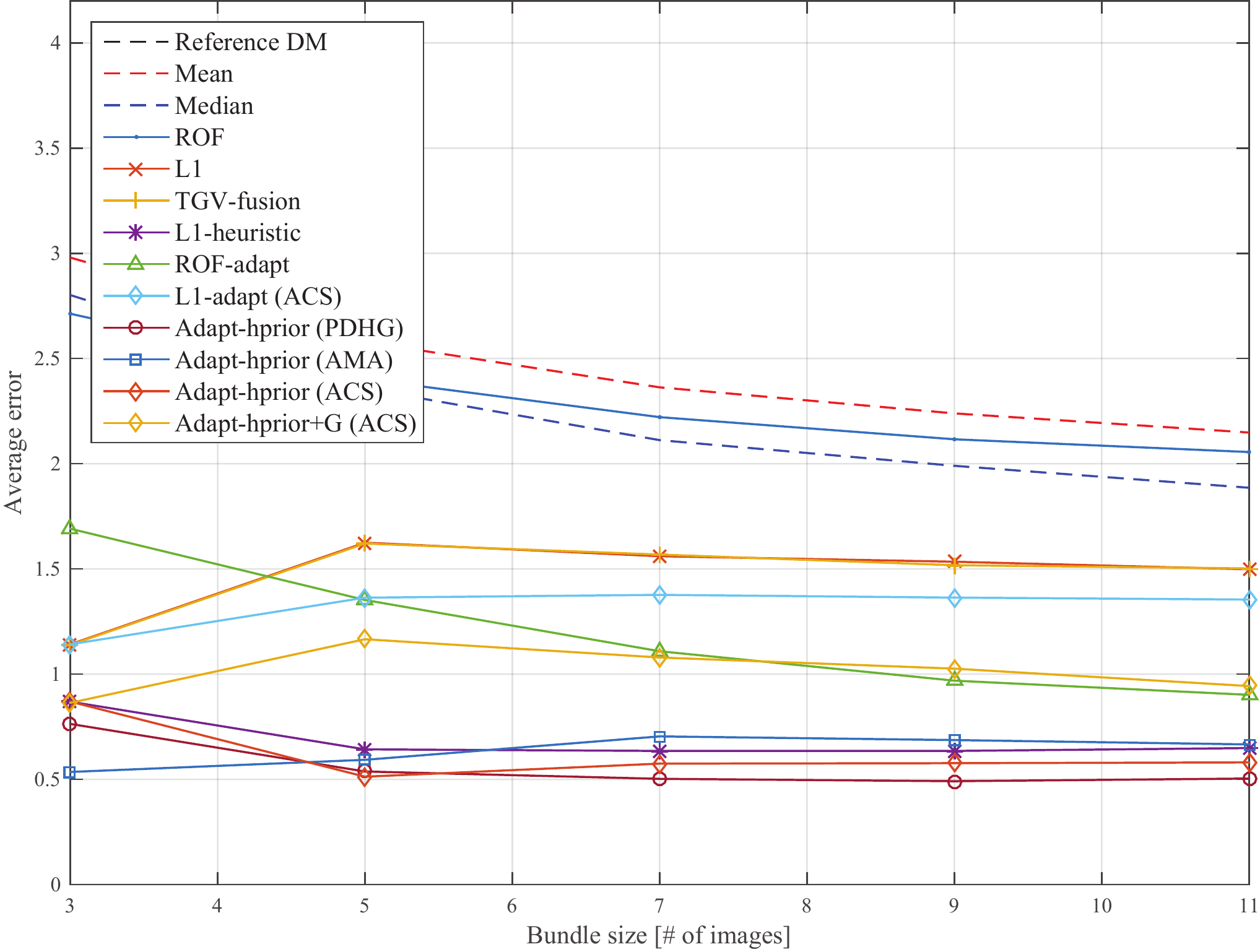}%
\hfil
\includegraphics[width=0.2\textwidth]{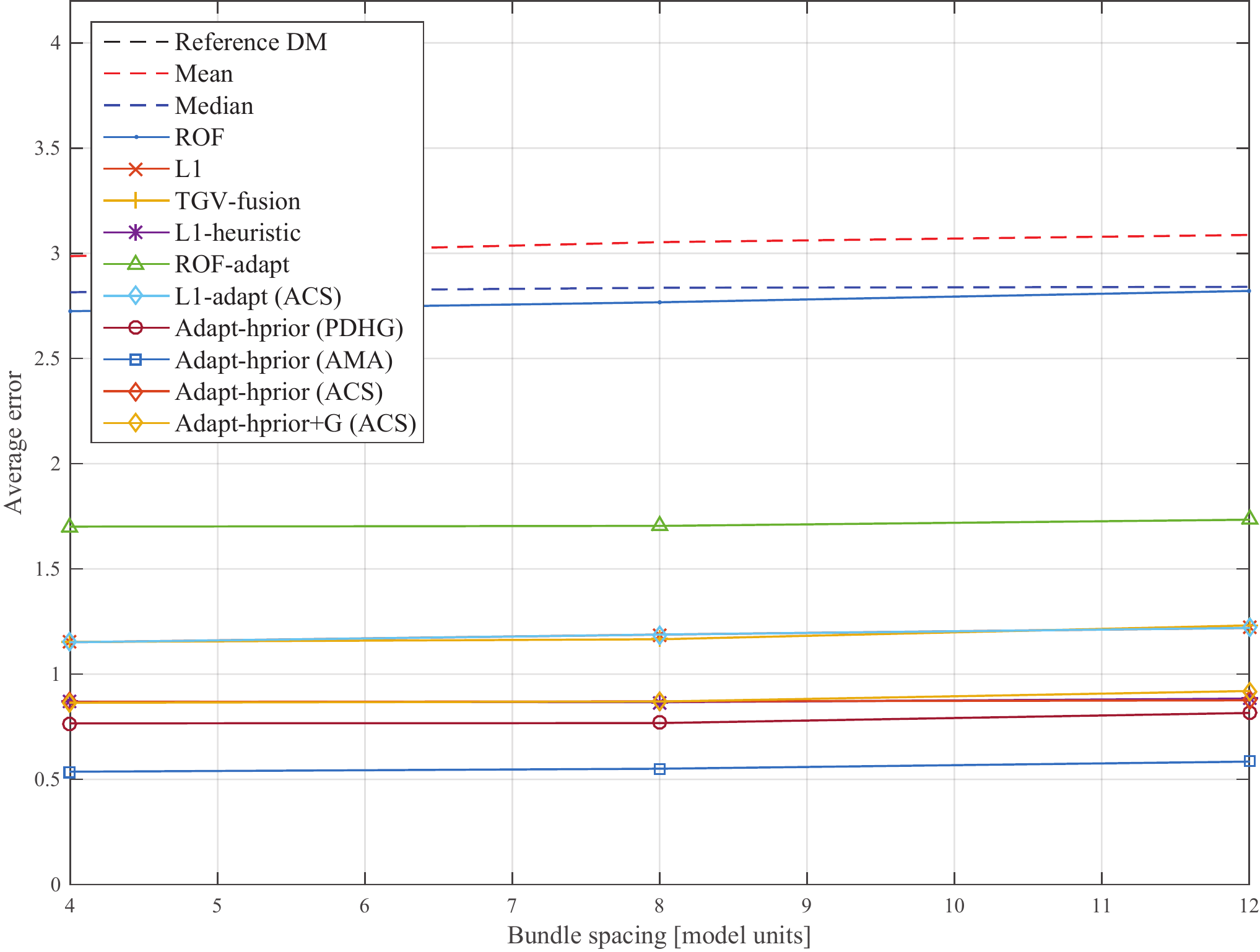}\\%
\includegraphics[width=0.2\textwidth]{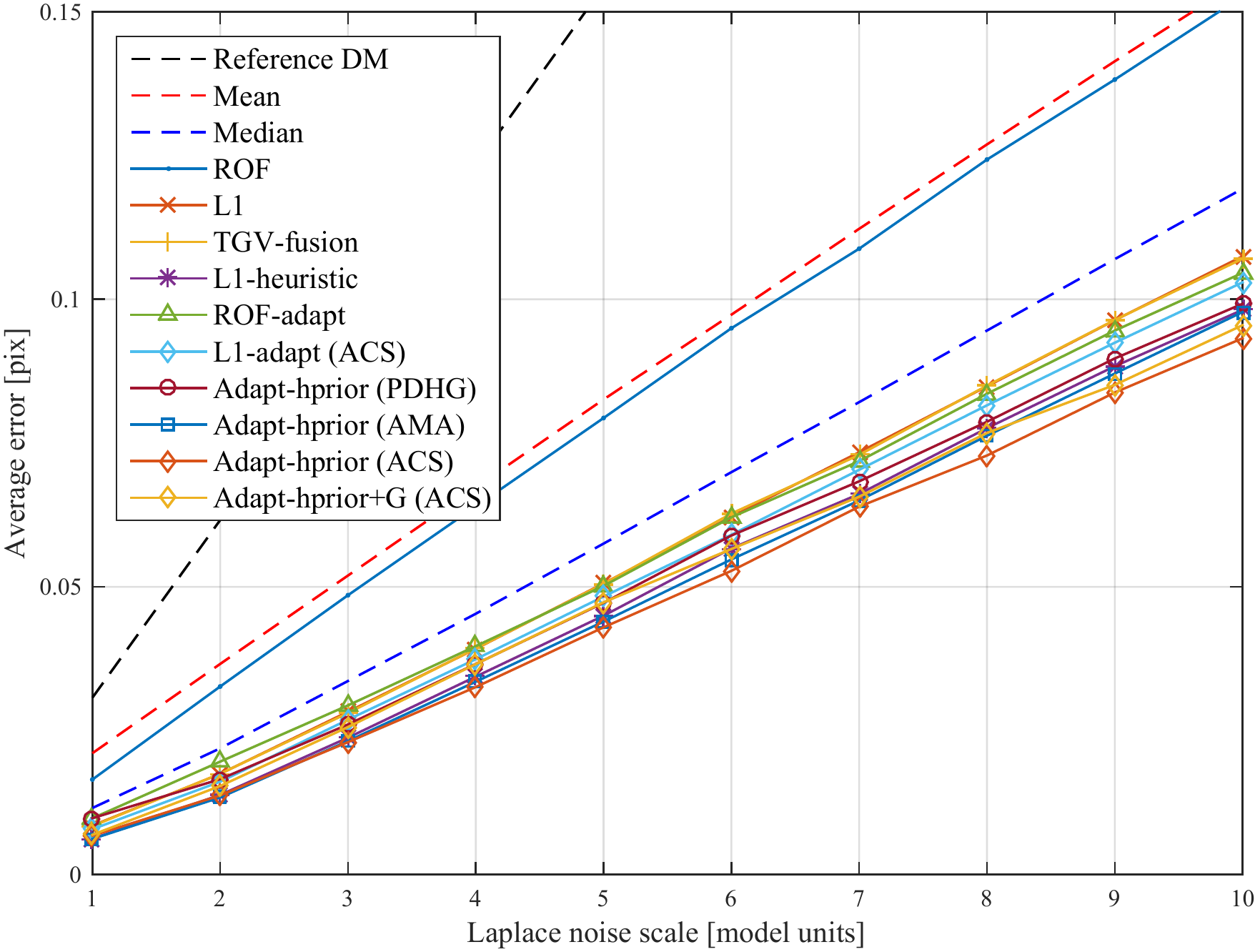}%
\hfil
\includegraphics[width=0.2\textwidth]{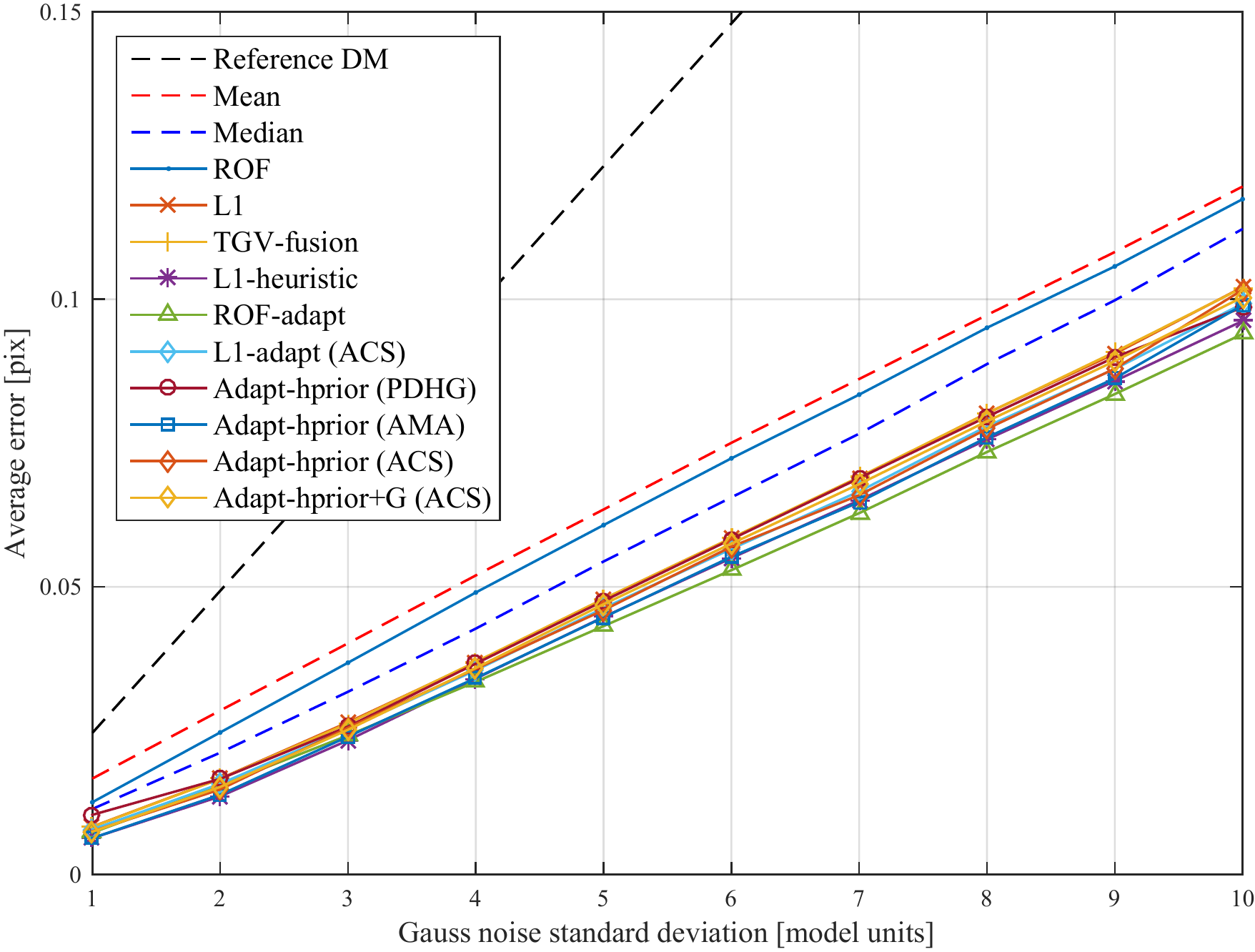}%
\hfil
\includegraphics[width=0.2\textwidth]{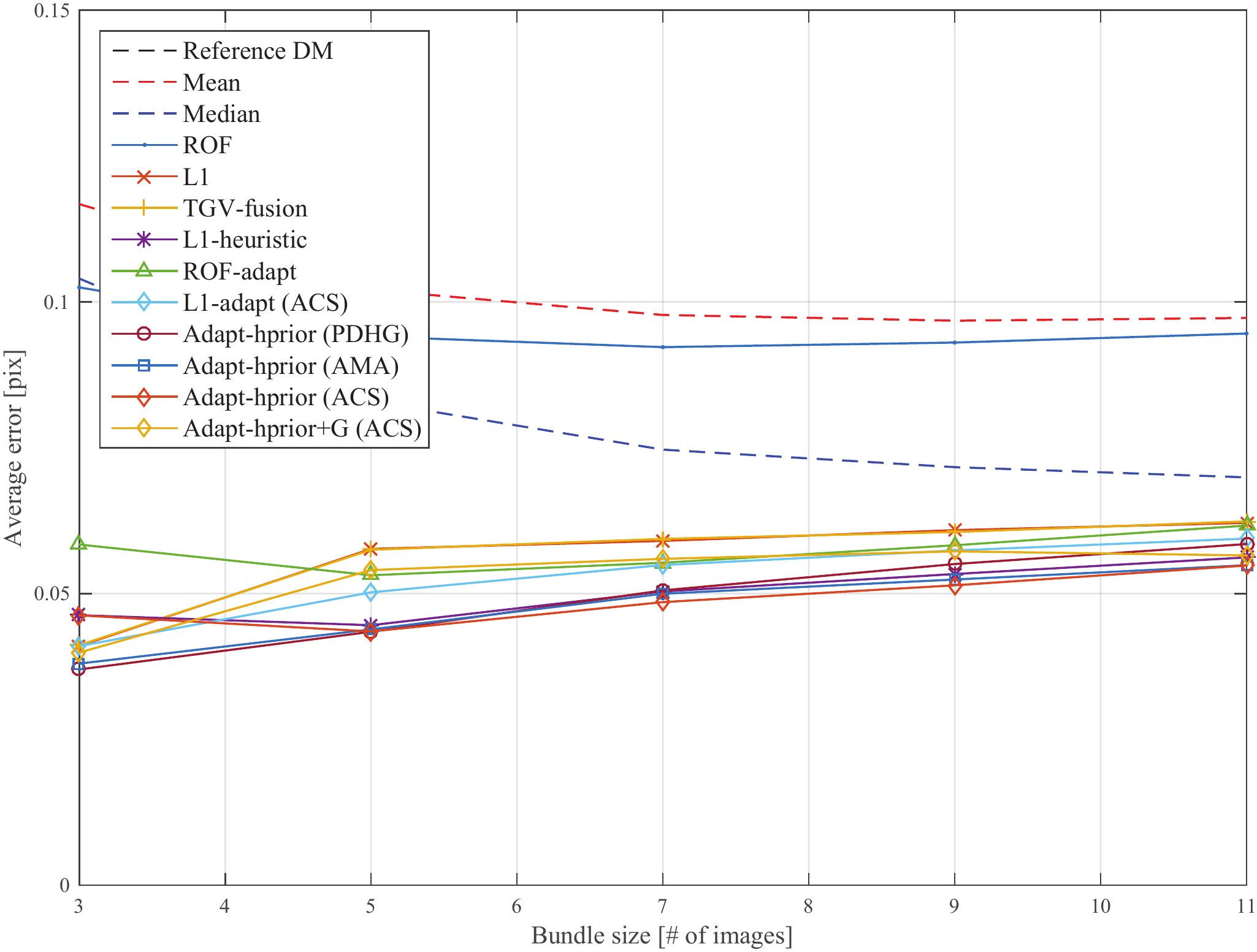}%
\hfil
\includegraphics[width=0.2\textwidth]{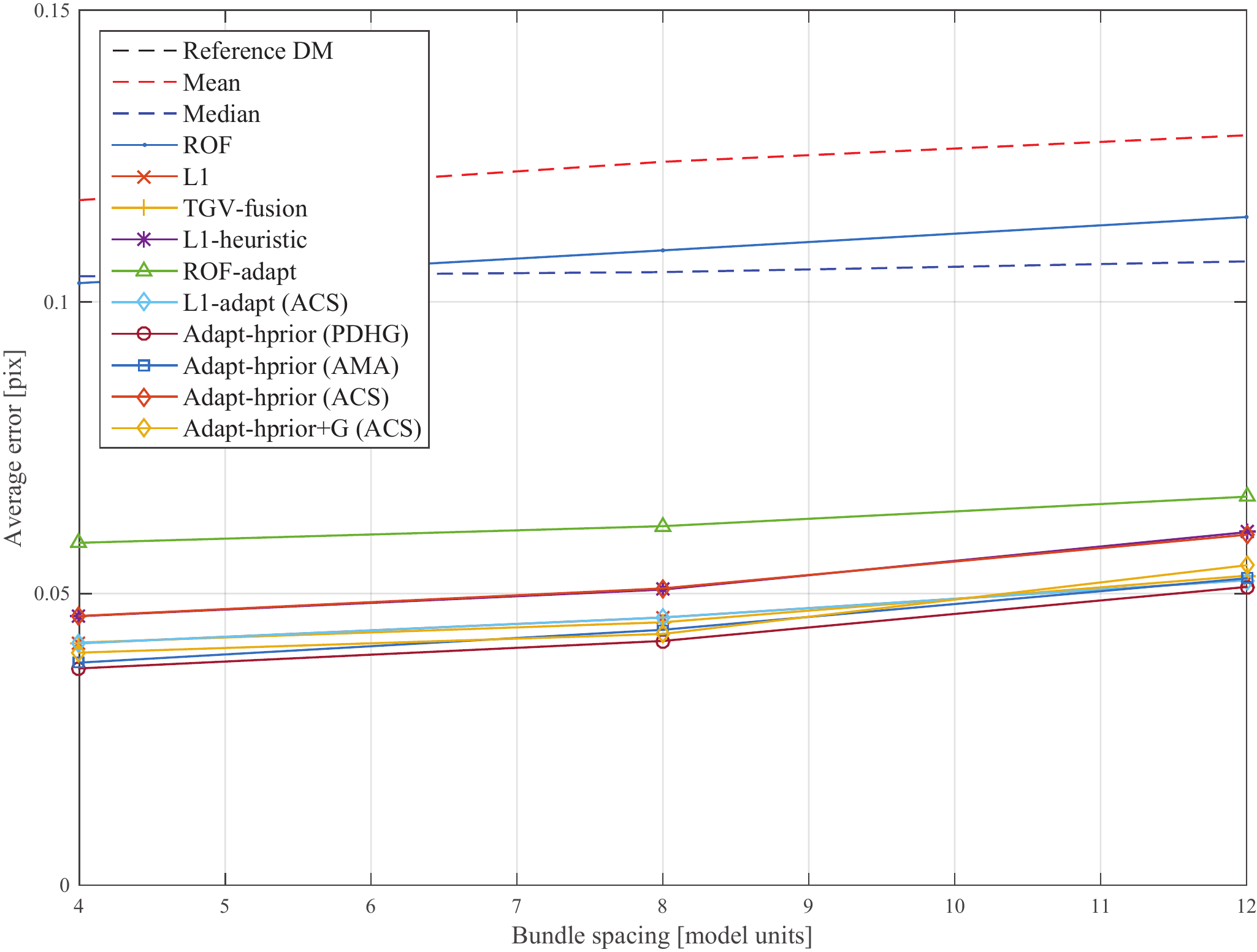}%
\caption{Average depth (top) and disparity (bottom) error in relation to (from left to right): a) Laplace noise scale; b) standard deviation of the Gauss noise; c) number of fused depth images; d) distance between the fused depth images. Laplace noise scale $b=6\,[m.u.]$.} \label{fig:urbgraphs}
\end{figure*}

\begin{table}[ht!]                                                 
\centering      
\caption{Results for the urban landscapes dataset for different versions and ablations of the proposed model for Laplace noise of scale $b=6\,[m.u.]$.  (Best values in bold)} \label{tab:resurb}
\begin{tabular}{c|cccc|cc}                                                                                                                                     
 & RMSE & ZMAE  & NMAE & Z-avg  & out-3[\%] & D-avg [px]\\                                   
\hline                                                                                   
Reference DM  & 8.4952 & 6.0155 & 1.4727 & 4.2221 & 0 & 0.1863 \\                                                                                         
Mean  & 3.4925 & 2.1655 & 1.3122 & 2.1490 & 0 & 0.0973 \\                                                                                           
Median  & 3.1017 & 1.7001 & 1.2664 & 1.8832 & 0 & 0.0699 \\               
\hline                                                                                   
ROF & 3.3454 & 2.0385 & 1.2821 & 2.0601 & 0 & 0.0950 \\                                                                                                  
L1 & 2.4370 & 1.2554 & 1.0936 & 1.4957 & 0 & 0.0622 \\                                                                                                   
TGV-fusion  \cite{Pock2011} & 2.4630 & 1.2654 & 1.0905 & 1.5035 & 0 & 0.0626 \\                                                                                          
L1-heuristic & \textbf{1.5670} & 0.5490 & 0.3168 & 0.6484 & 0 & 0.0565 \\         
\hline                                                                                   
ROF-adapt & 1.7434 & 0.7264 & 0.5809 & 0.9027 & 0 & 0.0619 \\                                                                                      
L1-adapt (PDHG) & 1.5768 & \textbf{0.4582} & \textbf{0.1647} & \textbf{0.4919} & 0 & 0.0562 \\                                                                                  
L1-adapt (AMA) & 2.3775 & 1.1960 & 1.0502 & 1.4401 & 0 & 0.0612 \\       
L1-adapt (ACS) & 1.7316 & 0.5385 & 0.2851 & 0.6430 & 0 & 0.0591 \\       
Adapt-hprior (PDHG)  & 1.5718 & 0.4874 & 0.1693 & 0.5062 & 0 & 0.0585 \\  
Adapt-hprior (AMA) & 1.7229 & 0.6022 & 0.3091 & 0.6845 & 0 & 0.0548 \\   
Adapt-hprior (ACS) & 1.7201 & 0.5694 & 0.2050 & 0.5855 & 0 & \textbf{0.0528} \\   
Adapt-hprior+G (PDHG)  & 1.9403 & 0.6254 & 0.3591 & 0.7582 & 0 & 0.0573 \\
Adapt-hprior+G (AMA)  & 2.3256 & 1.1123 & 0.9194 & 1.3348 & 0 & 0.0618 \\                                                                                    
Adapt-hprior+G (ACS) & 1.9872 & 0.7723 & 0.5470 & 0.9434 & 0 & 0.0565 \\ 
\end{tabular}
\end{table}

We observe also here that in the case of joint depth and confidence (biconvex) estimation problem, ACS gives better results with respect to AMA. 
In contrast to the previous dataset, we observe here that the PDHG versions of the adaptive methods always converged providing better results with respect to ACS and AMA methods. Nevertheless, ACS algorithm still gives results with similar errors. It is interesting to see that also for this dataset the {\em L1-heuristic} version give satisfactory results. 
Moreover, the {\em L1-adapt} version gives good results with respect to the methods which use prior confidence. This is important, especially considering that the heuristic priors explicitly use knowledge about the problem, and it highlights the power of the adaptive methods to infer suitable confidence values from the data. 

A visual comparison of the results is presented in Figure~\ref{fig:visresu3}. We see that the adaptive versions of the proposed model gives the best results. The results of this dataset better highlight the contribution of the automatically estimated confidence values. The difference with respect to the previous dataset, lies mostly in the ratio between the noise scale and the distance from the object.

Finally, the last two columns of Figure~\ref{fig:urbgraphs} show the effect of the bundle size and spacing on the fusion result for the urban scenes dataset. We see that the observations made for the objects dataset remain valid also here.

\subsubsection{Real data}
We evaluated the performance of our model for the depth fusion on real data using the KITTI dataset \cite{Geiger2012}. 
Here, ground truth of the disparity and calibration data of the cameras are provided, while ground truth localization data are not given. To estimate the camera motion, we considered two different stereo-camera localization methods in order to recover the relative transformations between the reference and the other views. The first is based on \cite{Geiger2011}, and the second is the one used in \cite{Ntouskos2013} for the localization of a head-mounted stereo-camera. 

The dataset contains stereo-pairs of images hence depth images from each of these stereo-pairs have to be computed. 
We have considered two methods for computing the depth images. The first is semi-global matching (SGM) algorithm \cite{Hirschmuller2005}, while the other is the ELAS method \cite{Geiger2010}. 
As our method assumes that the depth maps are given as-is, the quality of the result depends on the quality of the original depth images, hence the results presented here should be compared to the results of the stereo evaluation of the respective methods. 
The results regarding the non-occluded areas are presented in Table~\ref{tab:kittitrainresviso} and in Table~\ref{tab:kittitrainresgm} for the different choices of localization and disparity estimation algorithms evaluated for the training set of the KITTI stereo benchmark. The average values reported here are arithmetic averages in order to be consistent with the values reported on the website of the KITTI benchmark. One can notice that the proposed model performs better in all the combinations apart from the combination \cite{Ntouskos2013} \&\cite{Geiger2010}. This suggests that the proposed model is robust with respect to registration errors. The largest improvement in the out-3 metric with respect to the single view disparity estimation is equal to $4.25\,[\%]$ and it is observed  for the combination \cite{Geiger2011} \&\cite{Geiger2010}.

\begin{table}[th!]                                                 
\centering      
\caption{Results for KITTI stereo benchmark training set with localization according to \cite{Geiger2011}.}                         
\label{tab:kittitrainresviso}       
\begin{tabular}{c|ccc|ccc}     
&\multicolumn{3}{c}{SGM\cite{Hirschmuller2005}}&\multicolumn{3}{c}{VISO\cite{Geiger2010}}\\                                                       
 & density [\%] & out-3 [\%] & D-avg [px] & density [\%]& out-3 [\%]& D-avg [px] \\                       
\hline                                           
Reference & 84.6221 & 12.6218 & 3.0169 & 93.4506 & 11.5387 & 2.0531 \\                                             
Mean & 98.7363 & 12.7838 & 2.5826 & 99.6008 & 13.6108 & 2.0172\\                                                      
Median & 98.7361 & 9.0966 & 2.1139 & 99.6008 & 7.6852 & 1.4663 \\            
\hline                                           
TGV-fusion & 100 & 8.6929 & 2.0184 & 100 & 7.4690 & 1.4149 \\                                                  
L1-heuristic & 100 & 8.6780 & 1.9994 & 100 & 7.3058 & 1.3741 \\      
\hline                                           
Adapt-hprior (ACS) & 100 & \textbf{8.6466} &\textbf{1.9941} & 100 & \textbf{7.2947} & \textbf{1.3696}\\                                         
\end{tabular}                                               
\end{table}   

\begin{table}[th!]                                                 
\centering      
\caption{Results for KITTI stereo benchmark training set with localization according to \cite{Ntouskos2013}.}                         
\label{tab:kittitrainresgm}  
\begin{tabular}{c|ccc|ccc}     
&\multicolumn{3}{c}{SGM\cite{Hirschmuller2005}}&\multicolumn{3}{c}{VISO\cite{Geiger2010}}\\  
 & density [\%] & out-3 [\%] & D-avg [px] & density [\%]& out-3 [\%]& D-avg [px] \\                       
\hline                                  
Reference & 84.6186 & 12.6285 & 3.0334 & 93.4459 & 11.5412 & 2.0516 \\                                            
Mean & 98.7740 & 13.1046 & 2.6234 & 99.6015 & 13.9590 & 2.0603\\                                                      
Median & 98.7739 & 9.4853 & 2.1547 & 99.6015 & 7.9815 & 1.5191\\            
\hline                                           
TGV-fusion & 100 & 9.0621 & 2.0516 & 100 & 7.7581 & 1.4665 \\                                                  
L1-heuristic & 100 & 9.0451 & 2.0333 & 100 & \textbf{7.6057} & \textbf{1.4268} \\      
\hline                                           
Adapt-hprior (ACS) & 100 & \textbf{9.0162} & \textbf{2.0281} & 100 & 7.8962 & 1.4834 \\
\end{tabular}                                          
\end{table}

\begin{figure}[th!]%
\centering
\includegraphics[width=0.45\columnwidth]{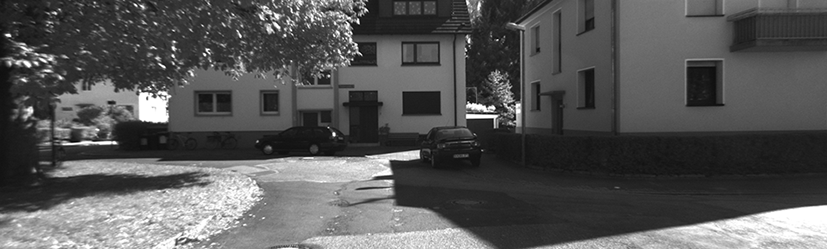}%
\hfil
\includegraphics[width=0.45\columnwidth]{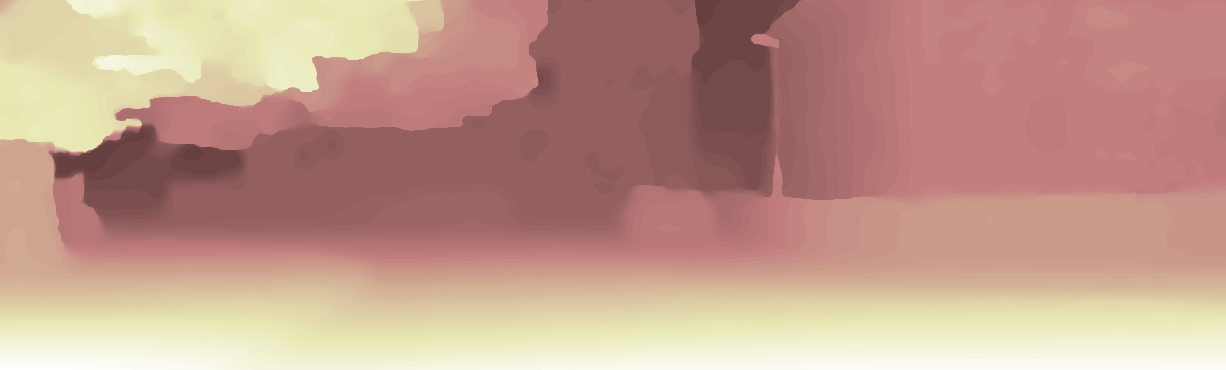}\\[3pt]%
\includegraphics[width=0.45\columnwidth]{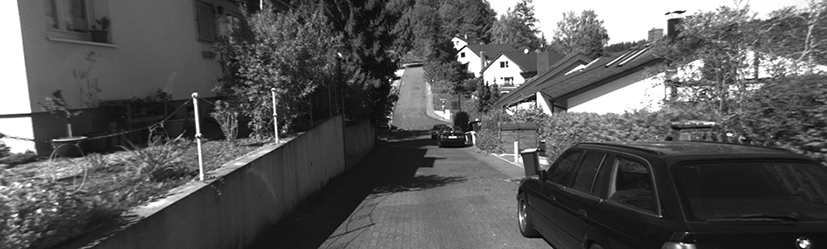}%
\hfil	
\includegraphics[width=0.45\columnwidth]{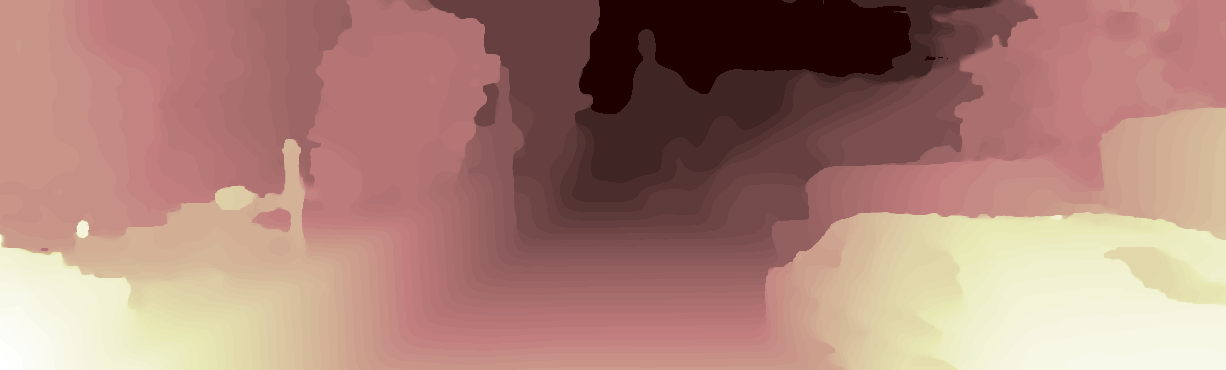}\\[3pt]%
\includegraphics[width=0.45\columnwidth]{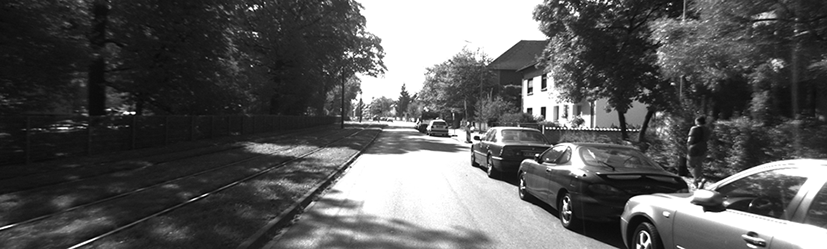}%
\hfil
\includegraphics[width=0.45\columnwidth]{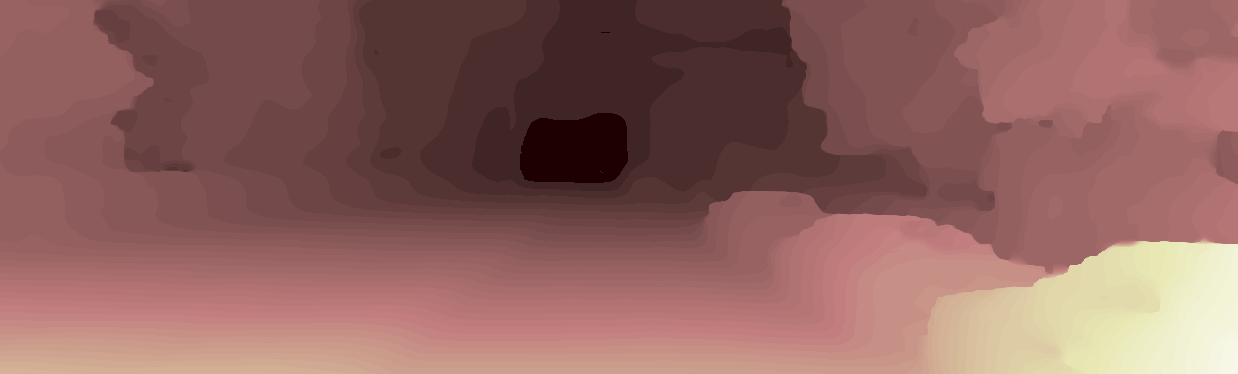}\\[3pt]%
\caption{Fused depth images for the KITTI dataset.} \label{fig:kittiresim}
\end{figure}

We used the {\em Adapt-hprior} versions, the best performing version of our method, to compute the disparity images of the testing set of the KITTI stereo benchmark, using the combination  \cite{Geiger2011} \&\cite{Geiger2010} for localization and single view disparity estimation respectively. The results obtained are presented in Table~\ref{tab:kittitestres}. We can see that the results improve by $1.78\%$ with respect to the single-pair disparity estimation algorithm in the out-noc-3 metric, and by $3.07\%$ with respect to out-all-3. 

\begin{table}[ht!]                                                 
\centering    
\caption{Results for KITTI stereo benchmark testing set with localization according to \cite{Geiger2011} and comparison to the single view results of \cite{Geiger2010}.}                         
\label{tab:kittitestres}    
\begin{tabular}{c|ccccc}      
 & density [\%] & out-noc-3 [\%] & out-all-3 [\%] & avg-noc [px] & avg-all [px]   \\                       
\hline                                  
Reference\cite{Geiger2010} & 94.55 & 8.24 & 9.96 & 1.4 & 1.6 \\                                                                                    
\hline
Adapt-hprior (ACS) & \textbf{99.70} & \textbf{6.46} & \textbf{6.89} & \textbf{1.2} & \textbf{1.3} \\                
\end{tabular}                                                    
\end{table}

Finally, we repeated the evaluation by computing individual disparity maps using \cite{Zbontar2015}, which corresponds to the current state of the art. The results are presented in Table \ref{tab:kittitestres2}, while the complete results are available under the short name {\em cfusion} on the KITTI benchmark's website. We note that the proposed fusion model is able to further increase the accuracy of the disparity maps. Considering also the occluded regions of the reference image, our model achieves better results with respect to all competing methods on the dataset, by the time of submission of this manuscript. The improvement on the reflective regions of the images is even more significant, where the accuracy improves by $3.14\%$ in the out-noc-3 metric, and by $5.76\%$ in the  out-all-3 metric, with respect to \cite{Zbontar2015}. Examples of fused depth images for this evaluation are presented in Figure~\ref{fig:kittiresim}.

\begin{table}[ht!]                                                 
\centering    
\caption{Results for KITTI stereo benchmark testing set with localization according to \cite{Geiger2011} and comparison to the single view results of \cite{Zbontar2015}.}                         
\label{tab:kittitestres2}    
\begin{tabular}{c|ccccc}      
 & density [\%] & out-noc-3 [\%] & out-all-3 [\%] & avg-noc [px] & avg-all [px]   \\                       
\hline                                  
Reference\cite{Zbontar2015} & \textbf{100} & 2.61 & 3.84 & \textbf{0.8} & 1.0 \\                                                                                    
Adapt-hprior (ACS) & 99.93 & \textbf{2.46} & \textbf{2.69} & \textbf{0.8} & \textbf{0.8} \\                
\hline
Reference - Reflective\cite{Zbontar2015} & - & 18.45 & 21.96 & 3.5 & 4.3 \\                                                                                    
Adapt-hprior (ACS) - Reflective &  - & \textbf{15.31} & \textbf{16.20} & \textbf{2.6} & \textbf{2.8}
\end{tabular}                                                      
\end{table}

\begin{figure}[t!]%
\centering
\begin{tabular}{ccccc}
\rot{ Ground Truth}&
\includegraphics[width=0.15\columnwidth]{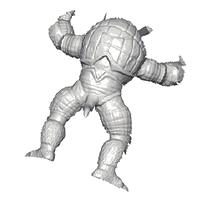}&
\includegraphics[width=0.15\columnwidth]{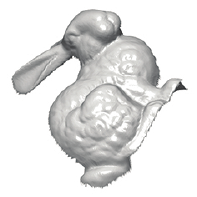}&
\includegraphics[width=0.15\columnwidth]{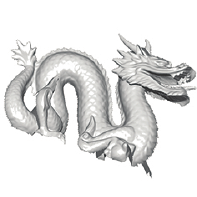}&
\includegraphics[width=0.15\columnwidth]{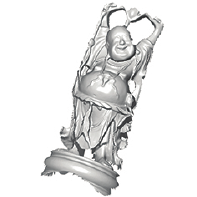}\\
\rot{\hspace{1em} Median}&
\includegraphics[width=0.15\columnwidth]{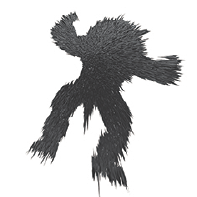}&
\includegraphics[width=0.15\columnwidth]{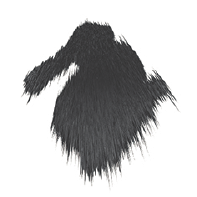}&
\includegraphics[width=0.15\columnwidth]{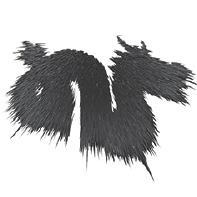}&
\includegraphics[width=0.15\columnwidth]{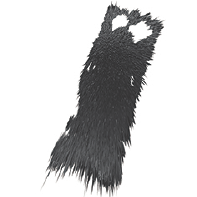}\\ 
\rot{ ROF\cite{Rudin1992}}&
\includegraphics[width=0.15\columnwidth]{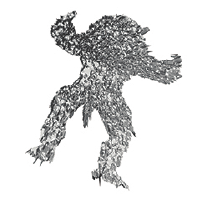}&
\includegraphics[width=0.15\columnwidth]{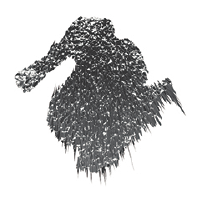}&
\includegraphics[width=0.15\columnwidth]{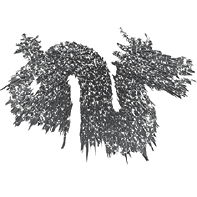}&
\includegraphics[width=0.15\columnwidth]{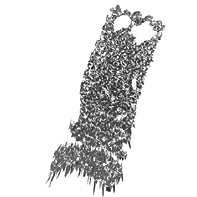}\\
\rot{\hspace{2em} L1}&
\includegraphics[width=0.15\columnwidth]{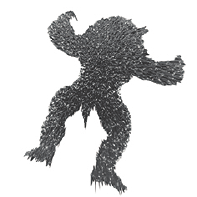}&
\includegraphics[width=0.15\columnwidth]{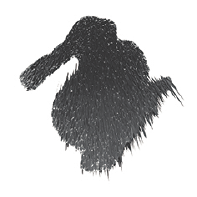}&
\includegraphics[width=0.15\columnwidth]{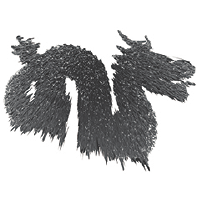}&
\includegraphics[width=0.15\columnwidth]{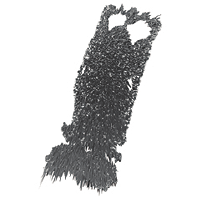}\\
\rot{ TGV-fusion\cite{Pock2011}}&
\includegraphics[width=0.15\columnwidth]{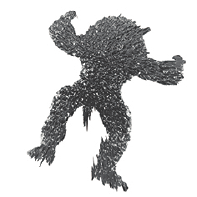}&
\includegraphics[width=0.15\columnwidth]{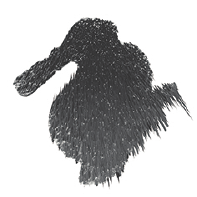}&
\includegraphics[width=0.15\columnwidth]{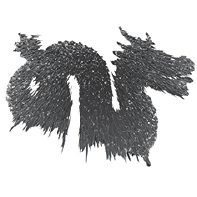}&
\includegraphics[width=0.15\columnwidth]{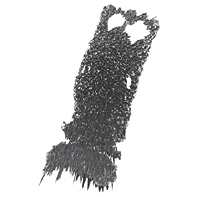}\\ 
\rot{ L1-heuristic}&
\includegraphics[width=0.15\columnwidth]{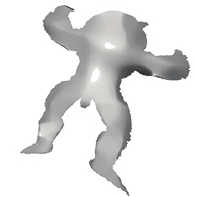}&
\includegraphics[width=0.15\columnwidth]{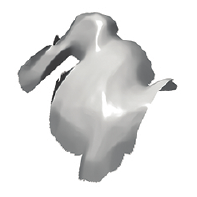}&
\includegraphics[width=0.15\columnwidth]{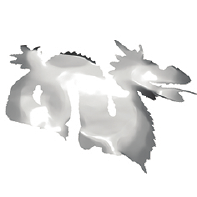}&
\includegraphics[width=0.15\columnwidth]{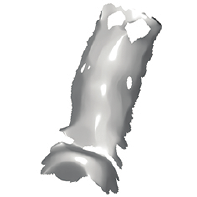}\\
\rot{ L1-adapt}&
\includegraphics[width=0.15\columnwidth]{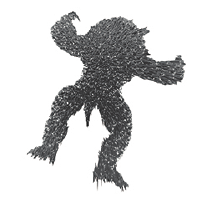}&
\includegraphics[width=0.15\columnwidth]{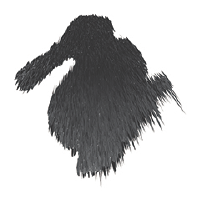}&
\includegraphics[width=0.15\columnwidth]{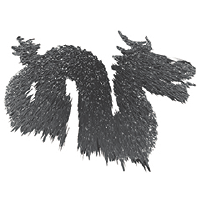}&
\includegraphics[width=0.15\columnwidth]{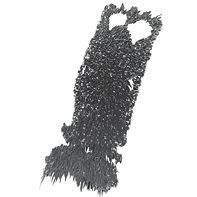}\\
\rot{ Adapt-hprior}&
\includegraphics[width=0.15\columnwidth]{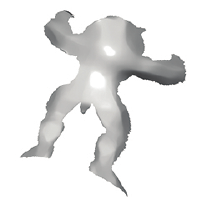}&
\includegraphics[width=0.15\columnwidth]{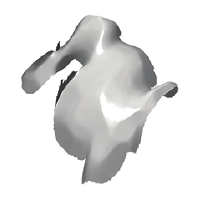}&
\includegraphics[width=0.15\columnwidth]{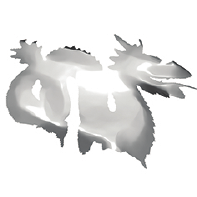}&
\includegraphics[width=0.15\columnwidth]{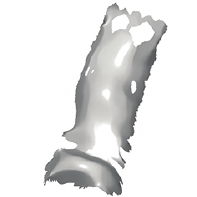}
\end{tabular}
\caption{Surfaces obtained by different methods for the Stanford 3D scanning dataset\cite{Turk94,Curless96,Krishnamurthy96}. 
}\label{fig:visresu1}
\end{figure}

\begin{figure}[t!]%
\centering
\begin{tabular}{ccccc}
\rot{ Ground Truth}&
\includegraphics[width=0.15\columnwidth]{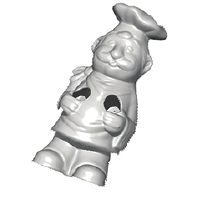}&
\includegraphics[width=0.15\columnwidth]{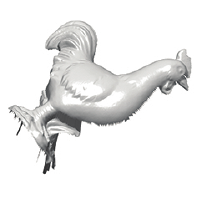}&
\includegraphics[width=0.15\columnwidth]{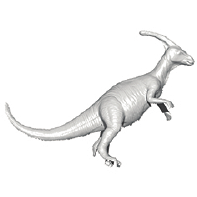}&
\includegraphics[width=0.15\columnwidth]{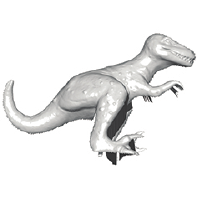}\\
\rot{\hspace{1em} Median}&
\includegraphics[width=0.15\columnwidth]{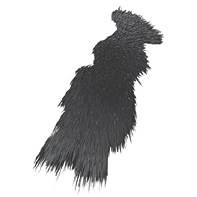}&
\includegraphics[width=0.15\columnwidth]{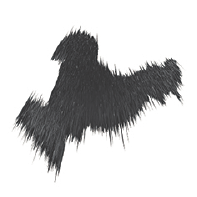}&
\includegraphics[width=0.15\columnwidth]{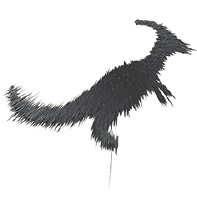}&
\includegraphics[width=0.15\columnwidth]{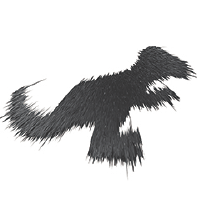}\\ 
\rot{ ROF\cite{Rudin1992}}&
\includegraphics[width=0.15\columnwidth]{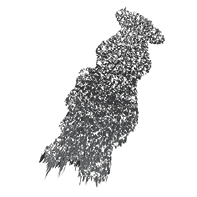}&
\includegraphics[width=0.15\columnwidth]{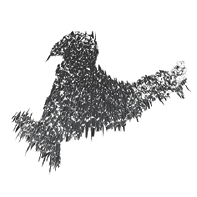}&
\includegraphics[width=0.15\columnwidth]{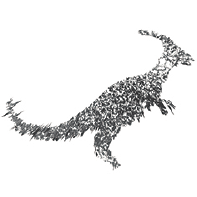}&
\includegraphics[width=0.15\columnwidth]{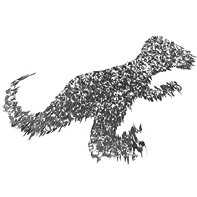}\\
\rot{\hspace{2em} L1}&
\includegraphics[width=0.15\columnwidth]{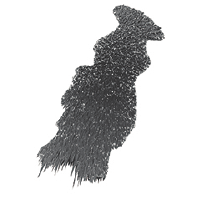}&
\includegraphics[width=0.15\columnwidth]{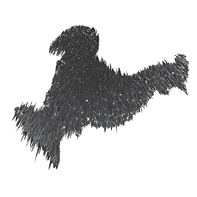}&
\includegraphics[width=0.15\columnwidth]{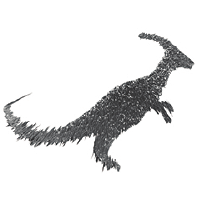}&
\includegraphics[width=0.15\columnwidth]{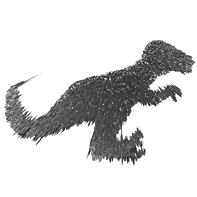}\\
\rot{ TGV-fusion\cite{Pock2011}}&
\includegraphics[width=0.15\columnwidth]{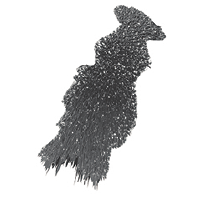}&
\includegraphics[width=0.15\columnwidth]{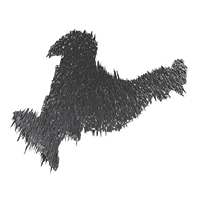}&
\includegraphics[width=0.15\columnwidth]{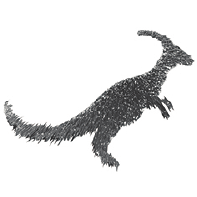}&
\includegraphics[width=0.15\columnwidth]{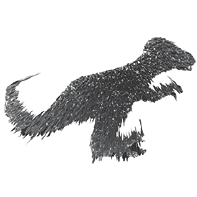}\\ 
\rot{ L1-heuristic}&
\includegraphics[width=0.15\columnwidth]{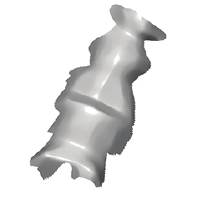}&
\includegraphics[width=0.15\columnwidth]{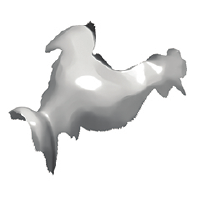}&
\includegraphics[width=0.15\columnwidth]{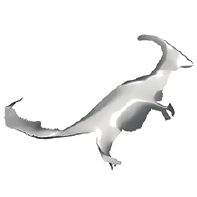}&
\includegraphics[width=0.15\columnwidth]{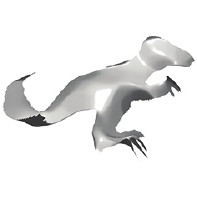}\\
\rot{ L1-adapt}&
\includegraphics[width=0.15\columnwidth]{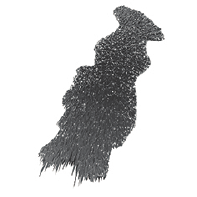}&
\includegraphics[width=0.15\columnwidth]{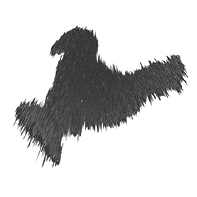}&
\includegraphics[width=0.15\columnwidth]{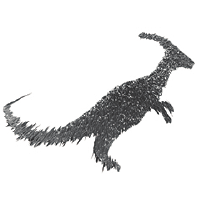}&
\includegraphics[width=0.15\columnwidth]{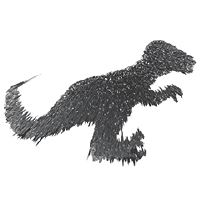}\\
\rot{ Adapt-hprior}&
\includegraphics[width=0.15\columnwidth]{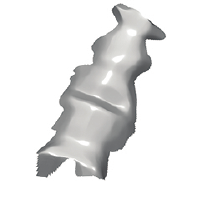}&
\includegraphics[width=0.15\columnwidth]{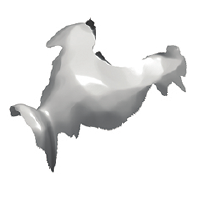}&
\includegraphics[width=0.15\columnwidth]{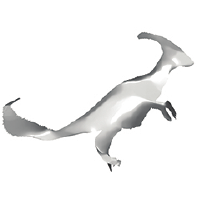}&
\includegraphics[width=0.15\columnwidth]{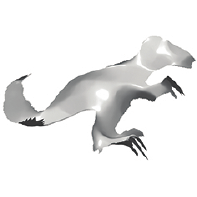}
\end{tabular}
\caption{Surfaces obtained by different methods for the dataset of \cite{Mian2006}. 
}\label{fig:visresu2}
\end{figure}

\begin{figure}[t!]%
\centering
\begin{tabular}{ccccc}
\rot{ Ground Truth}&
\includegraphics[width=.15\columnwidth]{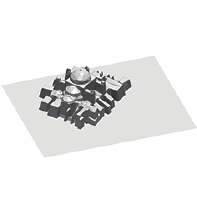}&
\includegraphics[width=0.15\columnwidth]{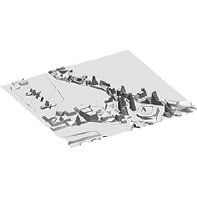}&
\includegraphics[width=0.15\columnwidth]{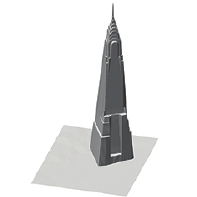}&
\includegraphics[width=0.15\columnwidth]{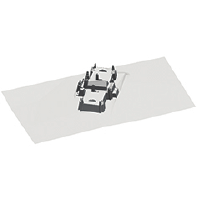}\\
\rot{\hspace{1em} Median}&
\includegraphics[width=0.15\columnwidth]{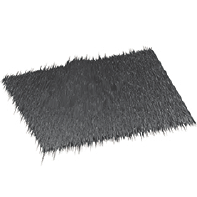}&
\includegraphics[width=0.15\columnwidth]{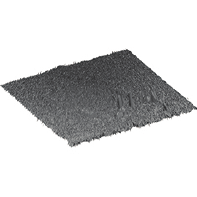}&
\includegraphics[width=0.15\columnwidth]{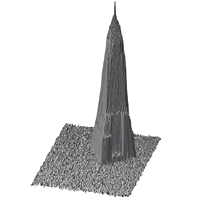}&
\includegraphics[width=0.15\columnwidth]{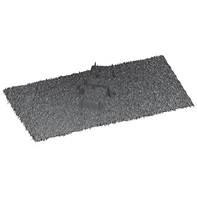}\\ 
\rot{ ROF\cite{Rudin1992}}&
\includegraphics[width=0.15\columnwidth]{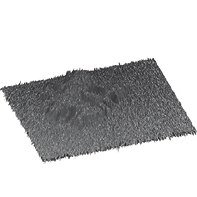}&
\includegraphics[width=0.15\columnwidth]{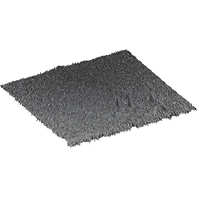}&
\includegraphics[width=0.15\columnwidth]{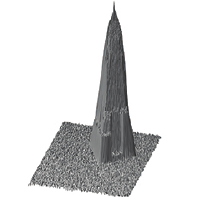}&
\includegraphics[width=0.15\columnwidth]{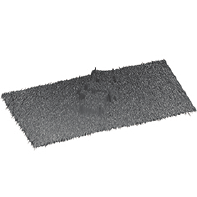}\\
\rot{\hspace{2em} L1}&
\includegraphics[width=0.15\columnwidth]{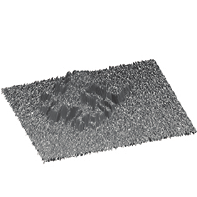}&
\includegraphics[width=0.15\columnwidth]{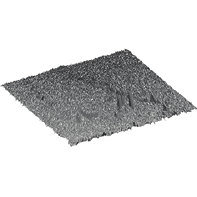}&
\includegraphics[width=0.15\columnwidth]{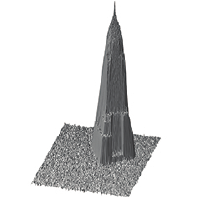}&
\includegraphics[width=0.15\columnwidth]{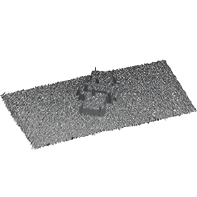}\\
\rot{ TGV-fusion\cite{Pock2011}}&
\includegraphics[width=0.15\columnwidth]{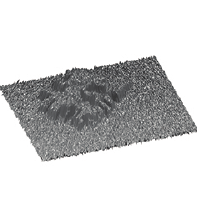}&
\includegraphics[width=0.15\columnwidth]{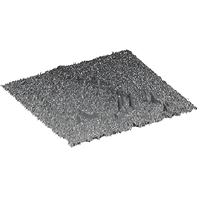}&
\includegraphics[width=0.15\columnwidth]{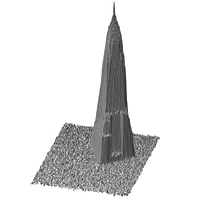}&
\includegraphics[width=0.15\columnwidth]{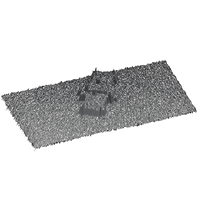}\\ 
\rot{ L1-heuristic}&
\includegraphics[width=0.15\columnwidth]{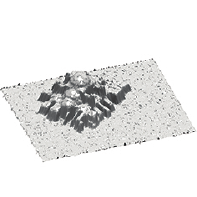}&
\includegraphics[width=0.15\columnwidth]{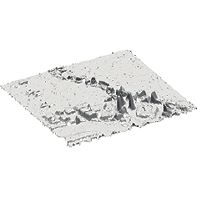}&
\includegraphics[width=0.15\columnwidth]{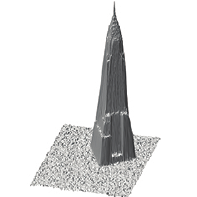}&
\includegraphics[width=0.15\columnwidth]{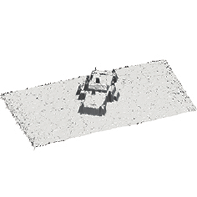}\\
\rot{ L1-adapt}&
\includegraphics[width=0.15\columnwidth]{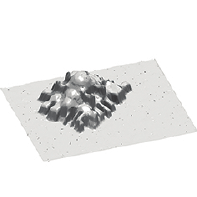}&
\includegraphics[width=0.15\columnwidth]{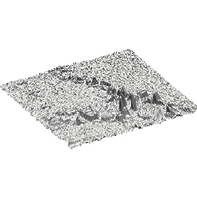}&
\includegraphics[width=0.15\columnwidth]{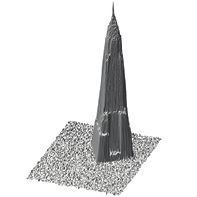}&
\includegraphics[width=0.15\columnwidth]{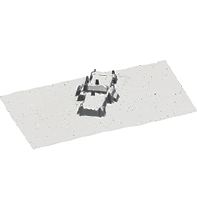}\\
\rot{ Adapt-hprior}&
\includegraphics[width=0.15\columnwidth]{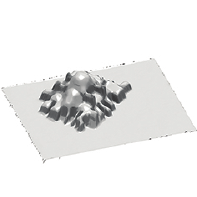}&
\includegraphics[width=0.15\columnwidth]{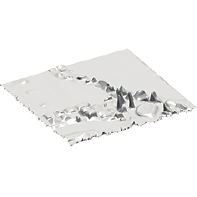}&
\includegraphics[width=0.15\columnwidth]{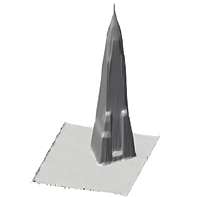}&
\includegraphics[width=0.15\columnwidth]{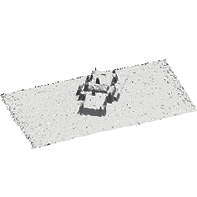}
\end{tabular}
\caption{Surfaces obtained by different methods for the Urban Landscapes dataset. 
}\label{fig:visresu3}
\end{figure}

\section{Conclusions} \label{sec:conc}
We introduce a novel model for data fusion with spatially varying confidence values. The proposed model directly estimates  the confidence values from the given data. We have proved the main properties of this model and also discussed methods to estimate  optimal solution. Indeed, an optimal solution for this family of models can be estimated by solving a biconvex non-smooth optimization problem. We presented two algorithms for solving the biconvex optimization problem, corresponding to the ACS, AMA, and PDHG  classes of algorithms, discussing their convergence to critical points. We also discuss possible ablations of the proposed model, and focus on the possibility to assign {\em a-priori} confidence values. 

We demonstrated numerically the behavior of the proposed  model for synthetic images and we evaluated its performance considering  the fusion of depth images as application. The results show that  model outperforms the considered baselines and state of the art algorithms for this problem. We also examined the performance of various ablations of the full model. Moreover, we have seen that for the case of depth image fusion, spatially varying confidence values estimated from the geometry of the scene can provide satisfactory results. 

As future work on the theoretical front we shall examine the PDHG algorithm for biconvex problems and its convergence. On the application side we shall examine closer TV regularization on manifolds for 3D modeling as in \cite{Ntouskos-2015ICCV} and \cite{Natola-2016CVPR} and study the consistency and coherence of surfaces generated from images.


%

\appendices


\ifCLASSOPTIONcompsoc
  \section*{Acknowledgments}
\else
  \section*{Acknowledgment}
\fi

This work is supported by the EU FP7 TRADR (609763) and the EU H2020 SecondHands (643950) project.
We thank the authors of the 3D models used for freely providing them on the 3D Warehouse repository.
\

\ifCLASSOPTIONcaptionsoff
  \newpage
\fi



%
\bibliography{main_fusion}

\begin{thebibliography}{10}
\providecommand{\url}[1]{#1}
\csname url@samestyle\endcsname
\providecommand{\newblock}{\relax}
\providecommand{\bibinfo}[2]{#2}
\providecommand{\BIBentrySTDinterwordspacing}{\spaceskip=0pt\relax}
\providecommand{\BIBentryALTinterwordstretchfactor}{4}
\providecommand{\BIBentryALTinterwordspacing}{\spaceskip=\fontdimen2\font plus
\BIBentryALTinterwordstretchfactor\fontdimen3\font minus
  \fontdimen4\font\relax}
\providecommand{\BIBforeignlanguage}[2]{{%
\expandafter\ifx\csname l@#1\endcsname\relax
\typeout{** WARNING: IEEEtranS.bst: No hyphenation pattern has been}%
\typeout{** loaded for the language `#1'. Using the pattern for}%
\typeout{** the default language instead.}%
\else
\language=\csname l@#1\endcsname
\fi
#2}}
\providecommand{\BIBdecl}{\relax}
\BIBdecl

\bibitem{Artina2013}
M.~Artina, M.~Fornasier, and F.~Solombrino, ``Linearly constrained nonsmooth
  and nonconvex minimization,'' \emph{SIAM J. on Optimization}, vol.~23, no.~3,
  pp. 1904--1937, 2013.

\bibitem{Attouch2010}
H.~Attouch, J.~Bolte, P.~Redont, and A.~Soubeyran, ``{Proximal alternating
  minimization and projection methods for nonconvex problems: an approach based
  on the Kurdyka-Lojasiewicz inequality},'' \emph{Mathematics of Operations
  Research}, vol.~35, no.~2, pp. 438--457, 2010.

\bibitem{Aubin2009}
J.-P. Aubin and H.~Frankowska, \emph{Set-valued analysis}.\hskip 1em plus 0.5em
  minus 0.4em\relax Springer Science \& Business Media, 2009.

\bibitem{Barron2015}
J.~T. Barron and J.~Malik, ``Shape, illumination, and reflectance from
  shading,'' \emph{IEEE Trans. on Pattern Analysis and Machine Intelligence},
  vol.~37, no.~8, pp. 1670--1687, 2015.

\bibitem{Bauschke2011}
H.~H. Bauschke and P.~L. Combettes, \emph{Convex analysis and monotone operator
  theory in Hilbert spaces}.\hskip 1em plus 0.5em minus 0.4em\relax Springer
  Science \& Business Media, 2011.

\bibitem{Bertsekas2009}
D.~P. Bertsekas, \emph{Convex optimization theory}.\hskip 1em plus 0.5em minus
  0.4em\relax Athena Scientific Belmont, 2009.

\bibitem{Boyd2004}
S.~Boyd and L.~Vandenberghe, \emph{Convex optimization}.\hskip 1em plus 0.5em
  minus 0.4em\relax Cambridge university press, 2004.

\bibitem{Bredies2010}
K.~Bredies, K.~Kunisch, and T.~Pock, ``{Total Generalized Variation},''
  \emph{SIAM J. on Imaging Sciences}, vol.~3, no.~3, pp. 492--526, 2010.

\bibitem{Calvetti2008}
D.~Calvetti and E.~Somersalo, ``Hypermodels in the bayesian imaging
  framework,'' \emph{Inverse Problems}, vol.~24, no.~3, p. 034013, 2008.

\bibitem{Campbell2008}
N.~D. Campbell, G.~Vogiatzis, C.~Hern{\'a}ndez, and R.~Cipolla, ``Using
  multiple hypotheses to improve depth-maps for multi-view stereo,'' in
  \emph{Proc. European Conf. Computer Vision}.\hskip 1em plus 0.5em minus
  0.4em\relax Springer, 2008, pp. 766--779.

\bibitem{Chambolle2010}
A.~Chambolle, V.~Caselles, D.~Cremers, M.~Novaga, and T.~Pock, ``{An
  introduction to total variation for image analysis},'' \emph{Theoretical
  foundations and numerical methods for sparse recovery}, vol.~9, pp. 263--340,
  2010.

\bibitem{Chambolle1997}
A.~Chambolle and P.~L. Lions, ``Image recovery via total variation minimization
  and related problems,'' \emph{Numerische Mathematik}, vol.~76, no.~2, pp.
  167--188, 1997.

\bibitem{Chan2005}
T.~F. Chan and S.~Esedoglu, ``{Aspects of total variation regularized L1
  function approximation},'' \emph{SIAM J. on Applied Mathematics}, vol.~65,
  no.~5, pp. 1817--1837, 2005.

\bibitem{Condat2013}
L.~Condat, ``A primal--dual splitting method for convex optimization involving
  lipschitzian, proximable and linear composite terms,'' \emph{J. of
  Optimization Theory and Applications}, vol. 158, no.~2, pp. 460--479, 2013.

\bibitem{Curless96}
B.~Curless and M.~Levoy, ``A volumetric method for building complex models from
  range images,'' in \emph{Proc. Conf. on Computer Graphics and Interactive
  Techniques}, pp. 303--312.\hskip 1em plus 0.5em minus 0.4em\relax ACM, 1996.

\bibitem{Engl-1996}
H.~W. Engl, M.~Hanke, and A.~Neubauer, \emph{Regularization of inverse
  problems}.\hskip 1em plus 0.5em minus 0.4em\relax Springer Science, 1996,
  vol. 375.

\bibitem{Esser2014}
E.~Esser and X.~Zhang, ``{Nonlocal patch-based image inpainting through
  minimization of a sparsity promoting nonconvex functional},'' Dept. Math.,
  Univ. California Irvine, Irvine, CA, USA, Tech. Rep., 2014.

\bibitem{Ferstl2013}
D.~Ferstl, R.~Ranftl, M.~Ruther, and H.~Bischof, ``{Multi-modality depth map
  fusion using primal-dual optimization},'' in \emph{IEEE Int'l Conf. on
  Computational Photography}, pp. 1--8, 2013.

\bibitem{Fuhrmann2011}
S.~Fuhrmann and M.~Goesele, ``{Fusion of Depth Maps with Multiple Scales},''
  \emph{ACM Trans. on Graphics}, vol.~30, no.~6, p. 148, 2011.

\bibitem{Geiger2012}
A.~Geiger, P.~Lenz, and R.~Urtasun, ``Are we ready for autonomous driving? the
  kitti vision benchmark suite,'' in \emph{Proc. CVPR}, 2012.

\bibitem{Geiger2010}
A.~Geiger, M.~Roser, and R.~Urtasun, ``Efficient large-scale stereo matching,''
  in \emph{Proc. ACCV}, 2010.

\bibitem{Geiger2011}
A.~Geiger, J.~Ziegler, and C.~Stiller, ``Stereoscan: Dense 3d reconstruction in
  real-time,'' in \emph{Intell. Vehicles Symposium}, 2011.

\bibitem{Guney2015}
F.~Güney and A.~Geiger, ``Displets: Resolving stereo ambiguities using object
  knowledge,'' in \emph{Proc. IEEE Conf. Computer Vision and Pattern
  Recognition}, 2015.

\bibitem{Gorski2007}
J.~Gorski, F.~Pfeuffer, and K.~Klamroth, ``Biconvex sets and optimization with
  biconvex functions: a survey and extensions,'' \emph{Mathematical Methods of
  Operations Research}, vol.~66, no.~3, pp. 373--407, 2007.

\bibitem{Hane2012}
C.~Hane, C.~Zach, B.~Zeisl, and M.~Pollefeys, ``A patch prior for dense 3d
  reconstruction in man-made environments,'' in \emph{Proc. Int'l Conf. on 3D
  Imaging, Modeling, Processing, Visualization and Transmission}, pp.
  563--570.\hskip 1em plus 0.5em minus 0.4em\relax IEEE, 2012.

\bibitem{Hanke-1996}
M.~Hanke and T.~Raus, ``A general heuristic for choosing the regularization
  parameter in ill-posed problems,'' \emph{SIAM J. on Scientific Computing},
  vol.~17, no.~4, pp. 956--972, 1996.

\bibitem{Hirschmuller2005}
H.~Hirschmuller, ``{Accurate and Efficient Stereo Processing by Semi-Global
  Matching and Mutual Information},'' in \emph{Proc. IEEE Conf. Computer Vision
  and Pattern Recognition}, vol.~2, pp. 807--814, 2005.

\bibitem{James2014}
A.~P. James and B.~V. Dasarathy, ``Medical image fusion: A survey of the state
  of the art,'' \emph{Inform. Fusion}, vol.~19, pp. 4--19, 2014.

\bibitem{Krishnamurthy96}
V.~Krishnamurthy and M.~Levoy, ``Fitting smooth surfaces to dense polygon
  meshes,'' in \emph{Proc. Conf. on Computer Graphics and Interactive
  Techniques}, pp. 313--324.\hskip 1em plus 0.5em minus 0.4em\relax ACM, 1996.

\bibitem{Kuschk2013}
G.~Kuschk and D.~Cremers, ``Fast and accurate large-scale stereo reconstruction
  using variational methods,'' in \emph{ICCV Workshop on Big Data in 3D
  Computer Vision}, Sydney, Australia, 2013.

\bibitem{Lanaras2015}
C.~Lanaras, E.~Baltsavias, and K.~Schindler, ``Advances in hyperspectral and
  multispectral image fusion and spectral unmixing,'' \emph{Int'l Archives of
  the Photogrammetry, Remote Sensing and Spatial Information Sciences}, vol.
  XL-3/W3, pp. 451--458, 2015.

\bibitem{Lellmann2013}
J.~Lellmann, E.~Strekalovskiy, S.~Koetter, and D.~Cremers, ``Total variation
  regularization for functions with values in a manifold,'' in \emph{Proc. IEEE
  Int'l Conf. Computer Vision}, pp. 2944--2951, 2013.

\bibitem{Merrell2007}
P.~Merrell, A.~Akbarzadeh, L.~Wang, P.~Mordohai, J.~Frahm, R.~Yang, D.~Nister,
  and M.~Pollefeys, ``{Real-Time Visibility-Based Fusion of Depth Maps},'' in
  \emph{Proc. IEEE Int'l Conf. Computer Vision}, pp. 1--8.\hskip 1em plus 0.5em
  minus 0.4em\relax IEEE, 2007.

\bibitem{Mian2006}
A.~S. Mian, M.~Bennamoun, and R.~Owens, ``Three-dimensional model-based object
  recognition and segmentation in cluttered scenes,'' \emph{IEEE Trans. on
  Pattern Analysis and Machine Intelligence}, vol.~28, no.~10, pp. 1584--1601,
  2006.

\bibitem{Mollenhoff2015}
T.~M{\"o}llenhoff, E.~Strekalovskiy, M.~Moeller, and D.~Cremers, ``{The
  primal-dual hybrid gradient method for semiconvex splittings},'' \emph{SIAM
  J. on Imaging Sciences}, vol.~8, no.~2, pp. 827--857, 2015.

\bibitem{Mueller2013}
J.~Mueller, ``Advanced image reconstruction and denoising - bregmanized (higher
  order) total variation and application in pet,'' Ph.D. dissertation,
  Institute for Computational and Applied Mathematics, University of
  M\"{u}nster, 2013.

\bibitem{Natola-2016CVPR}
F.~Natola, V.~Ntouskos, F.~Pirri, and M.~Sanzari, ``Single image object
  modeling based on brdf and r-surfaces learning,'' in \emph{Proc. CVPR}, 2016.

\bibitem{Newcombe2011a}
R.~A. Newcombe, A.~J. Davison, S.~Izadi, P.~Kohli, O.~Hilliges, J.~Shotton,
  D.~Molyneaux, S.~Hodges, D.~Kim, and A.~Fitzgibbon, ``{KinectFusion:
  Real-time dense surface mapping and tracking},'' in \emph{Proc. Int'l
  Symposium on Mixed and Augmented Reality}, pp. 127--136, 2011.

\bibitem{Newcombe2011b}
R.~A. Newcombe, S.~J. Lovegrove, and A.~J. Davison, ``{DTAM: Dense tracking and
  mapping in real-time},'' in \emph{Proc. IEEE Int'l Conf. Computer Vision},
  pp. 2320--2327, 2011.

\bibitem{Nikolova2004}
M.~Nikolova, ``A variational approach to remove outliers and impulse noise,''
  \emph{J. of Mathematical Imaging and Vision}, vol.~20, no. 1-2, pp. 99--120,
  2004.

\bibitem{Nikolova2010}
M.~Nikolova, M.~K. Ng, and C.-P. Tam, ``Fast nonconvex nonsmooth minimization
  methods for image restoration and reconstruction,'' \emph{IEEE Trans. on
  Image Processing}, vol.~19, no.~12, pp. 3073--3088, 2010.

\bibitem{Ntouskos-2015ICCV}
V.~Ntouskos, M.~Sanzari, B.~Cafaro, F.~Nardi, F.~Natola, F.~Pirri, and M.~Ruiz,
  ``Component-wise modeling of articulated objects,'' in \emph{Proc. ICCV}, pp.
  2327--2335, 2015.

\bibitem{Ntouskos2013}
V.~Ntouskos, F.~Pirri, M.~Pizzoli, A.~Sinha, and B.~Cafaro, ``Saliency
  prediction in the coherence theory of attention,'' \emph{Biologically
  Inspired Cognitive Architectures}, vol.~5, pp. 10--28, 2013.

\bibitem{Ochs2013}
P.~Ochs, A.~Dosovitskiy, T.~Brox, and T.~Pock, ``{An Iterated L1 Algorithm for
  Non-smooth Non-convex Optimization in Computer Vision},'' in \emph{Proc. IEEE
  Conf. Computer Vision and Pattern Recognition}, pp. 1759--1766, 2013.

\bibitem{Ochs2014}
P.~Ochs, Y.~Chen, T.~Brox, and T.~Pock, ``ipiano: Inertial proximal algorithm
  for nonconvex optimization,'' \emph{SIAM J. on Imaging Sciences}, vol.~7,
  no.~2, pp. 1388--1419, 2014.

\bibitem{Perona1990}
P.~Perona and J.~Malik, ``Scale-space and edge detection using anisotropic
  diffusion,'' \emph{IEEE Trans. on Pattern Analysis and Machine Intelligence},
  vol.~12, no.~7, pp. 629--639, 1990.

\bibitem{Pock2011}
T.~Pock, L.~Zebedin, and H.~Bischof, ``{TGV-Fusion},'' \emph{Rainbow of
  Computer Science}, vol. 6570, pp. 245--258, 2011.

\bibitem{Rudin1992}
L.~I. Rudin, S.~Osher, and E.~Fatemi, ``{Nonlinear total variation based noise
  removal algorithms},'' \emph{Physica D: Nonlinear Phenomena}, vol.~60, no.
  1-4, pp. 259--268, 1992.

\bibitem{Strong2003}
D.~Strong and T.~Chan, ``Edge-preserving and scale-dependent properties of
  total variation regularization,'' \emph{Inverse problems}, vol.~19, no.~6, p.
  165, 2003.

\bibitem{Turk94}
G.~Turk and M.~Levoy, ``Zippered polygon meshes from range images,'' in
  \emph{Proc. Conf. on Computer Graphics and Interactive Techniques}, pp.
  311--318.\hskip 1em plus 0.5em minus 0.4em\relax ACM, 1994.

\bibitem{Ummenhofer2015}
B.~Ummenhofer and T.~Brox, ``Global, dense multiscale reconstruction for a
  billion points,'' in \emph{Proc. IEEE Int'l Conf. Computer Vision}, pp.
  1341--1349, 2015.

\bibitem{Valkonen2014}
T.~Valkonen, ``{A primal--dual hybrid gradient method for nonlinear operators
  with applications to MRI},'' \emph{Inverse Problems}, vol.~30, no.~5, p.
  055012, 2014.

\bibitem{Wendell1976}
R.~E. Wendell and A.~P. Hurter~Jr, ``Minimization of a non-separable objective
  function subject to disjoint constraints,'' \emph{Operations Research},
  vol.~24, no.~4, pp. 643--657, 1976.

\bibitem{Zach2007}
C.~Zach, T.~Pock, and H.~Bischof, ``{A Globally Optimal Algorithm for Robust
  TV-L1 Range Image Integration},'' in \emph{Proc. IEEE Int'l Conf. Computer
  Vision}, pp. 1--8, 2007.

\bibitem{Zbontar2015}
J.~Zbontar and Y.~LeCun, ``Computing the stereo matching cost with a
  convolutional neural network,'' in \emph{Proc. CVPR}, pp. 1592--1599, 2015.

\bibitem{Zhu2008}
M.~Zhu and T.~Chan, ``An efficient primal-dual hybrid gradient algorithm for
  total variation image restoration,'' UCLA Cam report, Tech. Rep. 08--34,
  2008.

\end{thebibliography}
\bibliographystyle{IEEEtranS}

\end{document}